\documentclass[11pt,twoside]{article}

\usepackage{geometry}
\input{commands-full}

\begin{document}

\begin{center}

  {\LARGE
  \textbf{Leave a Window Out: Modifying the Jackknife for Predictive Inference in Time Series}
  
  }

\vspace*{.2in}

{\large{
\begin{tabular}{ccc}
Hanyang Jiang$^\star$, Rina Foygel Barber$^\ddagger$, Ashwin Pananjady$^{\star, \dagger}$, Yao Xie$^\star$
\end{tabular}
}}
\vspace*{.2in}

\begin{tabular}{c}
Schools of Industrial and Systems Engineering$^\star$ and
Electrical and Computer Engineering$^\dagger$, \\
Georgia Tech \\
Department of Statistics, University of Chicago$^\ddagger$
\end{tabular}

\vspace*{.2in}

\today

\vspace*{.2in}

\begin{abstract}
Conformal prediction methods enjoy strong theoretical and empirical predictive inference performance, provided the data is exchangeable and is treated symmetrically during training. However, these assumptions are impractical in many settings, such as time series, where temporal dependence violates exchangeability and it is preferable to use predictors that leverage dependence by treating data asymmetrically. Recent work shows that split conformal prediction is robust to these issues, but sample splitting can reduce accuracy, motivating the study of methods that do not rely on data splitting in the time series setting.

In this work, we show that the vanilla leave-one-out jackknife can suffer arbitrary loss of coverage even in canonical time series models with mild temporal dependence. As a remedy, we propose a modification tailored to such settings, which we term the leave-a-window-out (LWO) method, and show that it can achieve valid coverage provided that the model-fitting procedure satisfies mild stability properties. Our proofs are based on quantifying the degree to which the data departs from cyclic exchangeability, which we introduce new coefficients to measure. Experiments on time series demonstrate that our method often enjoys valid coverage when the vanilla jackknife fails to cover, while producing much narrower intervals than split conformal prediction.
\end{abstract}
\end{center}

\section{Introduction} \label{sec:intro}

Quantifying the uncertainty of predictions is a fundamental challenge in data science, including in time-series analysis and sequential decision-making. As machine learning models are increasingly deployed for forecasting in high-stakes domains such as financial markets \citep{sezer2020financial}, climate change \citep{rolnick2022tackling}, and energy applications~\citep{hong2016probabilistic}, it is essential to provide valid prediction intervals around their outputs. 

In general, uncertainty quantification can be performed by making assumptions about how the data is generated. On the one hand, classical time-series models are explicitly designed to accommodate dependent data, but provide valid uncertainty quantification only under strong parametric assumptions on the model. However, these parametric assumptions are often violated when using modern black-box predictors like neural networks.
On the other hand, ``distribution-free” predictive inference provides a suite of wrapper techniques that provide uncertainty quantification for predictions arising from any underlying model \citep{vovk2005algorithmic,shafer2008tutorial}, and the validity of these methods relies only on \emph{exchangeability} of data. However, dependent data (e.g., in time series and other sequential scenarios) is not exchangeable.

This drawback notwithstanding, distribution-free predictive inference methods still enjoy widespread use across applications in which exchangeability may be violated. In fact, a particular such method called split conformal prediction has recently been shown to exhibit automatic robustness to violations of exchangeability that are often observed in time series~\citep{oliveira2024split, barber2025predictive}. While split conformal is conceptually simple and easy to implement, it involves sample-splitting and can sacrifice statistical efficiency. In contrast, predictive inference methods based on the leave-one-out  principle---colloquially referred to as variations of the \emph{jackknife}~\citep{tukey1958bias,stone1974cross,efron1983leisurely}---are appealing since they avoid sample-splitting and make full use of the dataset. While they can be computationally more intensive than conformal methods based on data splitting, leave-one-out methods offer greater statistical efficiency and are preferable in low-data regimes. However, the behavior of leave-one-out methods in settings with dependent data is not well understood. Are these methods also robust to similar violations of exchangeability---or are modifications of these methods needed in order to obtain reasonable coverage properties in a time series setting? This forms the motivating question of our paper.

\subsection{Using the jackknife for predictive inference} \label{sec:jk-intro}

To make our discussion concrete, let us formally introduce the jackknife method for memoryless predictors. Suppose we have a time series of covariate-response data $(X_1,Y_1),\ldots,(X_{n+1},Y_{n+1})$, with $(X_i,Y_i) \in \mathcal{X} \times \mathcal{Y}$ representing the $i$-th data point. The data point at index $n+1$ is considered to be the ``test point'', with $X_{n+1}$ observed but $Y_{n+1}$ unobserved, while the points indexed $1,\dots,n$ serve as the training data. Our goal is to construct a prediction set $\widehat{C}_n(X_{n+1}) \subseteq \mathcal{Y}$ based on the training data $(X_1,Y_1),\ldots,(X_n,Y_n)$ and $X_{n+1}$ such that this set contains the unknown response $Y_{n+1}$ with a pre-specified probability:
\begin{align} \label{eq:coverage-desired}
\mathbb{P}(Y_{n+1} \in \widehat{C}_n(X_{n+1})) \ge 1 - \alpha.
\end{align}
Here, $\alpha \in (0,1)$ denotes the target miscoverage level.

The \emph{jackknife}, or leave-one-out, method produces a prediction interval by repeatedly fitting a model on $n - 1$ out of the $n$ training data points and constructing conformity scores on the point that is left out. Concretely, suppose our model-fitting algorithm is memoryless (in that it produces a map $\mathcal{X} \to \mathcal{Y}$) and that conformity is measured by the absolute residual. For each\footnote{Throughout, we use the shorthand notation $[k] := \{1, \ldots, k\}$.} $i \in [n]$, the jackknife computes the predictor $\widehat{f}_{-i}$ by training our model-fitting algorithm on all data points except the $i$-th one.
Then, the $i$-th leave-one-out conformity score is computed on the training point that was left out, with
$
s^\loo_i = |Y_i - \widehat{f}_{-i}(X_i)|.
$ 
Letting $\widehat{f}$ denote the predictor obtained by training on all $n$ points, 
the jackknife prediction interval is then constructed by centering at the point prediction $\widehat{f}(X_{n + 1})$, and inflating it by the quantile of the leave-one-out scores. Specifically,\footnote{For a vector $v \in \mathbb{R}^m$, the scalar $\Quantile_{1-\alpha}(v)$ is the $(1-\alpha)$ empirical quantile of the entries that comprise $v$, i.e. the order statistic $v_{(k)}$ for $k = \lceil (1 - \alpha) m \rceil$.}
\[
\widehat{C}_n(X_{n+1}) = \left[ \widehat{f}(X_{n+1}) - \Quantile_{1-\alpha}(s^\loo_1, \dots, s^\loo_n), \; \widehat{f}(X_{n+1}) + \Quantile_{1-\alpha}(s^\loo_1, \dots, s^\loo_n) \right].
\]

In the setting of exchangeable data, existing theoretical results ~\citep{barber2021predictive,steinberger2023conditional} show that, if the model-fitting algorithm satisfies a mild form of out-of-sample stability with respect to leaving one data point out of the training set, then the jackknife achieves the desired coverage guarantee~\eqref{eq:coverage-desired} for any sample size $n$ up to a small additive correction to the miscoverage level $\alpha$. 
But how robust are these conclusions in the time series setting, where the data can be far from exchangeable?

\subsection{A motivating numerical experiment}
\label{sec:motivating}

To simulate a canonical time series, we generate our data $(X_i, Y_i)_{i \geq 1}$ according to a moving average (MA) process in various dimensions $d$ and assess the coverage of various predictive inference methods. To be concrete, fix some value of dimension $d \in \mathbb{N}$ and suppose the covariate and response space are given by $\mathcal{X} = \mathcal{Y} = \mathbb{R}^d$. Let $\{\omega_i\}_{i \geq 0}$ be i.i.d. standard Gaussians drawn from the distribution $\mathcal{N}(0,I_d)$. Construct $d$-dimensional covariates $\{X_i\}_{i = 1}^{n+1}$ according to the MA(1) process 
\[
 X_i \;=\; \omega_{i-1} + \omega_{i}, \quad \text{with} \; Y_i = X_{i+1} \quad \text{ for all } i=1,\dots,n+1.
\]
Our dataset is then given by $(X_i, Y_i)_{i = 1}^{n + 1}$. In our experiment, we use $n=200$.

We use \(2\)-nearest neighbors as the base predictor for training, and (since the response $Y$ is multivariate) we extend the definition of the jackknife to construct a ball, rather than an interval, around the prediction. Concretely, for each \(i\in[n]\), let \(\widehat f_{-i}\) denote the predictor trained on all training points except \(i\), and define the leave-one-out score
\[
s_i^{\mathrm{loo}}
:=
\bigl\|Y_i-\widehat f_{-i}(X_i)\bigr\|_2 .
\]
Let \(\widehat f\) be the predictor trained on all \(n\) training points, and define the leave-one-out quantile
\[
q_{\mathrm{loo}}
:=
\Quantile_{1-\alpha}
\bigl(s_1^{\mathrm{loo}},\ldots,s_n^{\mathrm{loo}}\bigr).
\]
The ``vanilla" jackknife prediction set is the \(\ell_2\)-ball centered at $\widehat f(X_{n + 1})$, viz.
\[
\widehat C^{\mathrm{jackknife}}_n(X_{n+1})
=
\left\{
y\in\mathbb R^d:
\|y-\widehat f(X_{n+1})\|_2\le q_{\mathrm{loo}}
\right\}.
\]
When \(d=1\), this reduces to the interval described in Section~\ref{sec:jk-intro}.

For comparison, we also include split conformal prediction (CP), which fits a \(2\)-nearest-neighbor predictor $\widehat{f}_{\mathsf{sp}}$ on the first $n/2$ points (i.e. using $50\%$ of the dataset for training) and computes the empirical \((1-\alpha)\)-quantile $q_{\mathsf{sp}}$ of scores on the remaining $n/2$ calibration points. The prediction set is then given by an \(\ell_2\)-ball of radius $q_{\mathsf{sp}}$ centered at the point prediction $\widehat{f}_{\mathsf{sp}}(X_{n + 1})$. 

\begin{figure}
    \centering
    \includegraphics[width=0.45\linewidth]{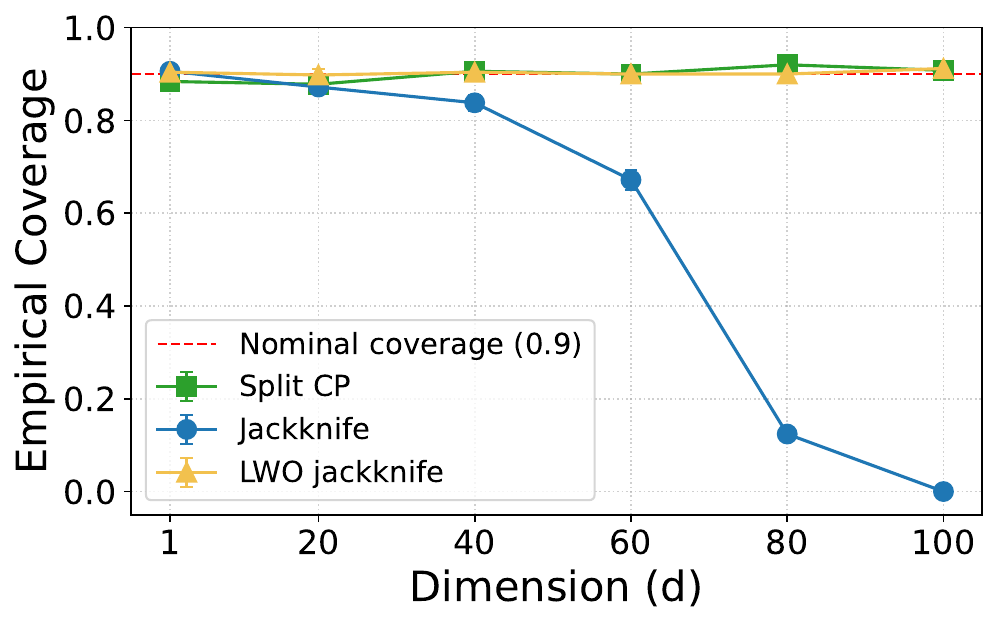}
    \includegraphics[width=0.45\linewidth]{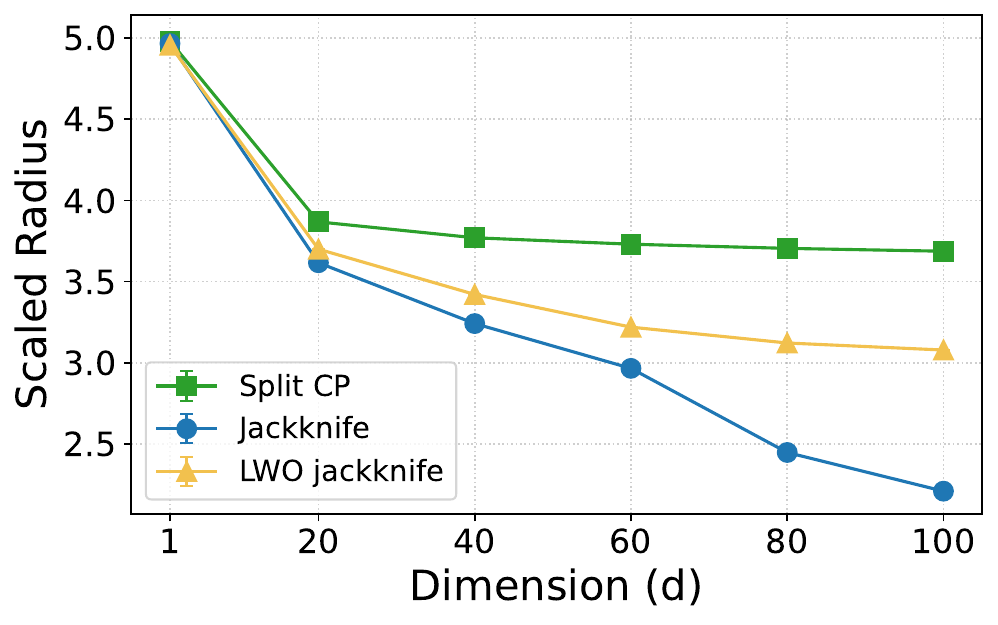}
    \caption{(Left) Empirical coverage on the multidimensional MA(1) process using \(2\)-nearest neighbors as the base predictor. Coverage is averaged over \(1000\) independent trials for split CP, the vanilla jackknife, and the LWO method. The nominal coverage level is \(90\%\) throughout. (Right)~Average radius of the prediction regions produced by split CP, the vanilla jackknife, and LWO, scaled by \(1/\sqrt{d}\) to account for the response dimension $d$.
}
    \label{counter}
\end{figure}

Figure~\ref{counter}(left) reports the empirical coverage of split CP and the vanilla jackknife. Even in this simple MA(1) setting, the vanilla jackknife can severely undercover, with the degradation becoming more pronounced in higher dimensions. By contrast, split CP maintains coverage close to the nominal level across dimensions, as predicted by prior theory~\citep{oliveira2024split,barber2025predictive}. In the same figure, we also plot the coverage of the ``leave-a-window-out" (LWO) method, which is the main methodological contribution of this paper. Evidently, the coverage behavior of this method is similar to that of split CP.
In Figure~\ref{counter}(right), we plot the radius of the prediction set for the three methods. Across dimensions, LWO produces consistently smaller prediction regions than split CP. At the same time, the size of the prediction region of LWO is similar to that of the vanilla jackknife when the latter exhibits reasonable coverage (i.e., for small dimensions).

\subsection{Related work}

For general background on conformal prediction, we refer the reader to the monographs~\citep{vovk2005algorithmic, shafer2008tutorial, angelopoulos2023conformal, angelopoulos2024theoretical} for introductions to both practical and theoretical aspects of the subject. In this section, we discuss two lines of work that are most closely related to our contributions.

\vspace{-2mm}
\paragraph{Predictive inference beyond data splitting.}
Split conformal prediction \citep{papadopoulos2002inductive,lei2018distribution} is typically the most computationally tractable among CP methods. However, it suffers from a widely-recognized ``split tax'': partitioning the data into a proper training set and a calibration set reduces the effective sample size available for model fitting, which can lead to unnecessarily wide prediction intervals and higher variability, especially in data-scarce regimes (see, e.g., Figure~\ref{counter}(right)). A large body of work has therefore sought to reduce or eliminate this statistical inefficiency by using the data more effectively for both training and calibration. Popular approaches in this direction include cross-conformal prediction \citep{vovk2015cross} and related aggregation methods such as the jackknife, jackknife+, jackknife-minmax, and CV+ methods of \citet{barber2021predictive}. Broadly speaking, these methods are based on leave-one-out or leave-fold-out residuals and are attractive because they often produce substantially tighter intervals than split CP while retaining valid coverage under exchangeability. More recent variants include multi-split CP \citep{solari2022multi}, out-of-bag CP \citep{linusson2020efficient}, nested conformal and quantile out-of-bag methods \citep{gupta2022nested}, and bootstrap-based procedures such as the jackknife+-after-bootstrap \citep{kim2020predictive}.

The jackknife validity result of \citet{barber2021predictive} relies on an out-of-sample stability condition, reflecting the fact that the leave-one-out residuals should not change too much when one training observation is removed. Stability-based arguments have also been used to obtain conditional or training-conditional forms of predictive validity~\citep{steinberger2023conditional,liang2025algorithmic} and to make conformal procedures computationally efficient~\citep{ndiaye2022stable}. Recent work shows that bagging can provide assumption-free stability even when the underlying base learner is unstable~\citep{soloff2024bagging}. More broadly, algorithmic stability has long played a central role in learning theory
\citep{bousquet2002stability}, including for dependent observations
\citep{mohri2010stability}. 

\vspace{-2mm}
\paragraph{Conformal prediction under dependence and nonexchangeability.}
Classical validity theory for conformal prediction relies on exchangeability, but many modern applications involve temporal dependence, distribution shift, adaptive data collection, or other violations of exchangeability. Several works have developed general frameworks for conformal prediction beyond exchangeability~\citep[e.g.][]{tibshirani2019conformal,podkopaev2021distribution,barber2023conformal,prinster2024conformal}. These works provide useful tools and perspectives for understanding validity when calibration and test points are not exchangeable, but they do not specifically address the failure of leave-one-out methods under local temporal dependence.

In the specific setting of dependent data exhibiting Markovian and/or mixing structure, \citet{chernozhukov2018exact} propose conformal methods based on approximate block exchangeability 
and \citet{nettasinghe2023extending} obtain exact validity for hidden Markov models on finite sample spaces. The utilization of (approximate) block-wise exchangeability in these methods is reminiscent of ideas from the classical ideas on resampling and bootstrapping with dependent data~\citep[see, e.g.][]{politis1994stationary,politis1991circular,lahiri2013resampling}.
Other papers~\citep{zheng2024conformal, oliveira2024split} analyze the robustness of split conformal prediction to violations of exchangeability observed in dependent data streams. Recent work by \citet{barber2025predictive} sharpens these guarantees and introduces the notion of switch coefficients to measure departures from exchangeability. A distinct line of work develops conformal methods specifically for online prediction~\citep{gibbs2021adaptive}, distribution drift~\citep{gibbs2024conformal}, and time-series forecasting~\citep{zaffran2022adaptive,angelopoulos2023conformalpid}. Other time-series conformal methods, such as EnbPI~\citep{xu2021conformal}, SPCI \citep{xu2023sequential} and KOWCPI~\citep{lee2025kernel} exploit the temporal structure of the residuals or conformity scores more directly. These insights have recently been extended to multistep and multidimensional responses for time-series~\citep{sun2024copula,xu2024conformal}, and spatial dependence structures ~\citep{jiang2026spatial}. While many of these methods are designed for the online (or streaming) setting, our focus (like that of e.g.,~\citet{chernozhukov2018exact,oliveira2024split,barber2025predictive}) is on the setting in which the dataset $(X_t, Y_t)_{t =1}^{n+1}$ is a fixed batch but may exhibit temporal dependence.

\vspace{-2mm}
\section{General setting and our method}
\label{sec:lwo}

In Section~\ref{sec:jk-intro}, we introduced the problem of predictive inference using memoryless predictors and the absolute residual score function, but given that memory-based prediction is the de facto method in time series analysis and conformity can be measured in various ways, we now present the more general setting that accommodates arbitrary score functions and predictors with memory.

\subsection{Memory-based prediction}

A predictor with memory $L$ produces at each time $t \geq 1$ a map from the $L$-sized window of past observations to $\mathcal{Y}$, with the goal of approximating the response $Y_t$. For convenience of notation, we therefore assume that the given data is indexed starting at $-(L - 1)$, so that we have access to\footnote{Note that this is simply a matter of indexing. If our data points $(X_i, Y_i)_{i = 1}^{n + 1}$ are indexed starting at $1$, then we may re-index as $(X_i, Y_i)_{i = -(L - 1)}^{n' + 1}$, where $n' = n - L$.}  the time series of $n + 1 + L$ data points $(X_i, Y_i)_{i = -(L - 1)}^{n + 1}$. As before, the $(n + 1)$-th point should be viewed as the test point.
Now for each $t \in [n + 1]$, define the lifted covariate $\textbf{X}_t=((X_k,Y_k)_{k=t-L}^{t-1},X_t)$. Any memory-$L$ predictor $f:(\mathcal{X}\times\mathcal{Y})^{L}\times\mathcal{X}\rightarrow\mathcal{Y}$ produces a mapping 
$
\textbf{X}_t \mapsto f(\textbf{X}_t)
$; let $\mathcal{F}$ denote the space of all such predictors. The closeness of this prediction to the desired response $Y_t$ is measured in terms of a conformity score function $\score: \Yspace \times \Yspace \to \mathbb{R}$.
For training, we have access to some prediction algorithm $\Alg: \bigcup_{k \in \mathbb{N}}\big((\mathcal{X}\times\mathcal{Y})^{L}\times\mathcal{X}\big)^k \to \mathcal{F}$ which takes as input $k$ samples of training data (for any natural number $k$) and outputs a particular memory $L$ predictor which is the best fit according to some criterion. Typically, one should think of $\Alg$ as being symmetric in its inputs.\footnote{Symmetry is assessed after lifting the covariates, so memory-based predictors can still be symmetric in this sense.}

\vspace{-2mm}
\paragraph{Example: Exogenous Time Series Regression.}
In this setting, the real response $Y_t$ depends on a separate vector of concurrent, exogenous covariates $X_t$. For example, we might wish to predict electricity consumption $Y_t$ at time $t$ based on observed weather conditions such as temperature, humidity, and other covariates, which we collect in $X_t$. It is reasonable to assume temporal dependence in this process---i.e., that $Y_t$ can be predicted based not only on contemporaneous exogenous information $X_t$ but also on covariates and responses in the $L$ past time points $t - L, \ldots, t - 1$. Thus, any predictor $f$ for $Y_t$ will map  $(\{X_{k}, Y_k\}_{k = t - L}^{t - 1}, X_t) \mapsto f(\{X_{k}, Y_k\}_{k = t - L}^{t - 1}, X_t)$. \hfill $\clubsuit$

\paragraph{Example: Autoregressive Forecasting.}
In many forecasting tasks, external covariates are unavailable, and prediction relies solely on past responses. In other words, the sequence is given by $(Y_{-(L-1)}, \ldots, Y_t)$ and the goal is to predict $Y_t$ from a window of past responses $Y_{t - L}, \ldots, Y_{t - 1}$. In order to write this in the notation of covariate-response pairs, we can follow the convention that $\mathcal{X} = \{ x_0\}$ for some fixed $x_0$, and any predictor for $Y_t$ may once again be viewed as a map from $(\mathcal{X} \times \mathcal{Y})^{L} \times \mathcal{X} \to \mathcal{Y}$ (where the predictor simply ignores $x_0$). \hfill $\clubsuit$

\medskip

Having given two canonical examples of memory-based prediction, we are now ready to define our predictive inference method.

\subsection{The leave-a-window-out (LWO) method} \label{sec:lwo-method}

Our leave-a-window-out (or LWO) method is given by the following steps:

\begin{enumerate}
    \item For each $k = 1, \ldots, n$:
\begin{itemize}
    \item Train the predictor $\widehat{f}_k$ on the subsequence of data points obtained by excluding the indices $k, \ldots, \min\{k+\tau,n\}$. Specifically, compute
    \begin{subequations} \label{eq:lwo-def}
    \begin{align} \label{eq:lwo-prediction-def}
    \widehat{f}_k = &\Alg \Big[ 
    \underbrace{(\textbf{X}_{1},Y_{1}), \ldots, (\textbf{X}_{k-1}, Y_{k-1})}_{\text{Pre-window sequence}}, 
    \underbrace{(\textbf{X}_{k+\tau+1}, Y_{k+\tau+1}), \ldots, (\textbf{X}_{n}, Y_{n})}_{\text{Post-window sequence}} 
    \Big],
    \end{align}
    where the post-window sequence is empty for $k \geq n - \tau$.
    
    \item Compute the score 
    \begin{align} \label{eq:lwo-score-def}
        s_k = \score(Y_k, \widehat{f}_k(\textbf{X}_k)).
    \end{align}.
\end{subequations}
\end{itemize}

\item Let $\widehat{f} = \Alg[(\bX_1, Y_1), \ldots, (\bX_n, Y_n)]$ denote the predictor obtained by training on all $n$ training data points. We then output the prediction set
\begin{equation}
 C^{\text{lwo}}_{\level} = \left\{y \in \Yspace : \score(y, \widehat{f}(\textbf{X}_{n+1})) \leq \Quantile_{1 - \alpha} ( s_1, \ldots, s_{n} ) \right\}.
 \label{lwo_interval}
\end{equation}
\end{enumerate}

\begin{figure}[t]
\centering
\resizebox{0.85\linewidth}{!}{%
\begin{tikzpicture}[
    >=Latex,
    font=\small,
    cell/.style={
        draw,
        line width=0.65pt,
        minimum width=1.25cm,
        minimum height=0.95cm,
        align=center,
        inner sep=1pt,
        font=\small
    },
    train/.style={cell, fill=green!25},
    proxy/.style={cell, fill=orange!45},
    omit/.style={cell, fill=white},
    brace/.style={
        decorate,
        decoration={brace, amplitude=5pt},
        line width=0.8pt
    }
]

\node[train] (z1)  at (0.00,0)  {\(\mathbf X_1\)\\[-1pt]\(Y_1\)};
\node[train] (z2)  at (1.25,0)  {\(\mathbf X_2\)\\[-1pt]\(Y_2\)};
\node[train] (z3)  at (2.50,0)  {\(\cdots\)};
\node[train] (z4)  at (3.75,0)  {\(\mathbf X_{k-1}\)\\[-1pt]\(Y_{k-1}\)};

\node[proxy] (zk)  at (5.00,0)  {\(\mathbf X_k\)\\[-1pt]\(Y_k\)};

\node[omit]  (z5)  at (6.25,0)  {\(\mathbf X_{k+1}\)\\[-1pt]\(Y_{k+1}\)};
\node[omit]  (z6)  at (7.50,0)  {\(\cdots\)};
\node[omit]  (z7)  at (8.75,0)  {\(\mathbf X_{k+\tau}\)\\[-1pt]\(Y_{k+\tau}\)};

\node[train] (z8)  at (10.00,0) {\(\mathbf X_{k+\tau+1}\)\\[-1pt]\(Y_{k+\tau+1}\)};
\node[train] (z9)  at (11.25,0) {\(\cdots\)};
\node[train] (z10) at (12.50,0) {\(\mathbf X_n\)\\[-1pt]\(Y_n\)};

\draw[brace]
    ($(z1.north west)+(0,0.20)$) --
    ($(z4.north east)+(0,0.20)$)
    node[midway, yshift=0.43cm, font=\small\bfseries]
    {Pre-window training set};

\draw[brace]
    ($(z8.north west)+(0,0.20)$) --
    ($(z10.north east)+(0,0.20)$)
    node[midway, yshift=0.43cm, font=\small\bfseries]
    {Post-window training set};

\draw[brace, decoration={brace, mirror, amplitude=5pt}]
    ($(z5.south west)+(0,-0.22)$) --
    ($(z7.south east)+(0,-0.22)$)
    node[midway, yshift=-0.70cm, font=\small\bfseries]
    {Window left out};

\node[font=\small\bfseries, anchor=east] (proxyword)
    at (3.53,-1.4) {Proxy};

\node[font=\small\bfseries, anchor=west] (testword)
    at (3.40,-1.38) {test};

\node[font=\small\bfseries, anchor=west] (pointword)
    at ($(testword.east)+(-0.13,-0.01)$) {point};

\draw[-Latex, line width=0.75pt]
    (testword.north) -- ($(zk.south)+(0.02,-0.04)$);

\end{tikzpicture}%
}

\caption{Illustration of how the LWO score $s_k$ is computed. We leave out a window of length $\tau$ starting at (and including) $(\bX_{k+1}, Y_{k+1})$, which is denoted by the white blocks. The predictor $\widehat{f}_k$ is trained on the green points (i.e. all remaining data except $(\bX_k, Y_k)$), and the score $s_k$ is obtained by evaluating the trained predictor $\widehat{f}_k$ on $(\bX_k, Y_k)$. }
\label{fig:lwo-score}
\end{figure}

To describe our procedure in words, note that score $s_k$ is computed by treating the data at time index $k$ as a proxy for the test point: it is generated by comparing the prediction $\widehat{f}_k(\bX_k)$ to the corresponding output $Y_k$, just like in the vanilla jackknife. The main difference is in how we compute the predictor $\widehat{f}_k$ in Eq.~\eqref{eq:lwo-prediction-def}. Unlike in the vanilla jackknife, our LWO predictor is trained on all the data except for a \emph{window} of points corresponding to time indices $k, \ldots, k + \tau$. Figure~\ref{fig:lwo-score} illustrates how the score is computed; note that the training set now has both a ``pre-window" and ``post-window" component. The post-window component may still use lagged information from the window (through the lifted covariates). Note that the typical amount of computation needed for LWO is comparable to the vanilla jackknife; indeed, we recover the vanilla jackknife described in Section~\ref{sec:jk-intro} when $\tau = 0$ (if we use the residual score, $\score(y, \widehat{f}(\textbf{X}_{n+1})) = |y - \widehat{f}(\textbf{X}_{n+1})|$).

Note that our algorithm involves a hyperparameter, which is the window size $\tau$. A natural  heuristic to choose \(\tau\) is to have it be at least as large as the effective dependence range that we expect in the problem. In practice, this can be guided by domain knowledge, autocorrelation diagnostics, or validation-based selection over a small grid of candidate window lengths.

\subsection{Overview of results}

Now that we have defined our LWO method, we use the remaining sections to provide theoretical and empirical results on its performance.
Specifically:

\begin{enumerate}
    \item 
    We develop an analysis framework for LWO showing that its coverage loss is bounded under a natural stability assumption; we introduce this assumption along with some other useful notions in Section~\ref{sec:masking-stability}. Later in Section~\ref{sec:theory}, we introduce the \emph{cyclic embedding coefficient} of a process to measure how close the data sequence $(\bX_1,Y_1), \ldots, (\bX_{n+1}, Y_{n+1})$ is to a longer cyclically exchangeable sequence, and prove two finite-sample coverage bounds for LWO in terms of this coefficient. Our first result, Theorem~\ref{thm:cyc}, is based on embedding the original data sequence, and our second result, Theorem~\ref{thm:cyc_stb}, is based on embedding a carefully modified version of the data sequence.

    \item 
    We bound the cyclic embedding coefficient under standard dependence conditions. We first study
approximately Markovian processes in Section~\ref{sec:theory-markov} and control the cyclic embedding coefficient using mixing properties of the process. Then, we study general (not necessarily Markovian) mixing processes in Section~\ref{sec:algstb} and relate the resulting cyclic embedding coefficient to the switch coefficients of the process. Together, these results, presented in Corollaries~\ref{cor:cyc} and~\ref{cor:switch} respectively, show that LWO produces valid prediction intervals despite natural notions of temporal dependence.

    \item 
    In Section~\ref{sec:experiments}, we demonstrate the empirical benefits of LWO on simulated and real time series. The experiments illustrate that LWO achieves valid coverage under dependence even when the vanilla jackknife fails, while remaining substantially more data-efficient than split conformal prediction.
\end{enumerate}

Section~\ref{sec:proofs} collects proofs of all our main results, with technical lemmas postponed to Appendix~\ref{sec:app-proofs}. We conclude with a discussion of open questions in Section~\ref{sec:disc}, and present additional experimental results in Appendix~\ref{sec:add_expt}.

\section{Theoretical results} \label{sec:theory}

In this section, we state theoretical results for LWO under natural dependence assumptions. Along the way, we introduce new coefficients to measure the extent to which cyclic exchangeability is violated by a dependent sequence.

 \subsection{Preliminary setup} \label{sec:masking-stability}

 Before we present our theoretical results, we present a useful reduction, some notation, and a natural stability assumption.
 
\subsubsection{Reduction to zero memory predictors, dummy data points, and masking} \label{sec:reduction}

Note that our setting can be equivalently described as follows. Let $\mathbb{X} := (\mathcal{X} \times \mathcal{Y})^{L} \times \Xspace$. We have the stochastic process $\bZ = (Z_1, \ldots, Z_{n + 1})$, where $Z_t = (\bX_t, Y_t)$ is a random variable taking values in $\Zspace := \mathbb{X} \times \Yspace$. Each memory-$L$ predictor can be viewed as a function $f: \bbX \to \Yspace$ that produces, for each $t \geq 1$, a prediction $f(\bX_t)$ for $Y_t$. It thus suffices to consider memoryless prediction (i.e., standard regression functions $\bX_t \mapsto f(\bX_t)$) for the stochastic process $(Z_t)$. Given this reduction, the setting for the rest of the paper is as follows. Given $Z_1, \ldots, Z_n$ and $\bX_{n+1}$, our goal is to construct a prediction interval for the test response $Y_{n + 1}$ with coverage assessed marginally over $\bZ$.

It is useful to also define a notion of \emph{dummy data points}. A dummy data point is denoted by $\star$, and we define the augmented data space $\mathcal{Z}_\star := \mathcal{Z} \cup \{ \star \}$. Analogously, the augmented covariate space is given by $\bbX_* = \bbX \cup \star$ and the augmented response space by $\Yspace_* = \Yspace \cup \star$. We abuse notation slightly and say that if $Z = (\bX, Y)$ is a dummy data point, then $\bX = Y = \star$. 
We adopt the following conventions for dummy points. First, any predictor satisfies $f(\star)=\star$, and $\mathcal{F}_\star$ denotes the augmented space of predictors. Second, the training algorithm is extended to $\Alg:\bigcup_{k\ge1}(\mathcal Z_\star)^k \to \mathcal F_{\star}$ with the convention that dummy points in the training input are ignored by the training algorithm, and $\Alg$ always outputs a dummy point if it is trained only on dummy points. 
Finally, the score function is extended so that any comparison involving a dummy point contributes zero score, i.e. $\score(\star, f(\star)) = 0$ for any $f \in \mathcal{F}_{\star}$.

Having dummy data points then allows us to define a \emph{masking} operation.
\begin{definition}[The masking operation] 
\label{def:mask}
Fix any $m\geq 1$ and any $\tau\in\{0,\dots,m-1\}$, and let $\mathbf{w}=(w_1,\dots,w_m)$ be any vector. For each $k\in\{-\tau,\dots,m-1\}$, we define a \emph{masked} version of the vector $\mathbf{w}$ as
\[\big(\mathrm{M}_{k,\tau}(\mathbf{w})\big)_i 
= \begin{cases} \star, & i\in\{k+1,\dots,k+\tau\},\\
w_i, & i\in[m]\setminus \{k+1,\dots,k+\tau\}.\end{cases}\]
In other words, the masked vector $\mathrm{M}_{k,\tau}(\mathbf{w})$ is obtained from $\mathbf{w}$ by replacing entries $w_i$, for any $i=k+1,\dots,k+\tau$, with $\star$.
\end{definition}

In particular, for $1 \leq k \leq m-\tau-1$, the masked vector can be written as
\[
\mathrm{M}_{k,\tau}(\mathbf{w})
=
\bigl(w_1,\ldots,w_k,\underbrace{\star,\ldots,\star}_{\tau\text{ times}}, 
      w_{k+\tau+1},\ldots,w_m\bigr),
\]
for $m -\tau \leq k \leq m - 1$ as
\[
\mathrm{M}_{k,\tau}(\mathbf{w})
=
\bigl(w_1,\ldots,w_k, \underbrace{\star,\ldots,\star}_{(m - k)\text{ times}}\bigr).
\]
and for $-\tau \leq k \leq 0$,
\[
\mathrm{M}_{k,\tau}(\mathbf{w})
=
\bigl(\underbrace{\star,\ldots,\star}_{(k+\tau)\text{ times}}, 
      w_{k+\tau+1},\ldots,w_m\bigr).
\]
By this convention, we have $\mathrm{M}_{-\tau,\tau}(\mathbf{w}) = \mathbf{w}$. 

Note that masking can be applied in composition. In particular, $
\mathrm{M}_{k',\tau'}\bigl(\mathrm{M}_{k,\tau}(\mathbf{w})\bigr)
$
is the vector obtained by masking the two blocks beginning at indices \(k+1\) and \(k'+1\). In particular, once an entry is masked, it remains masked under any subsequent masking operation. Therefore, the composition is invariant to the order in which the two masking operations are applied.

With this notation in hand, we can now write the LWO method as applied directly to the sequence $\bZ=(Z_1,\ldots,Z_{n+1})$ with $Z_t=(\bX_t,Y_t)\in\Zspace$. Fix a window length $\tau\ge 0$. For each $k=1,\ldots,n$, we train a predictor by leaving out (or equivalently, masking) the block of indices $k,\ldots,\min\{k+\tau,n\}$, namely
\[
\widehat f_k
\;=\;
\Alg\big[ \mathrm{M}_{k  -1, \tau + 1}(Z_1,\ldots,Z_n) \big],
\]
and compute the corresponding score
\[
s_k = \score(Y_k,\widehat f_k(\bX_k)).
\]
Let $\widehat f=\Alg[ Z_1,\ldots,Z_n]$ denote the predictor trained on all $n$ training points. The LWO prediction set for the test response at time $n+1$ is then
\[
C^{\mathrm{lwo}}_\alpha[Z_1, \ldots, Z_n](\bX_{n+1})
\;=\;
\Big\{y\in\Yspace:\ \score(y,\widehat f(\bX_{n+1}))\le \Quantile_{1-\alpha}(s_1,\ldots,s_{n})\Big\},
\]
which reduces to the vanilla (leave-one-out) jackknife when $\tau=0$. We also define an inflated prediction interval
\[
C^{\mathrm{lwo},t}_\alpha[Z_1, \ldots, Z_n](\bX_{n+1})
\;=\;
\Big\{y\in\Yspace:\ \score(y,\widehat f(\bX_{n+1}))\le \Quantile_{1-\alpha}(s_1,\ldots,s_{n})+t\Big\},
\]
where the quantile is inflated by $t \geq 0$. We often suppress the dependence on $Z_1, \ldots, Z_n$ and use the shorthand $C^{\mathrm{lwo}}_\alpha(\bX_{n+1})$ and  $C^{\mathrm{lwo},t}_\alpha(\bX_{n+1})$ for these prediction sets.

As a final piece of notation, let us introduce our convention for subvectors. If $\mathbf{w} = (w_1, \ldots, w_m)$ is a length $m$ vector and $1 \leq a \leq b \leq m$, we let $\mathbf{w}_{a:b} = (w_a, \ldots, w_b)$ denote the subvector whose indices range from (and include) $a$ to $b$. For example, the vector $\bZ_{1:n}$ denotes the training sequence.
\medskip

 Since the method involves computing each score by leaving out (or masking) a $\tau$-length block from the training data, the LWO method is particularly suited for problems in which the $(\score, \Alg)$ pair satisfies a natural notion of stability. We introduce this notion next.

\subsubsection{Out-of-sample stability to masking a random training block}
\label{sec:oos_stab}
The following definition is a general notion of out-of-sample (OOS) stability, and is stated for a general random sequence $\bZ' \in \Zspace^{n + 1}$:
\begin{definition}
\label{def:stab}
For a pair of scalars $(\nu, t)$ and a natural number $\ell$, the pair $(\score, \Alg)$ is said to be OOS stable with respect to masking a random training block of length $\ell$ from $\bZ' = (\bX'_i, Y'_i)_{i = 1}^{n + 1}$ if for $K \sim \mathsf{Unif}(\{1-\ell,\ldots,n - \ell\})$, we have
\begin{align*}
\mathbb{P} \Bigg\{ \Big| \score(Y'_{n + 1}, \Alg[ \mathrm{M}_{K, \ell} (Z'_1, \ldots, Z'_n)](\bX'_{n + 1})) - \score(Y'_{n + 1}, \Alg[Z'_1, \ldots, Z'_n ](\bX'_{n + 1})) \Big| \leq t \Bigg\} \geq 1-\nu.
\end{align*}
\end{definition}

In other words, the score computed on the $(n + 1)$-th (test) data point is stable to leaving out a uniformly random block of length $\ell$ from the $n$ (training) data points.\footnote{By the definition of the masking operation, we also include cases in which blocks of size $\leq \ell$ are left out at the start of the training sequence, corresponding to when $K$ takes negative values.} Note that this is an out-of-sample (or test-time) notion, since stability is always measured with respect to the test-point prediction and not the training point predictions.
Other notions of stability have been defined for algorithms on dependent data~\citep{mohri2010stability}. One such definition is uniform stability~\citep{mohri2010stability}, which measures \emph{in-sample} stability under deletion of any single training point.  Definition~\ref{def:stab} is weaker, since it is both an out-of-sample notion of stability, and averaged over the deletion of all contiguous blocks of length $\ell$.

\paragraph{Examples of stable algorithms.} To be concrete, let us give some concrete examples of stable algorithms when the score function is given by the absolute residual \(\score(y,\widehat{y})=|y-\widehat{y}|\). First, it is known that ridge regression, kernel ridge regression, and elastic net satisfy in-sample uniform stability under standard boundedness and strong-convexity assumptions~\citep{bousquet2002stability,mohri2010stability}. For these algorithms, it can be verified that the $(\score, \Alg)$ pair is OOS stable with respect to any $\bZ$ in the sense of Definition~\ref{def:stab} with
\begin{align} \label{eq:ridge-stable}
\nu=0 
\quad \text{and} \quad
t =  C \cdot \sum_{r=0}^{\ell-1}\frac{1}{n-r}
\le C\log\!\left(\frac{n}{n-\ell}\right),
\end{align}
where $C > 0$ depends on the boundedness and strong convexity parameters. A second example (which is not in-sample stable) is the $k$-nearest neighbor algorithm, a particular instance of which was used in our experiments in Figure~\ref{counter}. In this case, the prediction at the test point changes only if the deleted block intersects the $k$ nearest neighbors of the test covariate, and so OOS stability in the sense of Definition~\ref{def:stab} is satisfied with
\begin{align} \label{eq:NN-stable}
\nu = \frac{k\ell}{n} \quad \text{and} \quad t=0.
\end{align}
To give a third example, we note that if $\Alg$ is obtained by bagging bounded base predictors, then OOS stability in the sense of Definition~\ref{def:stab} is satisfied even when the underlying base learner is unstable. In particular, the guarantees of \citet{soloff2024bagging}, which show OOS stability to leaving one training point out, can be straightforwardly extended via application of the triangle inequality to show stability of bagged predictors in the sense of Definition~\ref{def:stab}.

Having introduced our method as well as a suitable stability definition on the $(\score, \Alg)$ pair, we are now in a position to provide theoretical guarantees under certain assumptions on the process~$\bZ$.

\subsection{Guarantees when $\bZ$ is approximately Markovian and mixing}
\label{sec:theory-markov}

A canonical class of time series models can be described as Markov chains---these include autoregressive processes and state space models, among others. In this section, we show that LWO attains approximate coverage for this class of processes provided that the process is suitably mixing and the $(\score, \Alg)$ pair is OOS stable with respect to $\bZ$.
To arrive at this conclusion, we introduce a new conceptual tool: embedding the given time series $\bZ$ into a longer cyclically exchangeable sequence.

\subsubsection{Coverage of LWO when $\bZ$ is (approximately) cyclically embeddable}
\label{sec:condmix}

To understand cyclical embeddings of $\bZ$, we first introduce cyclic exchangeability. We use the notation $\overset{\textnormal{d}}{=}$ to denote equality in distribution.
\begin{definition}[Cyclic exchangeability]
 A sequence $\widetilde{\bZ} = (\widetilde{Z}_1,\dots,\widetilde{Z}_{m}) \in \Zspace_*^{m}$ is said to satisfy cyclic exchangeability if
\[\bZtil \overset{\textnormal{d}}{=} (\widetilde{Z}_i,\dots,\widetilde{Z}_{m},\widetilde{Z}_1,\dots,\widetilde{Z}_{i-1})\]
for all $i\in[m]$. Let $\mathcal{S}(m)$ denote the collection of all cyclically exchangeable sequences of length $m$.
\end{definition}

Our first result is that if $\bZ$ can be embedded into a cyclically exchangeable sequence of length $n + 1 + \tau$, then we have approximate validity for the inflated LWO prediction set with window size~$\tau$. 

\begin{proposition}
\label{prop:cyc}
Suppose there exists $\bZtil \in \mathcal{S}(n + 1 + \tau)$ such that $\bZ \overset{\textnormal{d}}{=} \bZtil_{1:(n+1)}$.
If $\Alg$ is symmetric and the pair $(\score, \Alg)$ is OOS stable with respect to $\bZ$ in the sense of Definition~\ref{def:stab} with parameters $(\nu, t, \tau + 1)$, then
\[
\mathbb{P}\left(Y_{n+1}\in C^{\jk,t}_{\level}(\bX_{n+1})\right) \geq 1-\alpha -2\sqrt{\nu} - \frac{\tau+1}{n}.
\]
\end{proposition}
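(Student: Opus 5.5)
The plan is to work entirely on the cyclic sequence $\bZtil\in\mathcal{S}(N)$, $N:=n+1+\tau$. We may assume $\bZ=\bZtil_{1:(n+1)}$, since the event in question depends only on the law of $\bZ$. Indices are read modulo $N$ throughout.

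\textbf{Step 1: cyclic scores.} For each $i\in[N]$, let $W_i=\{i,i+1,\dots,i+\tau\}$ (cyclically). Define
\[
R_i=\score\big(\widetilde Y_i,\Alg[(\widetilde Z_j)_{j\notin W_i}](\widetilde{\bX}_i)\big),
\]
so that the training set always has exactly $n$ points. Since $\Alg$ is symmetric, the order of the training points is irrelevant. Hence each $R_i$ is a fixed function of the cyclic shift of $\bZtil$ that starts at $i$, and so $(R_1,\dots,R_N)$ is cyclically exchangeable.

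Two identifications follow. For $i=n+1$, the window is $W_{n+1}=\{n+1,\dots,N\}$, so $R_{n+1}=\score(Y_{n+1},\widehat f(\bX_{n+1}))$ is exactly the test score. For $k\in[n]$, the LWO score $s_k$ uses the training set $(\widetilde Z_j)_{j\notin W_k}$ with, in addition, the block $\{n+1,\dots,N\}$ (of length $\tau+1$) masked.

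\textbf{Step 2: stability transfer.} Shift the cyclic sequence by $d=n+1-k$, which sends position $k$ to $n+1$.
\begin{itemize}
\item The pair $(R_k,s_k)$ then has the same law as (test score trained on $\bZ_{1:n}$, test score trained on $\mathrm{M}_{K,\tau+1}(\bZ_{1:n})$).
\item The masked block lands at starting position $2n+2-k\equiv n+1-k-\tau \pmod N$, so $K=n-k-\tau$.
\item As $k$ ranges over $[n]$, $K$ ranges exactly over $\{-\tau,\dots,n-1-\tau\}=\{1-\ell,\dots,n-\ell\}$ with $\ell=\tau+1$.
\item For negative $K$, the part of the block that wraps lands on positions $n+2,\dots,N$. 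These are not in the training set anyway, which matches the masking convention.
\end{itemize}
Averaging over $k$ and applying Definition~\ref{def:stab} with parameters $(\nu,t,\tau+1)$ gives
\[
\frac1n\sum_{k=1}^n \mathbb{P}\big(|s_k-R_k|>t\big)\le \nu .
\]
So $B:=\#\{k\le n:|s_k-R_k|>t\}$ satisfies $\mathbb{E}B\le n\nu$. By Markov's inequality, $\mathbb{P}(B\ge \sqrt{\nu}\,n)\le\sqrt\nu$.

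\textbf{Step 3: quantile comparison and exchangeability.} On the event $\{B<\sqrt\nu\, n\}$, at least $\lceil(1-\alpha)n\rceil$ of the $s_k$ lie at or below $\Quantile_{1-\alpha}(s)$. Removing the $B$ bad indices, at least $(1-\alpha-\sqrt\nu)n$ of the $R_k$, $k\le n$, satisfy $R_k\le \Quantile_{1-\alpha}(s)+t$. Therefore miscoverage $R_{n+1}>\Quantile_{1-\alpha}(s)+t$ forces $R_{n+1}$ to strictly exceed at least $(1-\alpha-\sqrt\nu)n$ of the values $R_1,\dots,R_N$.

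Next we apply cyclic exchangeability of $(R_i)$: $\mathbb{P}(\text{this event at } n+1)$ equals the average over $i\in[N]$ of the same event at position $i$. A deterministic counting argument applies here: at most $N-(1-\alpha-\sqrt\nu)n$ indices $i$ can have $R_i$ strictly exceeding that many of the others. Hence the probability is at most
\[
\frac{N-(1-\alpha-\sqrt\nu)n}{N}\le \alpha+\sqrt\nu+\frac{\tau+1}{n}.
\]
Adding the $\sqrt\nu$ from Step 2 gives the claimed bound $1-\alpha-2\sqrt\nu-(\tau+1)/n$.

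\textbf{Expected obstacle.} The main difficulty is the index bookkeeping in Step 2. We must verify that the cyclic shift maps the LWO masks exactly onto the uniform range of $K$ in Definition~\ref{def:stab}. We must also check the edge cases $k>n-\tau$ (truncated windows) and the wrapped blocks, and confirm that the dummy-point conventions make these cases agree. The counting in Step 3 (strict versus non-strict inequalities, ceilings) is routine but needs care to land exactly on $(\tau+1)/n$.
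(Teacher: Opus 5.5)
Your proposal is correct and follows essentially the same route as the paper: the same circular scores on the length-$(n+\tau+1)$ cycle, the same rotation argument identifying $|s_k - R_k|$ in law with the test-point stability gap at masking index $n-k-\tau$, and the same Markov bound on the number of mismatched scores. The only difference is cosmetic. You collapse the paper's chain (exchangeability quantile bound, then Lemma~\ref{lem:quantile-subset}, then Lemma~\ref{lem:quantile-set} with a tuned $\alpha'$) into a single direct rank-counting step, which also makes explicit that only cyclic (not full) exchangeability of the scores is needed.
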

We prove Proposition~\ref{prop:cyc} in Section~\ref{sec:pf-prop1}.
Notably, if $\bZ$ is an exchangeable sequence, then it is itself cyclically exchangeable, and can therefore be trivially embedded into a cyclically exchangeable sequence of length $n + 1$. Proposition~\ref{prop:cyc} then implies that the vanilla jackknife (or equivalently, LWO with $\tau=0$) satisfies approximate validity, thereby recovering the result of~\cite{barber2021predictive}.
Another example of a cyclically embeddable process is a moving average process with memory $q$, as simulated in Figure~\ref{counter}. If $\bZ$ is drawn from a stationary $\textnormal{MA}(q)$ process, then it is embeddable into a cyclically exchangeable process of length at least $n + 1 + q$. Proposition~\ref{thm:cyc} then guarantees that LWO with a window size of at least $q$ will achieve approximate validity. As a related example, for the process in Figure~\ref{counter} (which is cyclically embeddable\footnote{Even though the covariates follow MA(1), the response adds a point in the future, thereby increasing memory by one.} into a sequence of length $n + 1 + q$ for any $q \geq 2$), we used $\tau = 5$ when implementing LWO; clearly, it remains valid for this choice of window size.

While Proposition~\ref{thm:cyc} covers exact cyclical embedding, it can be naturally extended to the situation in which such an embedding can only be performed \emph{approximately}, as articulated by the following coefficient.

\begin{definition}[Cyclic embedding coefficient]
For a nonnegative integer $a$, define the cyclic embedding coefficient of $\bZ$ with lengthening $a$ as
\[
\rho_{a}(\bZ):= \inf_{\widetilde{\bZ} \in \mathcal{S}(n + 1 + a)} \mathrm{d}_{\mathrm{TV}}(\bZ,\widetilde{\bZ}_{1:(n+1)}).
\]
\end{definition}

In Section~\ref{sec:pf-thm1}, we combine the definition of this coefficient with Proposition~\ref{prop:cyc} to arrive at the following theorem.

\begin{theorem}
\label{thm:cyc}
If $\Alg$ is symmetric and the pair $(\score, \Alg)$ is OOS stable with respect to $\bZ$ in the sense of Definition~\ref{def:stab} with parameters $(\nu, t, \tau + 1)$, then
\[
\mathbb{P}\left(Y_{n+1}\in C^{\jk,t}_{\level}(\bX_{n+1})\right) \geq 1-\alpha-\rho_{\tau}(\bZ)-\frac{\tau+1}{n}-2\sqrt{\nu+\rho_{\tau}(\bZ)}.
\]   
\end{theorem}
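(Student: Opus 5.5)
Our plan is to reduce Theorem~\ref{thm:cyc} to Proposition~\ref{prop:cyc} by a total-variation coupling argument. Fix $\epsilon>0$ and pick $\bZtil\in\mathcal{S}(n+1+\tau)$ with $\mathrm{d}_{\mathrm{TV}}(\bZ,\bZtil_{1:(n+1)})\le \rho_\tau(\bZ)+\epsilon$. Write $\bZ' := \bZtil_{1:(n+1)}$, which by construction is exactly cyclically embeddable with lengthening $\tau$. So Proposition~\ref{prop:cyc} applies to $\bZ'$ as soon as we check the stability hypothesis for $\bZ'$, and we then transfer the resulting coverage bound back to $\bZ$.

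\textbf{Step 1: stability transfers with a loss of the TV distance.} Take $K\sim\mathsf{Unif}(\{-\tau,\dots,n-\tau-1\})$ independent of the data. The event in Definition~\ref{def:stab} (with $\ell=\tau+1$) is a measurable event $E(\bZ'',K)$ of the sequence $\bZ''$ and the independent index $K$. Since $K$ is independent, we get $\mathrm{d}_{\mathrm{TV}}((\bZ,K),(\bZ',K))=\mathrm{d}_{\mathrm{TV}}(\bZ,\bZ')$. Hence
\[
\mathbb{P}\{E(\bZ',K)\}\ \ge\ \mathbb{P}\{E(\bZ,K)\}-\mathrm{d}_{\mathrm{TV}}(\bZ,\bZ')\ \ge\ 1-\nu-\rho_\tau(\bZ)-\epsilon .
\]
So $(\score,\Alg)$ is OOS stable with respect to $\bZ'$ with parameters $(\nu+\rho_\tau(\bZ)+\epsilon,\,t,\,\tau+1)$.

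\textbf{Step 2: apply Proposition~\ref{prop:cyc} to $\bZ'$.} This gives
\[
\mathbb{P}\bigl(Y'_{n+1}\in C^{\jk,t}_\alpha[\bZ'_{1:n}](\bX'_{n+1})\bigr)\ \ge\ 1-\alpha-2\sqrt{\nu+\rho_\tau(\bZ)+\epsilon}-\tfrac{\tau+1}{n}.
\]

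\textbf{Step 3: transfer coverage back to $\bZ$.} Coverage is the probability of the event $\{Y_{n+1}\in C^{\jk,t}_\alpha[\bZ_{1:n}](\bX_{n+1})\}$, which is a deterministic measurable function of the full sequence. Its probability under $\bZ$ therefore differs from that under $\bZ'$ by at most $\rho_\tau(\bZ)+\epsilon$. Finally let $\epsilon\downarrow 0$; this is justified by continuity of the square root. If the infimum defining $\rho_\tau$ is not attained, the $\epsilon$ argument already handles it. The result is the stated bound
\[
1-\alpha-\rho_\tau(\bZ)-\tfrac{\tau+1}{n}-2\sqrt{\nu+\rho_\tau(\bZ)}.
\]

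The only delicate point is Step 1. Here we must note that the stability event depends on the test point $Z_{n+1}$ as well as on the training data, so it is a function of the whole $(n+1)$-sequence. Because the TV distance is taken over the full vector $\bZ_{1:(n+1)}$, this is covered, and the randomness in $K$ is independent, so no additional loss arises. Measurability of the events, via the dummy-point conventions, is assumed throughout as in the paper.
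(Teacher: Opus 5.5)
Your proof is correct and follows the paper's argument. The paper inlines the proof of Proposition~\ref{prop:cyc} for an arbitrary $\widetilde{\bZ}\in\mathcal{S}(n+\tau+1)$. It pays $\mathrm{d}_{\mathrm{TV}}(\bZ,\widetilde{\bZ}_{1:(n+1)})$ once inside the averaged stability bound, which is your Step~1, and once more when transferring the coverage event back to $\bZ$, which is your Step~3. It then takes the infimum over $\widetilde{\bZ}$, which is equivalent to your $\epsilon\downarrow 0$. Invoking Proposition~\ref{prop:cyc} as a black box for $\widetilde{\bZ}_{1:(n+1)}$ is just a more modular packaging of the same steps, and it gives the same constants.
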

\noindent Theorem~\ref{thm:cyc} is proved in Section~\ref{sec:pf-thm1}.

While Theorem~\ref{thm:cyc} shows that the loss of coverage can be bounded by the cyclic embedding coefficient $\rho_\tau(\bZ)$, we would ideally be able to show that the cyclic embedding coefficient is related to standard mixing coefficients, in order to be able to interpret the implications of this result. We relate these two notions in the next section under the additional assumption that $\bZ$ is approximately Markovian.

\subsubsection{Relating the cyclic embedding coefficient to mixing under Markov property}

The following definition is a standard measure of dependence in a time series~\citep[e.g.][]{doukhan1995mixing}.
\begin{definition}[$\beta$-mixing coefficient]
For a sequence $\bZ\in\mathcal{Z}^{n+1}$, the $\beta$-mixing coefficient with lag $\tau$ is defined as
\[\beta(\tau):=\max _{1 \leq k \leq n-\tau} \mathrm{d}_{\mathrm{TV}}\left(\left(Z_1, \ldots, Z_k, Z_{k+\tau+1}, \ldots, Z_{n+1}\right),\left(Z_1, \ldots, Z_k, Z_{k+\tau+1}^{\prime}, \ldots, Z_{n+1}^{\prime}\right)\right),\]
where $\bZ'=(Z_1',\ldots,Z_{n+1}')\in\mathcal{Z}^{n+1}$ denotes an i.i.d. copy of $\bZ$.
\end{definition}
As mentioned before, we will also consider sequences that are approximately Markovian by measuring dependence in terms of the following $\beta$-conditional-mixing coefficient:
\begin{definition}[$\beta$-conditional-mixing coefficient]
For a sequence  
$\bZ\in\mathcal{Z}^{n+1}$, the $\beta$-conditional-mixing coefficient with lag $\tau$ is defined as
\[\beta^*(\tau):=\max _{1 \leq k \leq n-\tau} \mathrm{d}_{\mathrm{TV}}\left(\bZ,\left(Z_1^{\prime}, \ldots, Z_k^{\prime},Z_{k+1},\ldots,Z_{k+\tau}, Z_{k+\tau+1}^{\prime\prime}, \ldots, Z_{n+1}^{\prime\prime}\right)\right),\]
where $(Z_1',\ldots,Z_{k}')$ is sampled from the distribution $\mathbb{P}_{(Z_1,\ldots,Z_k)|(Z_{k+1},\ldots,Z_{k+\tau})}(\; \cdot\; | Z_{k+1},\ldots,Z_{k+\tau})$, and
$(Z''_{k + \tau + 1},\ldots,Z_{n+1}'')$ is sampled from the distribution $\mathbb{P}_{(Z_{k+\tau+1},\ldots,Z_{n+1})|(Z_{k+1},\ldots,Z_{k+\tau})}(\;\cdot\; | Z_{k+1},\ldots,Z_{k+\tau})$.
\end{definition}
Our definition of a conditional $\beta$-mixing coefficient does not seem to have appeared in the literature, although related notions (e.g., by~\citet{prakasa2009conditional}) are common. To develop intuition for this definition, suppose $\bZ$ is such that for any decomposition $\bZ = (Z_A, Z_B, Z_C)$ where the block $B$ has length $\tau$, we have the Markovian relation $Z_A \indpt Z_C \; | \; Z_B$. In this case, we would have $\beta^*(\tau) = 0$. Consequently, any mixing Markov chain has both $\beta(\tau)$ and $\beta^*(\tau)$ small provided $\tau$ is larger than the mixing time of the chain.

The following proposition then shows that the cyclic coefficient $\rho_\tau(\bZ)$ can be bounded by combining the above mixing coefficients.
\begin{proposition}
\label{prop:cycliccoef}
Suppose $\bZ$ is a stationary sequence, with $\beta$-conditional-mixing coefficient $\beta^*(\tau)$ and $\beta$-mixing coefficient $\beta(\tau)$. Then
$$
\rho_{\tau}(\mathbf{Z}) \leq  2\beta(\tau) + 2\beta^*(\tau)+\frac{4\tau}{n+\tau+1}.
$$
\end{proposition}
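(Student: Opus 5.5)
We bound $\rho_\tau(\bZ)$ by building an explicit cyclically exchangeable sequence $\bZtil$ of length $N := n+1+\tau$ and comparing its first $n+1$ coordinates with $\bZ$ in total variation.

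\emph{Construction.} Set $m := n+1$. We build a length-$N$ sequence $\bW$ that should look like $\bZ$ followed by $\tau$ entries "bridging'' back to the start. Draw $\bZ$ itself as the first $m$ coordinates. Then draw the last $\tau$ coordinates $W_{m+1},\dots,W_{N}$ from the conditional law of a $\tau$-block that sits between an independent "right'' block and a "left'' block.

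Concretely, we use stationarity. Let $\bZ'$ be an independent copy. Take $(W_{m+1},\dots,W_N)$ to be a stationary $\tau$-block, chosen so that the cyclic arrangement is approximately a stretch of a stationary process around the whole circle. The circle has $N$ sites. For each cut position $j$, we regard the segment of length $n+1$ following a $\tau$-gap as a draw of $\bZ$, approximately.

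Finally, apply a uniformly random cyclic shift $S\sim\mathsf{Unif}([N])$: set $\bZtil := \mathrm{rot}_S(\bW)$. This is cyclically exchangeable by construction. By convexity of TV,
\[
\mathrm{d}_{\mathrm{TV}}\big(\bZ,\bZtil_{1:m}\big)\le \frac1N\sum_{s=1}^N \mathrm{d}_{\mathrm{TV}}\big(\bZ,(\mathrm{rot}_s\bW)_{1:m}\big).
\]

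\emph{Bounding the terms.} For a shift $s$, the window $(\mathrm{rot}_s\bW)_{1:m}$ either wraps across the junction between $W_N$ and $W_1$, or contains part of the artificial $\tau$-block.

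There are at most $2\tau$ shifts for which the window's boundary falls inside the $\tau$-block, i.e.\ where the block is only partially included in a damaging way. These are bounded trivially by $1$. The bad fraction is $2\tau/N$, which after the doubling below gives $4\tau/(n+\tau+1)$.

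For the remaining shifts, the window has the form (tail of $\bZ$, full bridging block, head of $\bZ$), with the bridging block of length exactly $\tau$ separating the two pieces. I compare it to $\bZ$ in two moves:

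\begin{itemize}
\item[(i)] Replace the genuine joint law of (tail, head) by the law in which they are conditionally independent given the middle block. By the definition of $\beta^*(\tau)$ and stationarity, this costs at most $\beta^*(\tau)$.
\item[(ii)] Relate the independently glued pieces of $\bW$, i.e.\ the end of $\bZ$ and the start of $\bZ$ viewed across the gap, to independent copies. By the definition of $\beta(\tau)$, this costs at most $\beta(\tau)$.
\end{itemize}

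The factor $2$ on each coefficient arises because the gluing must be controlled at both ends of the bridging block: the block must be consistent both with what precedes it (the end of $\bZ$) and with what follows it (the start of $\bZ$ after wrapping). Accordingly, the bridging block should be sampled as $\mathbb P_{B\mid A}$ given the last coordinates of $\bZ$. Each end then incurs one $\beta$ term (decoupling $\bZ$'s head from its tail) and one $\beta^*$ term (replacing the missing conditioning on the other side).

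\emph{Main obstacle.} The delicate step is choosing the bridging distribution so that a single construction works for every non-bad shift simultaneously. Stationarity must let us translate each shifted window into one of the configurations in the definitions of $\beta(\tau)$ and $\beta^*(\tau)$. My first attempt will be to draw the middle block from $\mathbb P_{(Z_{1},\dots,Z_\tau)\mid Z_{\text{after}}}$, conditioning on the head of the sequence. I will then verify, via a hybrid argument (triangle inequality through two intermediate sequences), that each shifted window is within $2\beta(\tau)+2\beta^*(\tau)$ of $\bZ$. Averaging over shifts then yields the claimed bound.
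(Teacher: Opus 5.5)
\textbf{There is a genuine gap, at the step you flag as the main obstacle: the law of the bridging block.}

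Your skeleton matches the paper's: append $\tau$ entries to $\bZ$, rotate uniformly, apply convexity of TV, discard boundary shifts, and run a hybrid argument on the rest. Your first description, a block sitting between a left and a right block, points in the right direction. However, the bridges you actually commit to are one-sided: $\mathbb{P}_{B\mid A}$ given the last coordinates of $\bZ$, or the law of $(Z_1,\ldots,Z_\tau)$ given the head. Neither can work. A one-sided bridge is glued to the other piece of $\bZ$ with no gap at all, and no lag-$\tau$ coefficient controls an adjacent junction.

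Here is a concrete failure. Let $U_0,U_1,\ldots$ be i.i.d.\ uniform and set $Z_t=(U_t,U_{t-1})$. This process is stationary and Markov, with $\beta(\tau)=\beta^*(\tau)=0$ for all $\tau\ge 1$. Yet consecutive entries satisfy the almost-sure constraint that the second coordinate of $Z_{t+1}$ equals the first coordinate of $Z_t$. Suppose the bridge continues the tail forward. Then its last entry and $Z_1$ violate this constraint almost surely. The $n$ shifts whose window contains that junction therefore put mass $n/(n+\tau+1)$ on a $\bZ$-null set. Hence $\mathrm{d}_{\mathrm{TV}}(\bZ,\widetilde{\bZ}_{1:(n+1)})\ge n/(n+\tau+1)$, rather than $4\tau/(n+\tau+1)$. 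Conditioning on the head fails symmetrically, at the junction with $Z_{n+1}$.

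\textbf{The missing idea is to condition on both sides.} The paper draws $(Z_{n+2},\ldots,Z_{n+\tau+1})$ from the stationary conditional law of the middle of three consecutive $\tau$-blocks given its two neighbours. This law is evaluated at $(Z_{n+2-\tau},\ldots,Z_{n+1})$ and $(Z_1,\ldots,Z_\tau)$. A good window then reads $(A,B,C,D,E)$, where $B$ is the end of $\bZ$, $C$ the bridge, and $D$ the start of $\bZ$, all consecutive $\tau$-blocks. The hybrid chain is:
\begin{itemize}
\item two uses of $\beta^*(\tau)$, making $A$ depend on the rest only through $B$ and $E$ only through $D$;
\item one $\beta(\tau)$, making $(Z_B,Z_D)$ independent across $C$;
\item a stationarity relabelling;
\item one more $\beta(\tau)$, rejoining head and tail across the $\tau$ entries of $\bZ$ omitted from the window.
\end{itemize}
The conditional law of $C$ given $(B,D)$ then matches the construction exactly.

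\textbf{Your count of bad shifts is also off.} The chain above needs the full blocks $B$ and $D$ inside the window. This holds only for $2\tau+2\le k\le n-\tau+1$. Shifts whose window cuts through $B$ or $D$ must be bounded by $1$, just like those cutting through the bridge. That gives $4\tau$ bad shifts, besides the exact case $k=1$, and hence the term $4\tau/(n+\tau+1)$. Bad shifts contribute their count divided by $n+\tau+1$ directly. There is no ``doubling'' that turns $2\tau$ into $4\tau$; the factors of $2$ on $\beta$ and $\beta^*$ come from the hybrid chain and are unrelated.
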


Proposition~\ref{prop:cycliccoef} is proved in Section~\ref{sec:pf-prop2}.
Combining Theorem~\ref{thm:cyc} and Proposition~\ref{prop:cycliccoef} immediately yields the following corollary.

\begin{corollary}
\label{cor:cyc}
If $\Alg$ is symmetric and the pair $(\score, \Alg)$ is OOS with parameters $(\nu, t, \tau + 1)$, and $\bZ$ is stationary, $\beta$-conditional mixing and $\beta$-mixing, then
\[
\mathbb{P}\left(Y_{n+1}\in C^{\jk,t}_{\level}(\bX_{n+1})\right) \geq 1-\alpha-3\sqrt{\nu+2\beta(\tau)+2\beta^*(\tau)+\frac{5\tau+1}{n}} .
\]   
\end{corollary}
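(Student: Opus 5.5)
The corollary follows by substituting Proposition~\ref{prop:cycliccoef} into Theorem~\ref{thm:cyc} and then simplifying the resulting error terms. Write $\rho := \rho_\tau(\bZ)$. The stability hypothesis, symmetry of $\Alg$, and the use of the inflated set $C^{\jk,t}_\level$ are exactly those of Theorem~\ref{thm:cyc}, so it gives
\[
\mathbb{P}\left(Y_{n+1}\in C^{\jk,t}_{\level}(\bX_{n+1})\right) \geq 1-\alpha-\rho-\frac{\tau+1}{n}-2\sqrt{\nu+\rho}.
\]
Since $\bZ$ is stationary, Proposition~\ref{prop:cycliccoef} gives
\[
\rho \le B := 2\beta(\tau)+2\beta^*(\tau)+\frac{4\tau}{n+\tau+1}.
\]
The right-hand side of the Theorem~\ref{thm:cyc} bound is decreasing in $\rho$, so we may replace $\rho$ by $B$ everywhere.

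Next, collect the terms under a single square root. Using $\frac{4\tau}{n+\tau+1}\le \frac{4\tau}{n}$, we get
\[
\nu + B + \frac{\tau+1}{n} \le S := \nu+2\beta(\tau)+2\beta^*(\tau)+\frac{5\tau+1}{n}.
\]
The total loss is then $B+\frac{\tau+1}{n}+2\sqrt{\nu+B}$. Both $B+\frac{\tau+1}{n}\le S$ and $\nu+B\le S$ hold, so the loss is at most $S+2\sqrt S$.

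The final step is to absorb the linear term, using $S\le\sqrt S$, which holds when $S\le 1$. This gives a loss of at most $3\sqrt S$, which is the claimed bound. When $S>1$ we have $3\sqrt S>1$, so the claimed lower bound is negative and the statement holds trivially.

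All of this is routine. The only points needing care are the monotonicity substitution of $B$ for $\rho$ and the case split at $S=1$ that justifies replacing $S$ by $\sqrt S$.
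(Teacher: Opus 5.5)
Your proposal is correct and is exactly the argument the paper intends when it says the corollary follows immediately from Theorem~\ref{thm:cyc} and Proposition~\ref{prop:cycliccoef}. You substitute the bound on $\rho_\tau(\bZ)$ using monotonicity, collect terms via $\frac{4\tau}{n+\tau+1}\le\frac{4\tau}{n}$, absorb the linear term through $S\le\sqrt{S}$ when $S\le 1$, and note the bound is vacuous when $S>1$, which makes explicit the details the paper leaves implicit.
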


Taking stock, Corollary~\ref{cor:cyc} shows that the validity of the LWO method is governed by two sources of dependence at scale \(\tau\). The first source \(\beta^*(\tau)\) measures how close to Markovian the sequence is when we allow ourselves to condition on blocks of length $\tau$, while the second source \(\beta(\tau)\) measures whether the sequence is rapidly mixing. In particular, if \(\tau\) is chosen large enough that both quantities are small, then the LWO method achieves approximate coverage. As previously mentioned, the prototypical example where this is possible for small $\tau$ is a fast mixing Markov chain. While this is an important class of stochastic processes, the next section provides a result for a larger class of sequences.

\subsection{Guarantees when $\bZ$ is only mixing}
\label{sec:algstb}

In practice, it may be the case that $\bZ$ is a mixing process without being approximately Markovian.  Is the LWO method still an approximately valid procedure in such a setting? In this section, we answer this question in the affirmative provided the $(\score, \Alg)$ pair satisfies a slightly stronger stability condition. In order to prove our eventual result (given in Corollary~\ref{cor:switch}), we proceed via cyclically embedding \emph{masked} versions of $\bZ$ instead of $\bZ$ itself.

\subsubsection{Coverage of LWO when masked $\bZ$ is cyclically embeddable}

To state our result, we require the notion of switch coefficients, which were introduced in~\cite{barber2025predictive} as a measure of departure from exchangeability; our definition is a slight modification of theirs.

\begin{definition}[Switch coefficients] \label{def:switch-coeff}
For each \(k\in\{-\tau,\ldots,n\}\), define
\[
\Delta^0_{k,\tau}(\bZ):=
\begin{cases}
(Z_1,\ldots,Z_k, Z_{k+\tau+1},\ldots,Z_{n+1}),
& 1 \le k \le n-\tau, \\[4pt]
(Z_1,\ldots,Z_k),
& n-\tau+1 \le k \le n, \\[4pt]
(Z_{k+\tau+1},\ldots,Z_{n+1}),
& -\tau \le k \le 0,
\end{cases}
\]
and
\[
\Delta^1_{k,\tau}(\bZ):=
\begin{cases}
(Z_{n+2-k},\ldots,Z_{n+1}, Z_1,\ldots,Z_{n-k-\tau+1}),
& 1 \le k \le n-\tau, \\[4pt]
(Z_{n+2-k},\ldots,Z_{n+1}),
& n-\tau+1 \le k \le n, \\[4pt]
(Z_1,\ldots,Z_{n-k-\tau+1}),
& -\tau \le k \le 0.
\end{cases}
\]

The \(k\)-th switch coefficient of \(\bZ\) with lag \(\tau\) is defined as
\[
\Psi_{k,\tau}(\bZ)
:=
\mathrm d_{\mathrm{TV}}\!\bigl(\Delta^0_{k,\tau}(\bZ),\,\Delta^1_{k,\tau}(\bZ)\bigr).
\]

The average switch coefficient is given by
\[
\overline{\Psi}_\tau(\bZ)
:=
\frac{1}{n+\tau+1}\sum_{k=-\tau}^{n}\Psi_{k,\tau}(\bZ).
\]
\end{definition}

Our main result of this section expresses the loss of coverage of the LWO method using the coefficient $\rho_\tau(\mathrm{M}_{K,\tau}(\bZ))$ and the switch coefficient $\overline{\Psi}_{\tau}(\bZ)$. Recall our notion of OOS stability from Definition~\ref{def:stab}. 

\begin{theorem}
\label{thm:cyc_stb}
Suppose $\Alg$ is symmetric and let $K\sim\mathsf{Unif}(\{-\tau,\ldots,n\})$ be drawn independently of the data. Now suppose that the pair $(\score, \Alg)$ is OOS stable with respect to $\bZ$ with parameters $(\nu, t, \tau)$, OOS stable with respect to $\mathrm{M}_{K,\tau}(\bZ)$ with parameters $(\nu, t, \tau+1)$, and OOS stable with respect to $\mathrm{M}_{K,\tau+1}(\bZ)$ with parameters $(\nu, t, \tau)$. Then we have
\begin{align*}
\mathbb{P}(Y_{n+1}\in C_{\alpha}^{\jk,3t}(\bX_{n+1}))
\geq 1-\alpha-3\sqrt{\nu+\frac{2\tau+2}{n}+\rho_{\tau}(\mathrm{M}_{K,\tau}(\bZ))}-2\sqrt{\nu+\frac{\tau}{n}+\frac{n+\tau+1}{n} \cdot \overline\Psi_\tau(\bZ)}.    
\end{align*}
\end{theorem}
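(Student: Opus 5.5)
We reduce the coverage question about $\bZ$ to one about the masked sequence $\bZ^K := \mathrm{M}_{K,\tau}(\bZ)$, with $K\sim\mathsf{Unif}(\{-\tau,\ldots,n\})$ independent of the data, and then apply Theorem~\ref{thm:cyc} to $\bZ^K$. The argument chains three comparisons, each paid for by one of the stability hypotheses or by the switch coefficient.

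\textbf{Step 1: comparing $\bZ$ with a cyclically rotated, masked version.} Masking a block of length $\tau$ at position $K$ deletes exactly the points that separate the two pieces in the definition of $\Delta^0_{K,\tau}(\bZ)$ versus $\Delta^1_{K,\tau}(\bZ)$. Averaged over $K$, the switch coefficients $\Psi_{K,\tau}(\bZ)$ bound the total variation distance between two sequences: the masked sequence with the test point in its natural last position, and the sequence obtained by cyclically rotating so that the test point sits just before the masked gap. The normalization $\frac{n+\tau+1}{n}\overline\Psi_\tau(\bZ)$ in the statement comes from averaging over the $n+\tau+1$ values of $K$ while only $n$ training positions are relevant.

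\textbf{Step 2: reducing LWO on $\bZ$ to a score-ranking event.} For the original data, the LWO score $s_k$ uses the predictor trained with the block $k,\ldots,k+\tau$ removed, whereas the test score uses the full fit $\widehat f$. Stability with respect to $\bZ$ with parameters $(\nu,t,\tau)$ lets us replace the test-point prediction by one trained on data with a random length-$\tau$ block masked, at cost $t$ in the score and $\nu$ in probability. I expect a Markov/averaging argument, as in the jackknife analysis of \citet{barber2021predictive}, to convert the probability-$\nu$ exceptions into the $\sqrt{\cdot}$ terms.

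The remaining task is to show that the test score, computed under the random mask, is not among the top $\alpha$ fraction of the LWO scores. Using Step 1, we transfer this event to the rotated sequence, paying the TV term. This yields the second square root, $2\sqrt{\nu+\tau/n+\frac{n+\tau+1}{n}\overline\Psi_\tau(\bZ)}$; the $\tau/n$ term accounts for boundary indices where the window is truncated.

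\textbf{Step 3: cyclic argument on the masked sequence.} On $\bZ^K$, we invoke the mechanism behind Theorem~\ref{thm:cyc} with window $\tau+1$. Embed $\bZ^K$ within TV $\rho_\tau(\bZ^K)$ into a cyclically exchangeable sequence of length $n+1+\tau$. Define the full matrix of leave-window-out residuals on that cycle. By cyclic symmetry and the symmetry of $\Alg$, the fraction of "strange" indices is at most $\alpha$ deterministically. The two remaining stability hypotheses are what bridge the gap between this matrix and the actual LWO scores computed on $\bZ^K$:

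1. Stability with respect to $\mathrm{M}_{K,\tau}(\bZ)$ with block length $\tau+1$ relates the predictor trained on the masked data to one with an extra window removed.
2. Stability with respect to $\mathrm{M}_{K,\tau+1}(\bZ)$ with block length $\tau$ handles the leftover mismatch between windows of length $\tau$ and $\tau+1$, since the masking operations compose.

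Each use costs $t$, which explains the total inflation $3t$ (two here plus one from Step 2). The errors collect into $3\sqrt{\nu+\frac{2\tau+2}{n}+\rho_\tau(\bZ^K)}$.

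\textbf{Main obstacle.} The hardest step is the index bookkeeping in Step 3. We must check that, after masking and rotating, the LWO window around each proxy point, the $\tau$-block already masked, and the test point line up on the cycle. Only then does each stability hypothesis apply to a uniformly distributed block, and only then is the set of positions near the masked gap at most of size $2\tau+2$. I would first work this out for $1\le K\le n-\tau$, and then check that the boundary cases $K\le 0$ and $K>n-\tau$ only contribute to the $O(\tau/n)$ terms.
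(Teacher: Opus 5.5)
Your outer architecture is right: pass to $\bZ^{(K)}=\mathrm{M}_{K,\tau}(\bZ)$, run Theorem~\ref{thm:cyc} there using stability with respect to $\mathrm{M}_{K,\tau}(\bZ)$ with block length $\tau+1$, and swap $s_{n+1}$ for $s^{(K)}_{n+1}$ using stability with respect to $\bZ$ with block length $\tau$. (That last swap involves a single score, so it costs $\nu+\frac{\tau+1}{n}$ additively and needs no Markov step.)

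However, the step that carries the content of the theorem is missing. The cyclic argument only gives $s^{(K)}_{n+1}\le\Quantile_{1-\alpha}(s^{(K)}_1,\ldots,s^{(K)}_n)+t$. That is a quantile of LWO scores computed on the \emph{masked} sequence, whereas the method uses $\Quantile_{1-\alpha}(s_1,\ldots,s_n)$ computed on $\bZ$. You never compare $s_k$ with $s^{(K)}_k$. Your substitute, transferring the ranking event to a rotated masked sequence at the price of a TV term, cannot work. The scores $s_1,\ldots,s_n$ jointly depend on every coordinate of $\bZ$, including those that $\mathrm{M}_{K,\tau}$ erases, so the event is not a function of any deleted or masked sequence. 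Also, a one-shot TV transfer would enter additively rather than under a square root. Your Step 1 is essentially the proof of Proposition~\ref{prop:cyc_swt}(b), i.e.\ a bound on $\rho_\tau(\mathrm{M}_{K,\tau}(\bZ))$. That is unnecessary here, because this quantity appears explicitly in the statement.

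The missing idea is a per-index transfer of stability from a training point to the test point (Lemma~\ref{lem:strong_stab} in the paper). For fixed $k\in[n]$, the event $\{|s_k-s^{(K)}_k|>t\}$ depends only on $(\Delta^0_{k,\tau}(\bZ),K)$, since both fits discard $Z_{k+1},\ldots,Z_{k+\tau}$. Replacing $\Delta^0_{k,\tau}(\bZ)$ by $\Delta^1_{k,\tau}(\bZ)$ costs $\Psi_{k,\tau}(\bZ)$ and moves the proxy point into position $n+1$. The event then compares the test score of $\Alg[\mathrm{M}_{n-k-\tau,\tau+1}(\bZ_{1:n})]$ with that of the same fit after an additional uniformly placed length-$\tau$ block is masked. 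This is precisely OOS stability with respect to $\mathrm{M}_{K_0,\tau+1}(\bZ)$ with block length $\tau$, up to probability $\frac{\tau}{n+\tau+1}$ that the extra block covers the test point.

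That is where your third hypothesis enters, not inside the cyclic step. Theorem~\ref{thm:cyc} needs one hypothesis and one $t$, and there is no residual $\tau$ versus $\tau+1$ mismatch there. Averaging over $k$ gives $\frac1n\sum_{k}\mathbb{P}(|s_k-s^{(K)}_k|\ge t)\le\nu+\frac{\tau}{n+\tau+1}+\frac{n+\tau+1}{n}\overline\Psi_\tau(\bZ)$. The factor $\frac{n+\tau+1}{n}$ comes from averaging $\Psi_{k,\tau}$ over the $n$ training indices $k$, not over $K$. Lemma~\ref{lem:quantile-set} then turns this into the $2\sqrt{\cdot}$ term. The three inflations of $t$ are thus Theorem~\ref{thm:cyc} on $\bZ^{(K)}$, the test-score swap, and this training-score swap.
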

We provide the proof of Theorem~\ref{thm:cyc_stb} in Section~\ref{sec:pf-thm2}.
The result of Proposition~\ref{prop:cyc_swt} to follow yields that $\rho_\tau(\mathrm{M}_{K,\tau}(\bZ))\lesssim \overline{\Psi}_{\tau}(\bZ) \lesssim \rho_\tau(\bZ)$, so ignoring absolute constants,  we see that Theorem~\ref{thm:cyc_stb} provides a stronger guarantee than its counterpart Theorem~\ref{thm:cyc}. On the other hand, Theorem~\ref{thm:cyc_stb} relies on a stronger stability assumption than the one made in Theorem~\ref{thm:cyc}. In particular, we now require that the $(\score, \Alg)$ is leave-one-block-out stable not just with respect to the given sequence $\bZ$ but also with respect to its masked counterparts $\mathrm{M}_{K, \tau}(\bZ)$ and $\mathrm{M}_{K, \tau+1}(\bZ)$. 
This stability condition can be ensured if the $(\score, \Alg)$ pair is OOS stable to leaving \emph{two} random blocks---of sizes $\tau$ and $\tau+1$---out of the training data, as opposed to a single block (as required by Theorem~\ref{thm:cyc}).

\subsubsection{Relating cyclic coefficient under masking to mixing}

Since our eventual goal is to show a coverage result for mixing sequences, we now relate our cyclic embedding coefficient $\rho_\tau(\mathrm{M}_{K}(\bZ))$ to the switch coefficients from Definition~\ref{def:switch-coeff}. 

\begin{proposition}
\label{prop:cyc_swt}
We have 
\begin{subequations}
\begin{align} \label{eq:switch-embedding-1}
\overline{\Psi}_{\tau}(\bZ) \leq 2\rho_{\tau}(\bZ).
\end{align}
Moreover, for $K \sim \mathsf{Unif}(\{-\tau,\ldots,n\})$ drawn independently of the data, we have
\begin{align} \label{eq:switch-embedding-2}
\rho_{\tau}(\mathrm{M}_{K, \tau}(\bZ)) \leq \overline{\Psi}_\tau(\bZ)+\Psi_{0,\tau}(\bZ).
\end{align}
\end{subequations}
\end{proposition}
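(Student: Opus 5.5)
The plan is to compare $\bZ$ with rotated copies of a cyclically exchangeable sequence of length $m:=n+1+\tau$. The key observation is that $\Delta^0_{k,\tau}$ and $\Delta^1_{k,\tau}$ are exactly the subsequences one reads off from the first $n+1$ coordinates of such a sequence before and after a suitable rotation. Both inequalities then follow from the triangle inequality for total variation, together with the fact that applying the same measurable map to two random vectors cannot increase their TV distance.

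For \eqref{eq:switch-embedding-1}, fix $\epsilon>0$ and take $\bZtil\in\mathcal{S}(m)$ with $\mathrm{d}_{\mathrm{TV}}(\bZ,\bZtil_{1:(n+1)})\le\rho_\tau(\bZ)+\epsilon$.
\begin{itemize}
\item Fix $k\in\{1,\dots,n-\tau\}$ and define the rotation $\mathbf{W}$ by $W_i=\widetilde Z_{i+k+\tau \bmod m}$, with indices taken in $\{1,\dots,m\}$.
\item Because $m=n+1+\tau$, the shift $k+\tau$ is congruent to $-(n+1-k)$. Hence $W_{n+2-k},\dots,W_{n+1}$ equals $\widetilde Z_1,\dots,\widetilde Z_k$, and $W_1,\dots,W_{n-k-\tau+1}$ equals $\widetilde Z_{k+\tau+1},\dots,\widetilde Z_{n+1}$.
\item Therefore $\Delta^1_{k,\tau}(\mathbf{W}_{1:(n+1)})=\Delta^0_{k,\tau}(\bZtil_{1:(n+1)})$ identically. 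By cyclic exchangeability, $\mathbf{W}\overset{\textnormal{d}}{=}\bZtil$.
\item This gives
\[
\Psi_{k,\tau}(\bZ)\le \mathrm{d}_{\mathrm{TV}}\bigl(\Delta^0_{k,\tau}(\bZ),\Delta^0_{k,\tau}(\bZtil_{1:(n+1)})\bigr)+\mathrm{d}_{\mathrm{TV}}\bigl(\Delta^1_{k,\tau}(\bZtil_{1:(n+1)}),\Delta^1_{k,\tau}(\bZ)\bigr)\le 2(\rho_\tau(\bZ)+\epsilon).
\]
\item The boundary ranges $n-\tau+1\le k\le n$ and $-\tau\le k\le 0$ are handled by the same rotation; there only one of the two blocks is nonempty.
\end{itemize}
Averaging over $k$ and letting $\epsilon\to0$ yields \eqref{eq:switch-embedding-1}.

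For \eqref{eq:switch-embedding-2}, I will build an explicit element of $\mathcal{S}(m)$ with dummy entries.
\begin{itemize}
\item Let $\mathbf{U}=(Z_1,\dots,Z_{n+1},\star,\dots,\star)\in\Zspace_\star^{m}$, with $\tau$ dummies at the end.
\item Let $\bZtil$ be the rotation of $\mathbf{U}$ by an independent uniform shift in $\{0,\dots,m-1\}$. A uniformly random rotation is cyclically exchangeable by construction.
\item The $m$ shifts correspond bijectively to the $m$ values $k\in\{-\tau,\dots,n\}$. I index them by where the dummy block falls, so that the dummies in $\bZtil_{1:(n+1)}$ occupy exactly the positions $\{k+1,\dots,k+\tau\}\cap[n+1]$, as in $\mathrm{M}_{k,\tau}$. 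Blocks that wrap around position $m$ are truncated, matching the boundary forms of Definition~\ref{def:mask}.
\item By the computation in the previous part, for this shift the non-dummy content of $\bZtil_{1:(n+1)}$, read in order, is $\Delta^1_{k,\tau}(\bZ)$. The non-dummy content of $\mathrm{M}_{k,\tau}(\bZ)$ is $\Delta^0_{k,\tau}(\bZ)$, placed in the same positions.
\end{itemize}
Couple $K$ with the shift index. Conditioned on $K=k$, both masked vectors are the same deterministic insertion of dummies into $\Delta^0_{k,\tau}(\bZ)$ and $\Delta^1_{k,\tau}(\bZ)$ respectively, so their TV distance is at most $\Psi_{k,\tau}(\bZ)$. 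Convexity of TV under mixtures over $K$ then gives $\rho_\tau(\mathrm{M}_{K,\tau}(\bZ))\le\overline{\Psi}_\tau(\bZ)$, up to boundary terms.

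The main obstacle is the index bookkeeping at the boundary, where the dummy block straddles position $n+1$ (and wraps across $m\to1$). There the shift does not line up exactly with a single $\mathrm{M}_{k,\tau}$ pattern: the number of visible dummies changes, or the two pieces of $\Delta^1$ are split differently from $\Delta^0$. I expect at most one shift class to be mismatched. I would handle it by reassigning that shift to the pattern with $k=0$, where $\mathrm{M}_{0,\tau}(\bZ)$ has dummies at $1,\dots,\tau$, and paying the corresponding switch coefficient. This is where the extra $\Psi_{0,\tau}(\bZ)$ term should enter. I would first check the cases $\tau=0$ and $\tau=1$ by hand to pin down exactly which shift is the exceptional one.
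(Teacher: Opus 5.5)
Your proof of \eqref{eq:switch-embedding-1} is the paper's argument: a triangle inequality through $\bZtil_{1:(n+1)}$, data processing, and $\Delta^0_{k,\tau}(\bZtil_{1:(n+1)})\overset{\textnormal{d}}{=}\Delta^1_{k,\tau}(\bZtil_{1:(n+1)})$. Your rotation $\mathbf W$ supplies a justification of that last identity, which the paper only asserts.

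For \eqref{eq:switch-embedding-2}, your argument is correct but takes a different route from the paper, and the boundary obstacle you anticipate does not exist. Set $m=n+\tau+1$ and let $\widetilde Z_i=U_{i+s}$ (indices modulo $m$) with $\mathbf U=(Z_1,\dots,Z_{n+1},\star,\dots,\star)$. Choosing the shift $s\equiv n+1-k \pmod m$, one checks for every $k\in\{-\tau,\dots,n\}$ that
\[
\widetilde Z_i=
\begin{cases}
Z_{i+n+1-k}, & 1\le i\le k,\\
\star, & i\in\{k+1,\dots,k+\tau\}\cap[n+1],\\
Z_{i-k-\tau}, & k+\tau+1\le i\le n+1.
\end{cases}
\]
So the non-dummy entries of $\bZtil_{1:(n+1)}$, read in order, form $\Delta^1_{k,\tau}(\bZ)$. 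They occupy exactly the positions where $\mathrm{M}_{k,\tau}(\bZ)$ carries $\Delta^0_{k,\tau}(\bZ)$. This one formula covers all the boundary cases:
\begin{itemize}
\item $s=0$ corresponds to $k=-\tau$; nothing is masked and $\Psi_{-\tau,\tau}(\bZ)=0$.
\item $1\le s\le\tau$ corresponds to $n+1-\tau\le k\le n$; the block is truncated at $n+1$.
\item $n+1\le s\le n+\tau$ corresponds to $1-\tau\le k\le 0$; the block is truncated at $1$.
\end{itemize}
No shift class is mismatched. Joint convexity of TV over the coupled index therefore gives $\rho_\tau(\mathrm{M}_{K,\tau}(\bZ))\le\overline\Psi_\tau(\bZ)$ outright, and the statement follows because $\Psi_{0,\tau}(\bZ)\ge 0$. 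You should drop the ``up to boundary terms'' hedge and the proposed reassignment to $k=0$; neither is needed.

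The paper instead rotates $(\star,\dots,\star,Z_{\tau+1},\dots,Z_{n+1},Z_{n+2},\dots,Z_{n+\tau+1})$. Here $Z_{n+2},\dots,Z_{n+\tau+1}$ are freshly sampled from the conditional law of $(Z_{n-\tau+2},\dots,Z_{n+1})$ given $(Z_1,\dots,Z_{n-\tau+1})$, evaluated at $(Z_{\tau+1},\dots,Z_{n+1})$. Its non-dummy block is therefore the shifted sequence $\bZ_{(\tau+1):(n+\tau+1)}$, which is within $\Psi_{0,\tau}(\bZ)$ of $\bZ$ in total variation. Swapping that block back for $\bZ$ is exactly where the extra $\Psi_{0,\tau}(\bZ)$ term comes from.

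Your construction uses $\bZ$ itself as the non-dummy block. It therefore avoids both the conditional sampling (and the regular conditional distributions it presupposes) and the $\Psi_{0,\tau}$ penalty, and is simpler and strictly sharper. The penalty vanishes for stationary $\bZ$, so Corollary~\ref{cor:switch} is unaffected. For non-stationary $\bZ$, however, combining your bound with Theorem~\ref{thm:cyc_stb} yields a guarantee with one fewer term.
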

We prove Proposition~\ref{prop:cyc_swt} in Section~\ref{sec:pf-prop3}. Note that if $\bZ$ is stationary, then $\Psi_{0,\tau}(\bZ) = 0$. Moreover, the results of~\cite{barber2025predictive} show that in this case\footnote{When $\bZ$ is stationary, note that our $\overline{\Psi}(\bZ)$ is equal to that of~\citet{barber2025predictive} up to a multiplicative normalization factor.} $\frac{n +\tau + 1}{n} \cdot \overline{\Psi}_{\tau}(\bZ) \leq 2\beta(\tau)$, with no dependence on the conditional mixing coefficient.
Combining these pieces then immediately yields the following corollary when $\bZ$ is stationary and mixing.
\begin{corollary} \label{cor:switch}
Suppose $\Alg$ is symmetric and $K\sim\mathsf{Unif}(\{-\tau,\ldots,n\})$ is drawn independently of the data. Also suppose that the pair $(\score, \Alg)$ is OOS stable with respect to $\bZ$ with parameters $(\nu, t, \tau)$, OOS stable with respect to $\mathrm{M}_{K,\tau}(\bZ)$ with parameters $(\nu, t, \tau+1)$, and OOS stable with respect to $\mathrm{M}_{K,\tau+1}(\bZ)$ with parameters $(\nu, t, \tau)$. If $\bZ$ is stationary and $\beta$-mixing, then
\begin{align*}
\mathbb{P}\left(Y_{n+1}\in C^{\jk,3t}_{\level}(\bX_{n+1})\right) \geq 1-\alpha-5\sqrt{\nu+\frac{2\tau+2}{n}+ 2 \beta(\tau)}.  
\end{align*}
\end{corollary}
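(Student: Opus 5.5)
Corollary~\ref{cor:switch} follows by plugging bounds into Theorem~\ref{thm:cyc_stb}. The stability hypotheses of the corollary coincide with those of Theorem~\ref{thm:cyc_stb}, so we start from
\[
\mathbb{P}(Y_{n+1}\in C_{\alpha}^{\jk,3t}(\bX_{n+1}))
\geq 1-\alpha-3\sqrt{\nu+\tfrac{2\tau+2}{n}+\rho_{\tau}(\mathrm{M}_{K,\tau}(\bZ))}-2\sqrt{\nu+\tfrac{\tau}{n}+\tfrac{n+\tau+1}{n} \overline\Psi_\tau(\bZ)}.
\]
It then suffices to show that each of the two quantities under the square roots is at most $\nu+\frac{2\tau+2}{n}+2\beta(\tau)$. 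The final bound then reads $1-\alpha-(3+2)\sqrt{\nu+\frac{2\tau+2}{n}+2\beta(\tau)}$.

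For the second term, we invoke the cited result of \citet{barber2025predictive}, as recorded after Proposition~\ref{prop:cyc_swt}: for stationary $\bZ$, $\frac{n+\tau+1}{n}\overline{\Psi}_\tau(\bZ)\le 2\beta(\tau)$. Since $\frac{\tau}{n}\le\frac{2\tau+2}{n}$, the second square root is dominated as required.

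For the first term, we use \eqref{eq:switch-embedding-2} of Proposition~\ref{prop:cyc_swt}, which gives $\rho_\tau(\mathrm{M}_{K,\tau}(\bZ))\le \overline{\Psi}_\tau(\bZ)+\Psi_{0,\tau}(\bZ)$. By stationarity, $\Psi_{0,\tau}(\bZ)=0$. To see this, note that for $k=0$ the two sequences are $(Z_{\tau+1},\ldots,Z_{n+1})$ and $(Z_1,\ldots,Z_{n-\tau+1})$, which are shifts of each other. Then $\overline\Psi_\tau(\bZ)\le \frac{n+\tau+1}{n}\overline\Psi_\tau(\bZ)\le 2\beta(\tau)$, so $\rho_\tau(\mathrm{M}_{K,\tau}(\bZ))\le 2\beta(\tau)$ and the first square root is also dominated.

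The only potentially delicate point is the imported switch-coefficient bound. Our $\Delta^0,\Delta^1$ are defined with lag $\tau$ and indices $k\in\{-\tau,\ldots,n\}$, and these differ from the conventions in \citet{barber2025predictive} by a normalization. So I would check that each $\Psi_{k,\tau}$ is bounded by $2\beta(\tau)$ for the interior $k$ and vanishes for the boundary $k$, under stationarity. The check goes as follows. For interior $k$, replace the gap-separated blocks in both $\Delta^0$ and $\Delta^1$ by independent copies, each step costing at most $\beta(\tau)$. The resulting product laws agree by stationarity. Averaging over the $n+\tau+1$ values of $k$ yields the stated normalization. Everything else is direct substitution.
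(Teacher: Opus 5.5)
Your proposal is correct and follows the paper's route. You plug Eq.~\eqref{eq:switch-embedding-2} of Proposition~\ref{prop:cyc_swt}, the vanishing of $\Psi_{0,\tau}(\bZ)$ under stationarity, and the bound $\frac{n+\tau+1}{n}\overline{\Psi}_\tau(\bZ)\le 2\beta(\tau)$ into Theorem~\ref{thm:cyc_stb}, then dominate both square roots by $\sqrt{\nu+\frac{2\tau+2}{n}+2\beta(\tau)}$. Your sketch of the imported bound is also sound and makes explicit what the paper only cites: each interior $\Psi_{k,\tau}\le 2\beta(\tau)$ because both $\Delta^0_{k,\tau}$ and $\Delta^1_{k,\tau}$ leave a gap of exactly $\tau$, and the boundary terms vanish, so the normalized average is at most $\frac{n-\tau}{n}\cdot 2\beta(\tau)$.
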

Corollary~\ref{cor:switch} should be viewed as complementary to Corollary~\ref{cor:cyc}. While Corollary~\ref{cor:cyc} is useful for mixing sequences that are also approximately Markovian (having small \(\beta^*(\tau)\) and $\beta(\tau)$ for an appropriately small $\tau$), Corollary~\ref{cor:switch} does not require any Markovian structure---the \(\beta\)-mixing condition alone suffices. However, we now need an additional stability assumption with respect to the masked sequences \(\mathrm{M}_{K,\tau}(\bZ)\) and \(\mathrm{M}_{K,\tau+1}(\bZ)\). 

\section{Experiments}
\label{sec:experiments}
In this section, we compare LWO with split CP (which, as discussed before, offers robustness to temporal dependence but may lead to larger prediction sets), and the vanilla jackknife (which may lead to undercoverage in a time series setting) on both synthetic and real data. These two baselines serve to illustrate the main trade-off between coverage and interval length, and our goal is to see if LWO navigates this trade-off favorably.\footnote{Code for our experiments is available at \url{https://github.com/scotthanyang/LWO}.} Additional experiments (including comparisons with a broader class of predictive inference methods) can be found in Appendix~\ref{sec:add_expt}.

\subsection{Simulated data}

We use our simulation setup from Section~\ref{sec:motivating}, which involves multidimensional responses. We also include a simulation experiment with univariate responses in Appendix~\ref{App:add_exp}.

\subsubsection{Data generation}

    As in the setup of Section~\ref{sec:motivating}, we generate our covariates as a sequence of random vectors \(X_t \in \mathbb{R}^d\) according to
    \[
    X_t = \omega_t + \omega_{t-1},
    \qquad
    \omega_t \stackrel{\mathrm{i.i.d.}}{\sim} \mathcal{N}(0,I_d),
    \]
    in dimension \(d=50\). The response at time $t$ is given by
$
    Y_t = X_{t+1}.
$
We consider the memoryless setting with $\bX_t = X_t$.
The prediction problem is thus to use the current observation \(X_t\) to forecast the next vector \(X_{t+1}\).

Our MA(1) process $\bZ$ exhibits mild dependence. It can be verified that $\rho_\tau(\bZ) = 0$ for all $\tau \geq 2$, and its \(\beta\)-mixing coefficients satisfy
$\beta(\tau)=0$ for all $\tau\ge 2$.

\subsubsection{Predictors and baseline methods}
\label{predictors}
We compare five different predictors. None of the predictors has an internal recurrent or sequential state, and the only memory (if any) enters through the covariates. 
\begin{itemize}
    \item \textbf{Ridge:} Linear regression with \(\ell^2_2\)-regularization parameter \(\lambda=1.0\). As discussed in Eq.~\eqref{eq:ridge-stable}, this predictor satisfies OOS stability in the sense of Definition~\ref{def:stab}.
    \item \textbf{KNN:} \(k\)-nearest-neighbor regression with \(k=10\). As discussed in Eq.~\eqref{eq:NN-stable}, this predictor is also OOS stable in the sense of Definition~\ref{def:stab}.
    \item \textbf{Random Forest (RF):} Random forest with \(10\) trees. Each tree has maximum depth \(5\) and minimum leaf size \(2\). While individual decision trees can be unstable under small changes to the training set, random forests as an ensemble method improves stability.
    \item \textbf{Kernel Regression (KR):} Gaussian kernel regression with bandwidth \(0.5\). KR ought to be stable when the kernel weights are sufficiently spread out.
    \item \textbf{MLP:} A multilayer perceptron with one hidden layer of width \(20\) and ReLU activations. We include MLP as a flexible nonlinear predictor, but this predictor is not verifiably stable in the sense of Definition~\ref{def:stab}.

\end{itemize}

We use the target coverage level \(1-\alpha=0.9\) and set \(n=200\). We compare the following predictive inference procedures:
\begin{itemize}
    \item \textbf{Split CP:} standard split conformal prediction. We split the sample evenly into a training set (50\%) and a calibration set (50\%).
    \item \textbf{Jackknife:} the vanilla leave-one-out procedure.
    \item \textbf{Leave-a-Window-Out (LWO):} our proposed method with window size \(\tau=5\).
\end{itemize}

\subsubsection{Evaluation}
We report empirical coverage and mean prediction-set size in each setting, averaged over \(500\) independent trials. Prediction set size is measured by the average radius of the Euclidean prediction ball. In the figures, bars show trial averages and error bars indicate \(\pm 1\) standard error over replications.

\begin{figure}
    \centering
    \includegraphics[width=0.8\linewidth]{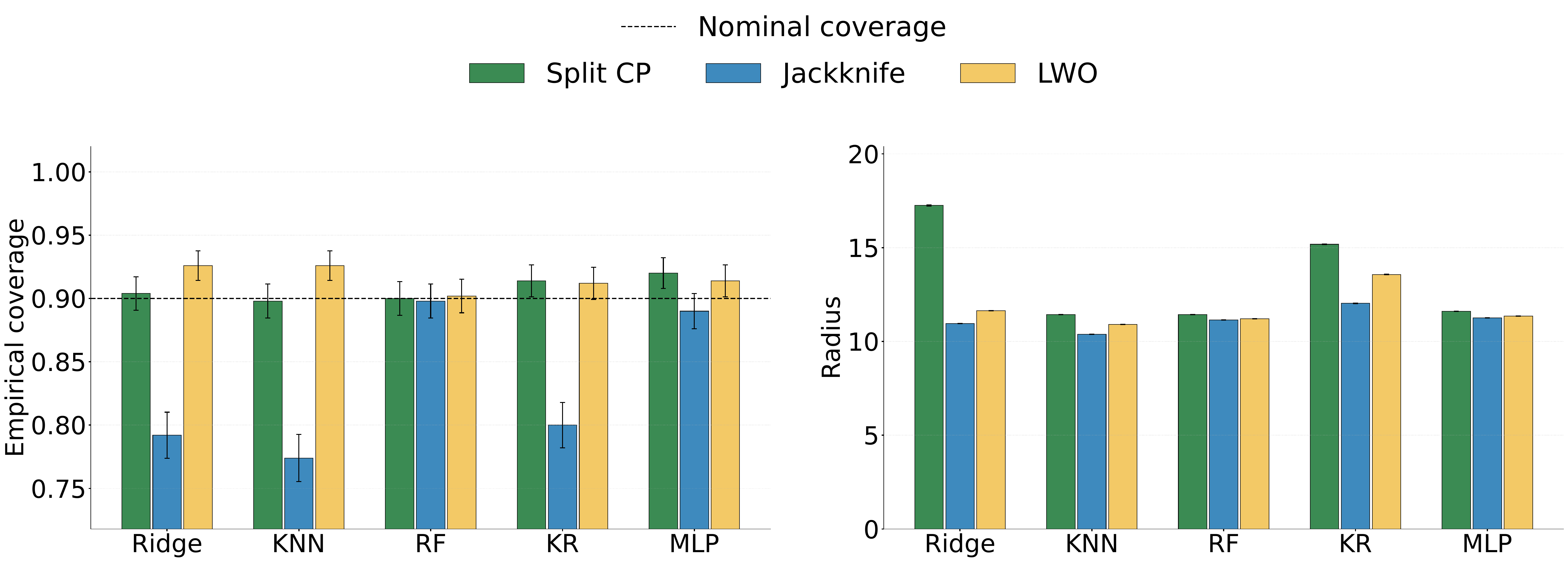}
\caption{Empirical performance on the multidimensional MA(1) process across five base predictors. 
(Left) Empirical coverage of Split CP, Jackknife, and LWO, with the dashed horizontal line marking the nominal \(90\%\) coverage level. 
(Right) Average prediction region radius for the same methods and predictors. 
Bars show means across $500$ repeated trials, and error bars indicate \(\pm 1\) standard error.}
    \label{fig:ma1_results}
\end{figure}

Figure~\ref{fig:ma1_results} shows that LWO is well calibrated on the MA(1) process, with empirical coverage close to the nominal \(0.9\) level across all five predictors. By contrast, the vanilla jackknife undercovers for several predictors, especially Ridge, KNN, and KR. Split CP exhibits close-to-nominal coverage for all five predictors, but returns significantly larger prediction sets than the other methods. Overall, Figure~\ref{fig:ma1_results} corroborates our theory: LWO preserves near-nominal coverage (comparable to split CP) while improving efficiency.

To develop some intuition for the coverage of Jackknife and LWO in this example, note that in the vanilla jackknife, there is an asymmetry between the test point and the leave-one-out proxy test points. At test time, predicting \(Y_{n+1}=X_{n+2}=\omega_{n+1}+\omega_{n+2}\) is intrinsically ``one-sided": the training data contains information about \(\omega_{n+1}\), but does not contain information about the new innovation \(\omega_{n+2}\). In contrast, when an interior point \(t\) is used as a proxy test point in the vanilla jackknife, the leave-one-out training set still contains observations on both sides of \(t\), carrying information about both $\omega_{t}$ and $\omega_{t + 1}$ when predicting \(Y_t=X_{t+1}=\omega_t+\omega_{t+1}\). Thus, the jackknife calibrates on easier, ``two-sided" prediction problems (leading to small prediction regions) and so undercovers at test time.
LWO breaks this asymmetry by removing the local future window around each proxy test point, preventing the calibration score from using observations that reveal the future innovation. As a result, its calibration residuals better mimic the information structure of the residual at the final test point. 

\subsection{Real data}
In this section, we evaluate LWO on two publicly available real time-series datasets.

\subsubsection{Datasets}

We consider the following two benchmark datasets.
\begin{itemize}
    \item \textbf{Traffic.} This dataset contains hourly road occupancy rates (values between \(0\) and \(1\)) collected from freeway sensors in the San Francisco Bay Area during 2015--2016.

    \item \textbf{Solar Energy.} This dataset contains solar power production measurements sampled every 10 minutes from \(137\) photovoltaic plants in Alabama during 2006.
\end{itemize}

On both datasets, we construct a univariate forecasting task by choosing a single coordinate of the multivariate data at each time index. Let \(W_t\) denote the resulting scalar time series. We consider one-step-ahead prediction with lag length \(L=24\), so that the covariate at time \(t\) is
\[
\bX_t = (W_{t-L+1},\ldots,W_t),
\]
and the response is
\[
Y_t = W_{t+1}.
\]
Thus, each predictor takes the most recent \(24\) observations as input and predicts the next one.

\begin{figure}[t]
\centering

\includegraphics[width=0.8\linewidth]{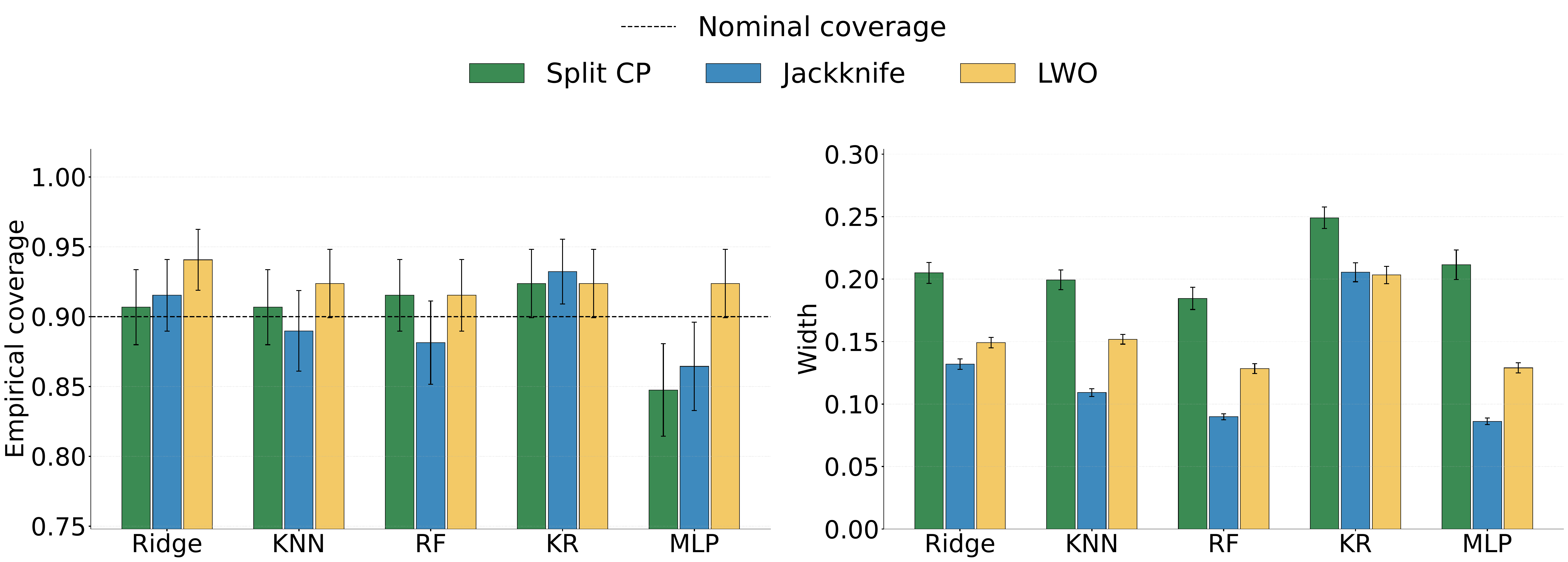}

\vspace{0.8em}

\includegraphics[width=0.8\linewidth]{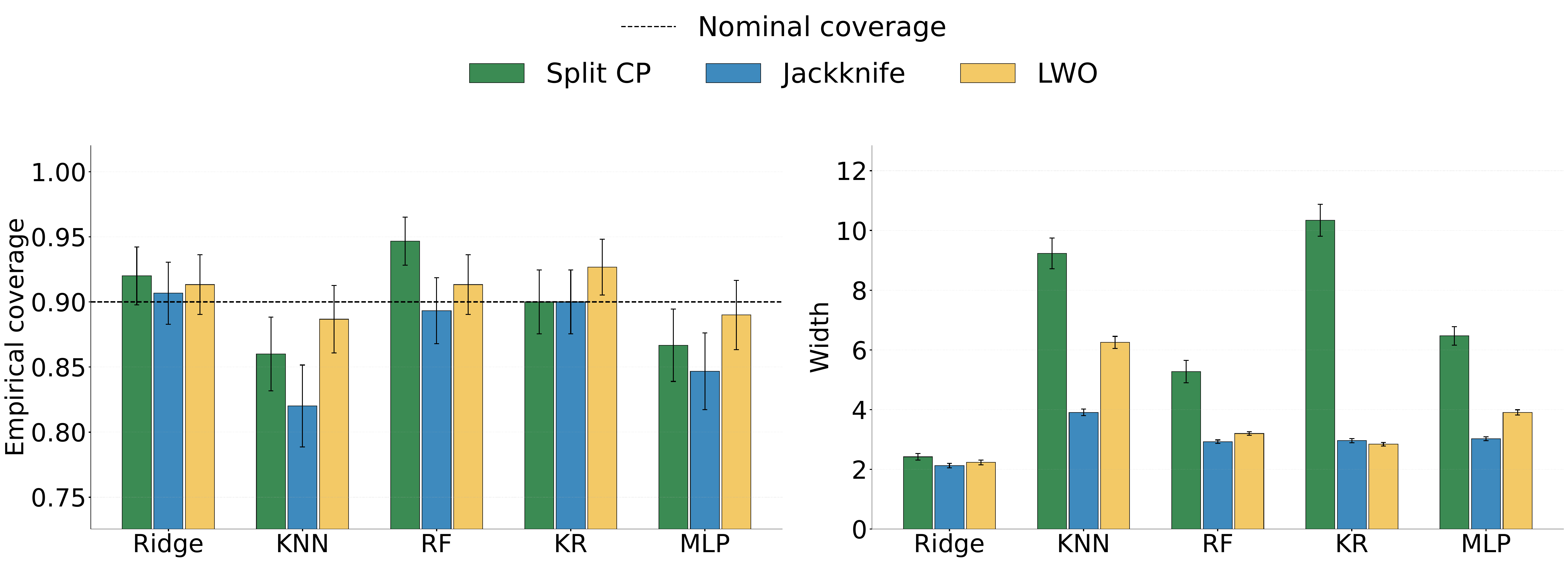}

\caption{Empirical performance on two real data benchmarks. (Top) Traffic dataset. (Bottom) Solar Energy dataset. In each row, the left panel reports empirical coverage, and the right panel reports average prediction-set size across base predictors. Bars show means across repeated trials, and error bars indicate \(\pm 1\) standard error. The dashed horizontal line marks nominal \(90\%\) coverage.}
\label{fig:real_data_results}
\end{figure}

To obtain multiple replicates of data from a single long series, we partition each dataset into disjoint local trajectories (``chunks''). Each chunk provides one forecasting problem of size \(n\), and consecutive chunks are separated by a gap of \(48\) time steps. We use $n=100$ for the Traffic dataset and $n=300$ for the Solar Energy dataset.
For both datasets, we fix the LWO window length at \(\tau=20\). These choices (\(L=24\), \(\tau=20\), and chunk gap \(48\)) are held fixed across all predictors and datasets, and we do not tune them separately for different methods.

\subsubsection{Predictors and baseline methods}
We compare the same five predictors (with the same hyperparameters) and three baseline methods described in Section~\ref{predictors}. As described above, for our LWO method, we use a window size $\tau=20$ for both real datasets.

\subsubsection{Evaluation} 

The real-data experiments in Figure~\ref{fig:real_data_results} reveal that Jackknife can still undercover when temporal dependence interacts unfavorably with the predictor. This failure is most pronounced on the Solar Energy dataset with KNN, where Jackknife falls well below nominal coverage, while LWO recovers coverage close to the target. A milder version of the same phenomenon appears on the Traffic dataset, where Jackknife is generally competitive but can undercover for predictors such as DT. In contrast, LWO is more consistent across predictors, typically maintaining near-nominal coverage.

Figure~\ref{fig:real_data_results} also shows the expected efficiency trade-off. Split CP is often well calibrated, but it tends to produce wider intervals because it trains on only part of the data. LWO avoids much of this cost of data splitting while still guarding against the local-dependence failures that affect leave-one-out methods. Overall, these real-data results suggest that LWO transfers beyond the synthetic examples: it is generally more robust than Jackknife under dependence, and often more efficient than Split CP.

\section{Proofs of main results} \label{sec:proofs}

In this section, we present proofs of all our main results. Before proceeding, we extend the definition of the masking operation $\mathrm{M}_{k,\tau}(\mathbf{w})$ applied to a vector $\mathbf{w}$ of length $m$ to indices \(k \notin \{-\tau,\ldots,m-1\}\), by interpreting such indices modulo \(m+\tau\). More precisely, if \(k \notin \{-\tau,\ldots,m-1\}\) then define \(k' := (k + \tau) \bmod (m+\tau)-\tau\), and define the corresponding masking operation as 
\begin{align} \label{eq:masking-convention}
\mathrm{M}_{k,\tau}(\mathbf{w}) := \mathrm{M}_{k',\tau}(\mathbf{w}),
\end{align}
where the RHS should be interpreted according to Definition~\ref{def:mask}, since \(k' \in \{-\tau,\ldots,m-1\}\). 

\subsection{Proof of Proposition~\ref{prop:cyc}} \label{sec:pf-prop1}

Let $\bZtil \in \mathcal{S}(n + \tau + 1)$ be the cyclically exchangeable sequence into which $\bZ$ embeds, so that $\bZ \overset{\textnormal{d}}{=} \bZtil_{1:(n+1)}$. 
Define the vector of ``circular scores" $s^{\circlearrowright} \in \mathbb{R}^{n + \tau + 1}$, as follows. For $k\in[n+\tau+1]$, write
    \[s^{\circlearrowright}_k = \score(\widetilde{Y}_k,\Alg[\widetilde{Z}_{k+\tau+1},\dots,\widetilde{Z}_{k-1}](\widetilde{\bX}_k))\]
    with the convention that the index that follows $n+\tau+1$ is $1$. In words, one should think of first placing the process $\bZtil$ on a circle of length $n + \tau + 1$. To evaluate the $k$-th score, we leave out the \emph{$\tau$ points appearing after index $k$ on the circle} and use the remaining points to train the predictor. See Figure~\ref{fig:score-circle}(left) for an illustration.
    Note that by construction, we have \[
s^{\circlearrowright}_{n+1} = \score(\widetilde{Y}_{n+1},\widehat{f}(\widetilde{\bX}_{n+1})) \overset{\textnormal{d}}{=} \score(Y_{n+1},\widehat{f}(\bX_{n+1})),
\]
where equality in distribution holds since $\bZ \overset{\textnormal{d}}{=} \bZtil_{1:(n + 1)}$.

\begin{figure}[t]
\centering
\resizebox{0.98\linewidth}{!}{%
\begin{tikzpicture}[>=Latex,font=\small]
\def\R{2.15cm}
\def\Rlab{2.72cm}

\tikzset{
  basept/.style={black,fill=black},
  maskpt/.style={draw=red!85!black,fill=red!85!black},
  evalpt/.style={draw=blue!85!black,fill=blue!65!white,line width=0.9pt},
  circarc/.style={draw=black,line width=0.9pt},
  localarc/.style={draw=red!85!black,line width=1.2pt,dash pattern=on 2.6pt off 1.1pt}
}

\begin{scope}[xshift=-5.9cm]
    \node[font=\small\bfseries] at (0,3.18) {Before rotation: score at \(\widetilde Z_k\)};

    \draw[circarc] (180:\R) arc[start angle=180,end angle=0,radius=\R];
    \draw[circarc] (-75:\R) arc[start angle=-75,end angle=-180,radius=\R];

    \draw[localarc] (0:\R) arc[start angle=0,end angle=-75,radius=\R];

    \foreach \a in {180,165,150,135,120,105,90,75,60,45,30,15,-75,-90,-105,-120,-135,-150,-165} {
        \fill[basept] (\a:\R) circle (1.7pt);
    }

    \foreach \a in {-15,-30,-45,-60} {
        \fill[maskpt] (\a:\R) circle (1.7pt);
    }

    \node[anchor=east] at (160:2.18cm) {\(\widetilde Z_{n+1}\)};
    \node at (135:2.62cm) {\(\cdots\)};
    \node at (108:2.6cm) {\(\widetilde Z_{n+\tau+1}\)};
    \node at (90:2.6cm) {\(\widetilde Z_1\)};
    \node at (45:2.62cm) {\(\cdots\)};
    \node[anchor=west] at (0:2.2cm) {\(\widetilde Z_k\)};
    \node[text=red!85!black] at (-15:2.7cm) {\(\widetilde Z_{k+1}\)};
    \node[text=red!85!black] at (-35:2.65cm) {\(\cdots\)};
    \node[text=red!85!black] at (-55:2.65cm) {\(\widetilde Z_{k+\tau}\)};
    \node at (-70:2.65cm) {\(\widetilde Z_{k+\tau+1}\)};
    \node at (-135:2.65cm) {\(\cdots\)};

    \fill[white] (0:\R) circle (3.0pt);
    \filldraw[evalpt] (0:\R) circle (2.5pt);
\end{scope}

\draw[-Latex,thick] (-1.12,0.16) -- (1.12,0.16);
\node[align=center,font=\small] at (0,0.86) {Equivalent by\\ cyclic rotation\\[-0.1em]\(\circlearrowright\)};

\begin{scope}[xshift=5.9cm]
    \node[font=\small\bfseries] at (0,3.18) {After rotation: score at \(\widetilde Z_{n+1}\)};

    \draw[circarc] (90:\R) arc[start angle=90,end angle=-195,radius=\R];

    \draw[localarc] (165:\R) arc[start angle=165,end angle=90,radius=\R];

    \foreach \a in {90,75,60,45,30,15,0,-15,-30,-45,-60,-75,-90,-105,-120,-135,-150,-165,-180} {
        \fill[basept] (\a:\R) circle (1.7pt);
    }

    \foreach \a in {150,135,120,105} {
        \fill[maskpt] (\a:\R) circle (1.7pt);
    }

    \node[anchor=east] at (160:2.22cm) {\(\widetilde Z_{n+1}\)};
    \node[text=red!85!black] at (150:2.7cm) {\(\widetilde Z_{n+2}\)};
    \node at (130:2.62cm) {\(\cdots\)};
    \node[text=red!85!black] at (108:2.6cm) {\(\widetilde Z_{n+\tau+1}\)};
    \node at (90:2.6cm) {\(\widetilde Z_1\)};
    \node at (45:2.62cm) {\(\cdots\)};
    \node at (0:2.62cm) {\(\cdots\)};
    \node at (-45:2.62cm) {\(\cdots\)};
    \node at (-90:2.62cm) {\(\cdots\)};
    \node at (-135:2.62cm) {\(\cdots\)};

    \fill[white] (165:\R) circle (3.0pt);
    \filldraw[evalpt] (165:\R) circle (2.5pt);
\end{scope}

\begin{scope}[yshift=-3.85cm]
    \fill[white] (-5.6,0) circle (3.0pt);
    \filldraw[evalpt] (-5.6,0) circle (2.3pt);
    \node[anchor=west,font=\small] at (-5.25,0) {Evaluation point};

    \fill[basept] (-1.45,0) circle (1.5pt);
    \node[anchor=west,font=\small] at (-1.10,0) {Training point};

    \draw[localarc] (1.95,0) -- +(0.85,0);
    \fill[maskpt] (2.38,0) circle (1.6pt);
    \node[anchor=west,font=\small,red!85!black] at (3.10,0) {LWO block};
\end{scope}

\end{tikzpicture}%
}
\caption{
Illustration of the circular embedding used to compute the scores \(s^{\circlearrowright}\).
We place the cyclically exchangeable sequence
\(\widetilde{\bZ}=(\widetilde Z_1,\ldots,\widetilde Z_{n+\tau+1})\)
on a circle by gluing the end of the sequence back to its beginning, so that
\(\widetilde Z_{n+\tau+1}\) is followed by \(\widetilde Z_1\).
Black points indicate observations used for training, the blue point is the evaluation point, and the omitted LWO block is indicated by a red dashed arc together with red points marking the specific omitted observations.
(Left) To compute \(s^{\circlearrowright}_k\), the score is evaluated at \(\widetilde Z_k\), and the following \(\tau\) points on the circle are omitted.
(Right) By cyclic exchangeability of \(\widetilde{\bZ}\), this is equivalent to evaluating the score at \(\widetilde Z_{n+1}\), with the LWO block shifted to
\(\widetilde Z_{n+2},\ldots,\widetilde Z_{n+\tau+1}\).
}
\label{fig:score-circle}
\end{figure}

    By cyclic exchangeability of $\bZtil$ and by symmetry of $\Alg$, the scores $s^{\circlearrowright}_1,\dots,s^{\circlearrowright}_{n+\tau+1}$ are exchangeable, and consequently,
    \[\mathbb{P}\left(s^{\circlearrowright}_{n+1}\leq \Quantile_{1 - \alpha'} (s^{\circlearrowright}_1, \ldots,s^{\circlearrowright}_{n+\tau+1})\right)\geq 1-\alpha',\]
    for any choice of $\alpha'$. From Lemma~\ref{lem:quantile-subset} in Appendix~\ref{app:ancillary}, it holds that
    \[\Quantile_{1 - \alpha'} (s^{\circlearrowright}_1, \ldots, s^{\circlearrowright}_{n+\tau+1}) \leq \Quantile_{(1-\alpha')\left(1+\frac{\tau+1}{n}\right)} (s^{\circlearrowright}_1, \ldots,s^{\circlearrowright}_{n}),\]
    so that
     \begin{align} \label{eq:cyclic-step}
     \mathbb{P}\left(s^{\circlearrowright}_{n+1}\leq \Quantile_{(1-\alpha')\left(1+\frac{\tau+1}{n}\right)} (s^{\circlearrowright}_1, \ldots, s^{\circlearrowright}_{n})\right)\geq 1-\alpha'.
     \end{align}

It remains to relate the quantile of the scores $s^{\circlearrowright}_1, \ldots, s^{\circlearrowright}_{n}$ to the quantile of the scores $s_1, \ldots, s_{n}$ computed by the LWO algorithm when applied to $\bZ$. Since $\bZ \overset{\textnormal{d}}{=} \bZtil_{1:(n+1)}$, we can think of $s_k$ computed with $\Alg$ applied to the masked sequence $\mathrm{M}_{k-1,\tau+1}(\widetilde{Z}_1,\ldots,\widetilde{Z}_n)$.
On the other hand, the scores $s^{\circlearrowright}_k$ are computed with $\Alg$ applied to $(\widetilde{Z}_{k+\tau+1},\dots, \widetilde{Z}_{n + \tau + 1}, \widetilde{Z}_1, \ldots, \widetilde{Z}_{k-1})$. By cyclic exchangeability and symmetry, we therefore have that for all $k \in [n]$,
\begin{align} \label{eq:score-dist-equal}
&|s_k-s^{\circlearrowright}_k| \notag \\
&\overset{\textnormal{d}}{=}
\Bigl|\score\bigl(\widetilde{Y}_{k},\Alg[\mathrm{M}_{k-1,\tau+1}(\widetilde{Z}_1,\ldots,\widetilde{Z}_n)](\widetilde{\bX}_{k})\bigr) \notag \\
& \qquad - \score\bigl(\widetilde{Y}_{k},\Alg[\widetilde{Z}_1,\ldots,\widetilde{Z}_{k-1},\widetilde{Z}_{k+\tau+1},\ldots,\widetilde{Z}_{n+\tau+1}](\widetilde{\bX}_{k})\bigr)\Bigr| \notag \\
&\overset{\textnormal{d}}{=}\;
\Bigl|\score\bigl(\widetilde{Y}_{n+1},\Alg[\mathrm{M}_{n-k-\tau,\tau+1}(\widetilde{Z}_1,\ldots,\widetilde{Z}_n)](\widetilde{\bX}_{n+1})\bigr)
 - \score\bigl(\widetilde{Y}_{n+1},\Alg[\widetilde{Z}_1,\ldots,\widetilde{Z}_n](\widetilde{\bX}_{n+1})\bigr)\Bigr| \\
&\overset{\textnormal{d}}{=}\;
\Bigl|\score\bigl(Y_{n+1},\Alg[\mathrm{M}_{n-k-\tau,\tau+1}(Z_1,\ldots,Z_n)](\bX_{n+1})\bigr)
 - \score\bigl(Y_{n+1},\Alg[Z_1,\ldots,Z_n](\bX_{n+1})\bigr)\Bigr|. \label{eq:last-dist}
\end{align}
Eq.~\eqref{eq:last-dist} holds because $\bZ \overset{\textnormal{d}}{=} \bZtil_{1:(n + 1)}$; let us explain the relation~\eqref{eq:score-dist-equal} 
in more detail. Note that we have 
\[\score\bigl(\widetilde{Y}_{n+1},\Alg[\widetilde{Z}_1,\ldots,\widetilde{Z}_n](\widetilde{\bX}_{n+1})\bigr) \overset{\mathrm{d}}{=} \score\bigl(\widetilde{Y}_{k},\Alg[\widetilde{Z}_1,\ldots,\widetilde{Z}_{k-1},\widetilde{Z}_{k+\tau+1},\ldots,\widetilde{Z}_{n+\tau+1}](\widetilde{\bX}_{k})\bigr)
\]
as illustrated in Figure~\ref{fig:score-circle}(right), and equality in distribution of the first term in~\eqref{eq:score-dist-equal} follows by a similar argument (where there are two red blocks left out instead of just one).

Now by assumption, the $(\score, \Alg)$ pair satisfies OOS stability with respect to $\bZ$ with parameters $(\nu,t,\tau+1)$. Thus, the RHS of Eq.~\eqref{eq:last-dist} averaged over $k \in [n]$ is bounded above by $\nu$. Consequently, we have 
\begin{align} \label{eq:score-avg-prop1}
\frac{1}{n}\sum_{k=1}^{n}\mathbb{P}\bigl(|s^{\circlearrowright}_k-s_{k}|>t\bigr)\le \nu.
\end{align}
Applying Lemma~\ref{lem:quantile-set} from Appendix~\ref{app:ancillary} then yields
\begin{align*}
\mathbb{P}\left(\Quantile_{(1-\alpha')\left(1+\frac{\tau+1}{n}\right)} (s^{\circlearrowright}_1, \ldots, s^{\circlearrowright}_{n}) \leq \Quantile_{1-\alpha} (s_1, \ldots, s_{n})+t\right)\ge 1-\sqrt{\nu},
\end{align*}
if we define $\alpha'$ to satisfy
\[1-\alpha' = \frac{1-\alpha - \sqrt{\nu}}{1+\frac{\tau+1}{n}}.\]
Putting together the pieces, we see that the coverage is at least $\frac{1-\alpha - \sqrt{\nu}}{1+\frac{\tau+1}{n}} - \sqrt{\nu} \geq 1-\alpha - 2\sqrt{\nu} - \frac{\tau+1}{n}$, as claimed. 
\qed

\subsection{Proof of Theorem~\ref{thm:cyc}} \label{sec:pf-thm1}

First, we repeat the steps of the proof of Proposition~\ref{prop:cyc} for an arbitrary $\bZtil \in \mathcal{S}(n + \tau + 1)$, noting that Eq.~\eqref{eq:cyclic-step} still holds. Reproducing it below, we have
     \begin{align*}
     \mathbb{P}\left(s^{\circlearrowright}_{n+1}\leq \Quantile_{(1-\alpha')\left(1+\frac{\tau+1}{n}\right)} (s^{\circlearrowright}_1, \ldots, s^{\circlearrowright}_{n})\right)\geq 1-\alpha'.
     \end{align*}
     The difference from this point on is in how we relate the scores $s^{\circlearrowright}_1, \ldots, s^{\circlearrowright}_{n}$ to the scores $s_1, \ldots, s_{n}$ computed by the LWO algorithm when applied to $\bZ$. Define $\widetilde{s}_k$ as the LWO score computed using the predictor trained on $\mathrm{M}_{k-1,\tau+1}(\widetilde{Z}_1,\ldots,\widetilde{Z}_n)$. By cyclic exchangeability of $\bZtil$ and symmetry of $\Alg$, we have that for all $k \in [n]$,
\begin{align*}
|s^{\circlearrowright}_k-\widetilde{s}_k| \;&\overset{\textnormal{d}}{=}\;
\Bigl|\score\bigl(\widetilde{Y}_{n+1},\Alg[\mathrm{M}_{n-k-\tau,\tau+1}(\widetilde{Z}_1,\ldots,\widetilde{Z}_n)](\widetilde{\bX}_{n+1})\bigr)
 - \score\bigl(\widetilde{Y}_{n+1},\Alg[\widetilde{Z}_1,\ldots,\widetilde{Z}_n](\widetilde{\bX}_{n+1})\bigr)\Bigr|.
\end{align*}
Now, for each $k \in [n]$, we may bound the tail probability of the RHS above as
\begin{equation*}
\begin{aligned}
&\mathbb{P}\Bigg\{\Big|\score(\widetilde{Y}_{n + 1}, \Alg[\mathrm{M}_{n-k-\tau,\tau+1}(\widetilde{Z}_1,\ldots,\widetilde{Z}_n)](\widetilde{\bX}_{n + 1}))-\score(\widetilde{Y}_{n + 1}, \Alg[\widetilde{Z}_1,\ldots, \widetilde{Z}_{n}](\widetilde{\bX}_{n + 1}))\Big|\ge t\Bigg\}\\
&\le 
\mathbb{P}\Bigg\{\Big|\score(Y_{n + 1}, \Alg[\mathrm{M}_{n-k-\tau,\tau+1}(Z_1,\ldots,Z_n)](\bX_{n + 1}))-\score(Y_{n + 1}, \Alg[Z_1,\ldots, Z_{n}](\bX_{n + 1}))\Big|\ge t\Bigg\} \\
&\quad+\mathrm{d}_{\mathrm{TV}}(\bZ,\widetilde{\bZ}_{1:(n+1)}),
\end{aligned}
\end{equation*}
where we replace $\widetilde{\bZ}_{1:(n+1)}$ in the probability by $\bZ$ and pay an additive price of the TV distance between these two sequences. Furthermore, by assumption, the pair $(\score,\Alg)$ is OOS stable with respect to $\bZ$ with parameters $(\nu,t,\tau+1)$. 
Putting this together with the above display, we have
$
\frac{1}{n}\sum_{k=1}^{n}\mathbb{P}\bigl(|s^{\circlearrowright}_k-\widetilde{s}_{k}|>t\bigr)\le \nu + \mathrm{d}_{\mathrm{TV}}(\bZ,\widetilde{\bZ}_{1:(n+1)}).
$
Applying Lemma~\ref{lem:quantile-set} from Appendix~\ref{app:ancillary} then yields
\begin{align*}
\mathbb{P}\left(\Quantile_{(1-\alpha')\left(1+\frac{\tau+1}{n}\right)} (s^{\circlearrowright}_1, \ldots, s^{\circlearrowright}_{n}) \leq \Quantile_{1-\alpha} (\widetilde{s}_1, \ldots, \widetilde{s}_{n})+t\right)\ge 1-\sqrt{\nu + \mathrm{d}_{\mathrm{TV}}(\bZ,\widetilde{\bZ}_{1:(n+1)})},
\end{align*}
if we define $\alpha'$ to satisfy
\[1-\alpha' = \frac{1-\alpha - \sqrt{\nu + \mathrm{d}_{\mathrm{TV}}(\bZ,\widetilde{\bZ}_{1:(n+1)})}}{1+\frac{\tau+1}{n}}.\]
Furthermore, by definition, we have $s^{\circlearrowright}_{n+1} = \widetilde{s}_{n + 1}$.
Putting together the pieces then yields
\begin{align*}
     \mathbb{P}\left(\widetilde{s}_{n+1}\leq \Quantile_{1 - \alpha} (\widetilde{s}_1, \ldots, \widetilde{s}_{n}) + t\right) &\geq \frac{1-\alpha - \sqrt{\nu + \mathrm{d}_{\mathrm{TV}}(\bZ,\widetilde{\bZ}_{1:(n+1)})}}{1+\frac{\tau+1}{n}} - \sqrt{\nu + \mathrm{d}_{\mathrm{TV}}(\bZ,\widetilde{\bZ}_{1:(n+1)})} \\
     &\geq 1-\alpha - 2\sqrt{\nu + \mathrm{d}_{\mathrm{TV}}(\bZ,\widetilde{\bZ}_{1:(n+1)})} - \frac{\tau+1}{n}.
     \end{align*}
Since the LHS of the above expression can be expressed as the expectation of a bounded function of $\bZtil_{1:(n+1)}$, an analogous bound holds for the same function applied to $\bZ$ up to an additive term $\mathrm{d}_{\mathrm{TV}}(\bZ,\widetilde{\bZ}_{1:(n+1)})$. Concretely, we have
\begin{align*}
     \mathbb{P}\left(s_{n+1}\leq \Quantile_{1 - \alpha} (s_1, \ldots, s_{n}) + t\right) \geq 1-\alpha - 2\sqrt{\nu + \mathrm{d}_{\mathrm{TV}}(\bZ,\widetilde{\bZ}_{1:(n+1)})} - \frac{\tau+1}{n} - \mathrm{d}_{\mathrm{TV}}(\bZ,\widetilde{\bZ}_{1:(n+1)}).
     \end{align*}
Taking an infimum of the TV over $\bZtil \in \mathcal{S}(n + \tau + 1)$ then yields the claimed result.
\qed

\subsection{Proof of Proposition~\ref{prop:cycliccoef}} \label{sec:pf-prop2}

We will use notation such as \(\mathbb{P}_{(Z_1,\ldots,Z_\tau)}\) to denote marginal distributions of subvectors of \(\bZ\), and notation such as
\[
\mathbb{P}_{(Z_{\tau+1},\ldots,Z_{2\tau})\mid (Z_1,\ldots,Z_\tau,Z_{2\tau+1},\ldots,Z_{3\tau})}
\]
to denote conditional distributions of subvectors of \(\bZ\).

First we define a distribution that satisfies cyclic exchangeability. Sample
\[
(Z_{n+2},\ldots,Z_{n+\tau+1})\mid \bZ
\sim
\mathbb{P}_{(Z_{\tau+1},\ldots,Z_{2\tau})\mid (Z_1,\ldots,Z_\tau,Z_{2\tau+1},\ldots,Z_{3\tau})}
\bigl(\cdot\mid Z_{n+2-\tau},\ldots,Z_{n+1},Z_1,\ldots,Z_\tau\bigr).
\]
Next, draw \(K\sim\mathsf{Unif}([n+\tau+1])\) independently of \((Z_1,\ldots,Z_{n+\tau+1})\), and define \(\widetilde{\bZ}\) as a rotation of \((Z_1,\ldots,Z_{n+\tau+1})\) by \(K\) positions:
\[
\widetilde{\bZ}
=
(Z_{K},\ldots,Z_{n+\tau+1},Z_1,\ldots,Z_{K-1}).
\]
By construction, we have \(\widetilde{\bZ} \in \mathcal{S}(n + \tau + 1)\) and by definition of the cyclic exchangeability coefficient, we have
$
\rho_\tau(\bZ)
\le
\mathrm d_{\mathrm{TV}}\bigl(\bZ,(\widetilde Z_1,\ldots,\widetilde Z_{n+1})\bigr).
$

For each \(k\in[n+\tau+1]\), let \(\mathbb{P}^{\{k\}}\) denote the distribution of
\begin{align} \label{eq:Zk-def}
\bZ^{\{k\}}=(Z_{k},\ldots,Z_{n+\tau+1},Z_1,\ldots,Z_{k-1}).
\end{align}
Then the distribution of \(\widetilde{\bZ}\) can be written as the mixture
\[
\widetilde{\bZ}
\sim
\frac{1}{n+\tau+1}\sum_{k=1}^{n+\tau+1}\mathbb{P}^{\{k\}}.
\]
Therefore, since total variation distance is convex in each argument (in the space of measures), we have
\[
\mathrm d_{\mathrm{TV}}\bigl(\bZ,(\widetilde Z_1,\ldots,\widetilde Z_{n+1})\bigr)
\le
\frac{1}{n+\tau+1}
\sum_{k=1}^{n+\tau+1}
\mathrm d_{\mathrm{TV}}\bigl(\bZ,(Z^{\{k\}}_1,\ldots,Z^{\{k\}}_{n+1})\bigr).
\]
Consequently,
\begin{align*}
\rho_\tau(\bZ)
&\le
\frac{1}{n+\tau+1}
\sum_{k=1}^{n+\tau+1}
\mathrm d_{\mathrm{TV}}\bigl(\bZ,(Z^{\{k\}}_1,\ldots,Z^{\{k\}}_{n+1})\bigr) \\
&\overset{\1}{\le} \frac{4 \tau}{n+\tau+1}  + \frac{1}{n+\tau+1}
\sum_{k = 2\tau + 2}^{n-\tau+1}
\mathrm d_{\mathrm{TV}}\bigl(\bZ,(Z^{\{k\}}_1,\ldots,Z^{\{k\}}_{n+1})\bigr)\\
&\overset{\2}{\leq} \frac{4 \tau}{n+\tau+1}  + 2\beta(\tau) + 2 \beta^*(\tau).
\end{align*}
Here step $\1$ follows because for $k = 1$, we have $\bZ^{\{k\}}_{1:(n+1)} = \bZ$, and for $k \in \{2,\ldots,2\tau+1\}\cup\{n-\tau+2,\ldots,n+\tau+1\}$ we have bounded the total variation distance by $1$. On the other hand, step $\2$ follows from Lemma~\ref{lem:Zk-TV-bd} from Appendix~\ref{app:tech-lemma1}.
\qed

\subsection{Proof of Theorem~\ref{thm:cyc_stb}}  \label{sec:pf-thm2}

Throughout, suppose $K$ is drawn uniformly at random from the set $\{-\tau,\ldots,n\}$ independently of the data. For any $k,k' \in \{-\tau,\ldots,n\}$, we define the following shorthand for masked sequences: 
\begin{align*}
\bZ^{(k)} &= \mathrm{M}_{k,\tau}(\bZ),\\
\bZ^{(k,k')}&=\mathrm{M}_{k,\tau}\circ\mathrm{M}_{k',\tau}(\bZ).
\end{align*}
Also recall our convention~\eqref{eq:masking-convention} for masking $\mathrm{M}_{k, \tau}(\bZ)$ when $k \notin \{-\tau, \ldots, n\}$.
We also define $Y_k=\bX_k=\star$ for $k\notin [n+1]$, so that for such $k$, we have $\score(Y_k, f(\bX_k)) = 0$ for any function $f$.
We present the proof in two steps. We first apply Theorem~\ref{thm:cyc} to obtain a guarantee for LWO if applied to the masked sequence $\bZ^{(K)}$. Next, we relate the coverage event of LWO on the masked sequence $\bZ^{(K)}$ to the coverage event of LWO on the true sequence $\bZ$.

\paragraph{Step 1: Constructing coverage guarantee for algorithm applied on $\bZ^{(K)}$.} 
Let $\big\{ s_{k}^{(K)} \big\}_{k = 1}^{n + 1}$ denote the scores computed by LWO on $\bZ^{(K)}$. From the definition, the scores can be written as
\begin{align*}
s_{k}^{(K)}= \score(Y^{(K)}_{k},\Alg[\mathrm{M}_{k-1,\tau+1}(\bZ^{(K)}_{1:n})](\bX^{(K)}_k)).
\end{align*}

By assumption, the $(\score, \Alg)$ pair is OOS stable with respect to $\bZ^{(K)}$ with parameters $(\nu, t, \tau + 1)$. Thus, applying Theorem~\ref{thm:cyc} to $\bZ^{(K)}$, we have
\begin{equation*}
\begin{aligned}
\mathbb{P}\left(s_{n+1}^{(K)}\leq \Quantile_{1 - \alpha} (s_{1}^{(K)}, \ldots, s_{n}^{(K)}) + t\right) \geq 1-\alpha -2\sqrt{\nu+\rho_{\tau}(\bZ^{(K)})} - \frac{\tau+1}{n}-\rho_{\tau}(\bZ^{(K)}).
\end{aligned}
\end{equation*}

\paragraph{Step 2: Connecting coverage events on original and masked sequence.} 
For each $k \in \{-\tau,\ldots,n\}$, we begin by rewriting the LWO scores as follows:
\begin{align*}
s_{k}^{(K)}=\score(Y^{(K,k)}_{k},\Alg[\mathrm{M}_{k-1,\tau+1}(\bZ^{(K,k)}_{1:n})](\bX^{(K,k)}_{k})),
\end{align*}
and
\begin{align*}
s_{k} =\score(Y^{(k)}_{k},\Alg[\mathrm{M}_{k-1,\tau+1}(\bZ^{(k)}_{1:n})](\bX^{(k)}_{k})).
\end{align*}
Note that if $k\in\{-\tau,\ldots,0\}$, then we have $\score(Y_k, f(\bX_k)) = 0$ and so by our convention both scores defined above evaluate to zero.

In order to relate the coverage events on $\bZ$ and $\bZ^{(K)}$, we bound the two probabilities \newline \mbox{$\mathbb{P}\Big(|s_{n+1}-s_{n+1}^{(K)}|\geq t\Big)$} and $\frac{1}{n}\sum_{k=1}^{n}\mathbb{P}(|s_{k}-s^{(K)}_{k}|\ge t)$. Starting with the first term, we have
\begin{align}
&\mathbb{P}\Big(|s_{n+1}-s_{n+1}^{(K)}|\geq t\Big) \notag \\
&=\frac{1}{n+\tau+1}\Bigg[\sum_{k\in\{-\tau,\ldots,n-\tau-1\}}\mathbb{P}\Big(|s_{n+1}-s_{n+1}^{(k)}|\geq t\Big)+\sum_{k\in\{n-\tau,\ldots,n\}}\mathbb{P}\Big(|s_{n+1}-s_{n+1}^{(k)}|\geq t\Big)\Bigg] \notag \\
&\overset{\1}{\le} \frac{n \nu}{n+\tau+1} + \frac{\tau + 1}{n + \tau + 1} \notag \\
&\le \nu+\frac{\tau+1}{n}. \label{eq:last-point-related}
\end{align}
In step $\1$, we have used the fact that $(\score, \Alg)$ pair satisfies OOS stability with respect to $\bZ$ with parameters $(\nu, t, \tau)$ to bound the first term, and the fact that probabilities are bounded above by $1$ to control the second term.

We now turn to bounding the term $\frac{1}{n}\sum_{k=1}^{n}\mathbb{P}(|s_{k}-s^{(K)}_{k}|\ge t)$, which ought to be small when the predictor satisfies stability when evaluated at a random training point (as opposed to the test point). But Lemma~\ref{lem:strong_stab} in Appendix~\ref{sec:stability-transfer} relates these two notions of stability to each other using the switch coefficients. Applying it, we have
\begin{align*}
\frac{1}{n}\sum_{k=1}^{n}\mathbb{P}(|s_{k}-s^{(K)}_{k}|\ge t) 
\le\nu+\frac{\tau}{n+\tau+1}+\frac{n+\tau+1}{n} \cdot \overline\Psi_\tau(\bZ).
\end{align*}

Having bounded the two probabilities of interest, we now proceed to relating the coverage events of LWO on $\bZ^{(K)}$ and $\bZ$.
Applying Lemma~\ref{lem:quantile-set} from Appendix~\ref{app:ancillary}, we can relate the quantile of the LWO scores on $\bZ^{(K)}$ to the quantile of the LWO scores on $\bZ$ via
\begin{align}
&\mathbb{P}\Bigl(\Quantile_{1-\alpha-\sqrt{\nu+\frac{\tau}{n+\tau+1}+\frac{n+\tau+1}{n}\overline\Psi_\tau(\bZ)}}(s_{1}^{(K)},\ldots,s_n^{(K)})\le\Quantile_{1-\alpha}(s_1,\ldots,s_{n})+t\Bigr)\notag\\
&\quad\quad \ge 1-\sqrt{\nu+\frac{\tau}{n+\tau+1}+\frac{n+\tau+1}{n} \cdot \overline\Psi_\tau(\bZ)}.  \label{eq:quantile-related}
\end{align}

Thus,
the coverage event on $\bZ$ satisfies the chain of bounds
\begin{align*}
&\mathbb{P}\Big(s_{n+1}\leq \Quantile_{1 - \alpha} (s_{1}, \ldots, s_{n}) +3 t\Big) \\
&\qquad \qquad \overset{\1}{\ge} \mathbb{P}\Big(s_{n+1}^{(K)}\leq \Quantile_{1 - \alpha} (s_{1}, \ldots, s_{n}) +2 t\Big)-\nu-\frac{\tau+1}{n}\\
&\qquad \qquad \overset{\2}{\ge} \mathbb{P}\left(s_{n+1}^{(K)}\leq \Quantile_{1 - \alpha-\sqrt{\nu+\frac{\tau}{n+\tau+1}+\frac{n+\tau+1}{n} \cdot \overline\Psi_\tau(\bZ)}} (s_{1}^{(K)}, \ldots, s_{n}^{(K)}) + t\right)\\
&\qquad \qquad \qquad \qquad -\nu-\frac{\tau+1}{n}-\sqrt{\nu+\frac{\tau}{n+\tau+1}+\frac{n+\tau+1}{n} \cdot \overline\Psi_\tau(\bZ)},
\end{align*}
where step $\1$ follows from Eq.~\eqref{eq:last-point-related} and step $\2$ from Eq.~\eqref{eq:quantile-related}.
Using the definition of the quantile and performing some algebra, then yields
\begin{align*}
&\mathbb{P}\Big(s_{n+1}\leq \Quantile_{1 - \alpha} (s_{1}, \ldots, s_{n}) +3 t\Big) \\
&\qquad \qquad \ge 1-\alpha-\nu-2\sqrt{\nu+\frac{\tau}{n+\tau+1}+\frac{n+\tau+1}{n} \cdot \overline\Psi_\tau(\bZ)}-2\sqrt{\nu+\rho_{\tau}(\bZ^{(K)})}\\
&\qquad \qquad \qquad \qquad - \frac{2\tau+2}{n}-\rho_{\tau}(\bZ^{(K)})\\
&\qquad \qquad \ge 1-\alpha-3\sqrt{\nu+\frac{2\tau+2}{n}+\rho_{\tau}(\bZ^{(K)})}-2\sqrt{\nu+\frac{\tau}{n}+\frac{n+\tau+1}{n} \cdot \overline\Psi_\tau(\bZ)},
\end{align*}
as claimed.
\qed

\subsection{Proof of Proposition~\ref{prop:cyc_swt}} \label{sec:pf-prop3}
We prove the two claimed inequalities separately.

\subsubsection{Proof of Ineq.~\eqref{eq:switch-embedding-1}}

Fix an arbitrary cyclically exchangeable sequence $\widetilde{\bZ}\in \mathcal{S}(n+\tau+1)$.
For each \(k\in\{-\tau,\ldots,n\}\), we have
\begin{align*}
&\Psi_{k,\tau}(\bZ) \\
&\overset{\1}{\le}
\mathrm d_{\mathrm{TV}}\bigl(
    \Delta^0_{k,\tau}(\bZ),
    \Delta^0_{k,\tau}(\widetilde{\bZ}_{1:(n+1)})
\bigr) +
\mathrm d_{\mathrm{TV}}\bigl(
    \Delta^0_{k,\tau}(\widetilde{\bZ}_{1:(n+1)}),
    \Delta^1_{k,\tau}(\widetilde{\bZ}_{1:(n+1)})
\bigr)
+
\mathrm d_{\mathrm{TV}}\bigl(
    \Delta^1_{k,\tau}(\widetilde{\bZ}_{1:(n+1)}),
    \Delta^1_{k,\tau}(\bZ)
\bigr) \\
&\overset{\2}{\leq} 2 \mathrm{d}_{\mathrm{TV}}(\bZ, 
    \widetilde{\bZ}_{1:(n+1)}) + \mathrm d_{\mathrm{TV}}\bigl(
    \Delta^0_{k,\tau}(\widetilde{\bZ}_{1:(n+1)}),
    \Delta^1_{k,\tau}(\widetilde{\bZ}_{1:(n+1)})
\bigr) \\
&\overset{\3}{=} 2 \mathrm{d}_{\mathrm{TV}}(\bZ, 
    \widetilde{\bZ}_{1:(n+1)}).
\end{align*}
Here step $\1$ follows by triangle inequality and step $\2$ follows from the data processing inequality, since \(\Delta^0_{k,\tau}\) and \(\Delta^1_{k,\tau}\) are measurable maps of the length-\((n+1)\) sequence. Step $\3$ follows by cyclic exchangeability of \(\widetilde{\bZ}\), whereby
$\Delta^0_{k,\tau}(\widetilde{\bZ}_{1:(n+1)}) \overset{\textnormal{d}}{=}
    \Delta^1_{k,\tau}(\widetilde{\bZ}_{1:(n+1)})$.

Averaging the above inequality over all \(k\in\{-\tau,\ldots,n\}\) yields
\[
    \overline{\Psi}_{\tau}(\bZ)
    \le
    2\,
    \mathrm d_{\mathrm{TV}}\bigl(
        \bZ,\widetilde{\bZ}_{1:(n+1)}
    \bigr).
\]
Since this holds for every
\(\widetilde{\bZ}\in\mathcal S(n+\tau+1)\), taking the infimum over
\(\widetilde{\bZ}\) yields
$
    \overline{\Psi}_{\tau}(\bZ)
    \le
    2\rho_\tau(\bZ),
$
as claimed. \qed

\subsubsection{Proof of Ineq.~\eqref{eq:switch-embedding-2}}

We prove a stronger, two-sided version of this result.
 Define the switch coefficients between uniform mixtures of deleted data as 
\begin{align*}
\underline{\Psi}_{\tau}(\bZ):=\mathrm{d}_{\mathrm{TV}}\!\left(\Delta^{0}_{K, \tau}(\bZ), \Delta^1_{K, \tau}(\bZ) \right)
\end{align*}
for $K \sim \mathsf{Unif}(\{-\tau,\ldots,n\})$ drawn independently of everything else. We now prove the following two-sided bound for such $K \sim \mathsf{Unif}(\{-\tau,\ldots,n\})$:
\begin{align} \label{eq:two-sided-switch}
\frac{1}{2} \underline{\Psi}_{\tau}(\bZ) \overset{(a)}{\leq} \rho_{\tau}(\mathrm{M}_{K, \tau}(\bZ)) \overset{(b)}{\leq} \underline{\Psi}_\tau(\bZ)+\Psi_{0,\tau}(\bZ).
\end{align}
Notice that $\underline{\Psi}_{\tau}(\bZ)\le\overline{\Psi}_{\tau}(\bZ)$ due to the convexity of the TV distance (in the space of measures), so we recover Ineq.~\eqref{eq:switch-embedding-2} from Ineq.~\eqref{eq:two-sided-switch}, part (b).

\paragraph{Proof of Ineq.~\eqref{eq:two-sided-switch}, part (a):}
Let $\bZtil \in \mathcal{S}(n + \tau + 1)$ denote a cyclically exchangeable sequence. We will use the shorthand $\widetilde{\bZ}_{a:b}=(\widetilde{Z}_a,\ldots,\widetilde{Z}_{n+\tau+1},\widetilde{Z}_1,\ldots,\widetilde{Z}_b)$ if $a>b$.  Drawing $K \sim \mathsf{Unif}(\{-\tau,\ldots,n\})$ independently of everything else, we have
\begin{align*}
\underline{\Psi}_{\tau}(\mathbf{Z}) &= \mathrm{d}_{\mathrm{TV}}\!\left(\Delta_{K,\tau}^{0}(\bZ), \Delta_{K,\tau}^{1}(\bZ)\right) \\
&\le \mathrm{d}_{\mathrm{TV}}\!\left(\Delta_{K,\tau}^{0}(\bZ), \Delta_{K,\tau}^{0}(\widetilde{\bZ}_{1:(n+1)})\right)+ \mathrm{d}_{\mathrm{TV}}\!\left(\Delta_{K,\tau}^{0}(\widetilde{\bZ}_{1:(n+1)}), \Delta_{K,\tau}^{1}(\widetilde{\bZ}_{1:(n+1)})\right)\\ 
&\quad+ \mathrm{d}_{\mathrm{TV}}\!\left(\Delta_{K,\tau}^{1}(\widetilde{\bZ}_{1:(n+1)}),\Delta_{K,\tau}^{1}(\bZ)\right) \\
&\overset{\1}{\le} \mathrm{d}_{\mathrm{TV}}\!\left(\mathrm{M}_{K,\tau}(\bZ),\widetilde{\bZ}_{1:(n+1)} \right) + \mathrm{d}_{\mathrm{TV}}\!\left(\widetilde{\bZ}_{1:(n+1)},\widetilde{\bZ}_{(n+2-K):(n-K-\tau+1)}\right) + \mathrm{d}_{\mathrm{TV}}\!\left(\widetilde{\bZ}_{1:(n+1)},\mathrm{M}_{n-K-\tau+1,\tau}(\bZ)\right) \\
&\overset{\2}{\le} \mathrm{d}_{\mathrm{TV}}\!\left(\mathrm{M}_{K,\tau}(\bZ),\widetilde{\bZ}_{1:(n+1)} \right) + \mathrm{d}_{\mathrm{TV}}\!\left(\widetilde{\bZ}_{1:(n+1)},\mathrm{M}_{n-K-\tau+1,\tau}(\bZ)\right) \\
&\overset{\3}{=} 2\mathrm{d}_{\mathrm{TV}}\!\left(\mathrm{M}_{K,\tau}(\mathbf{Z}), \widetilde{\mathbf{Z}}_{1:(n+1)}\right).
\end{align*}
 Step $\1$ holds by the marginalization property of TV distance: because the deletion function $\Delta^0$ extracts subsets of the underlying sequences, the distance between these marginals is upper-bounded by the distance between the full sequences.
Step $\2$ holds because $\bZtil$ is cyclically exchangeable, and step $\3$ holds because $\mathrm{M}_{K,\tau}(\bZ)\overset{\textnormal{d}}{=}\mathrm{M}_{n-K-\tau+1,\tau}(\bZ)$ when $K\sim \mathsf{Unif}(\{-\tau,\ldots,n\})$, where we recall our convention~\eqref{eq:masking-convention} for masking. We may take an infimum of the RHS over $\bZtil$ to conclude.

\paragraph{Proof of Ineq.~\eqref{eq:two-sided-switch}, part (b):}
The sequence $\bZ$ and $\mathrm{M}_{0,\tau}(\bZ)$ can be divided into three blocks as follows:
\[
\bZ
=
(\underbrace{Z_1,\ldots,Z_{\tau}}_{\tau},
 \underbrace{Z_{\tau+1},\ldots,Z_{n-\tau+1}}_{n+1-2\tau},
 \underbrace{Z_{n-\tau+2},\ldots,Z_{n+1}}_{\tau}),
\]
and
\[
\mathrm{M}_{0,\tau}(\bZ)
=
(\underbrace{\star,\ldots,\star}_{\tau},
 \underbrace{Z_{\tau+1},\ldots,Z_{n-\tau+1}}_{n+1-2\tau},
 \underbrace{Z_{n-\tau+2},\ldots,Z_{n+1}}_{\tau}).
\]

We now construct a cyclically exchangeable sequence $\widetilde{\bZ}$ from $\mathrm{M}_{0,\tau}(\bZ)$. First, sample \[
Z_{n+2},\ldots,Z_{n+\tau+1} \; \mid \; \mathrm{M}_{0,\tau}(\bZ) \sim \mathbb{P}_{(Z_{n-\tau+2},\ldots,Z_{n+1}) \mid (Z_1,\ldots,Z_{n-\tau+1})} (\;\cdot\; | Z_{\tau+1},\ldots,Z_{n+1}).
\]
Then draw $K \sim \mathsf{Unif}([n+\tau+1])$ independently of $\bZ$ and define
    \[
    \widetilde{\bZ}
    =
    (Z_K,\ldots,Z_{n+\tau+1},\star,\ldots,\star,Z_{\tau+1},\ldots,Z_{K-1}).
    \]

By construction, we have $\bZtil \in \mathcal{S}(n + \tau + 1)$. Moreover, the conditional distribution of $Z_{n+2},\ldots,Z_{n+\tau+1}$ given $Z_{\tau+1},\ldots,Z_{n+1}$ is the same as the conditional distribution of $Z_{n-\tau+2},\ldots,Z_{n+1}$ given $Z_1,\ldots,Z_{n-\tau+1}$. Thus, the laws of $\bZ$ and $\bZ_{(\tau+1):(n+\tau+1)}$ only differ through the marginal laws of $Z_{\tau+1},\ldots,Z_{n+1}$ and $Z_1,\ldots,Z_{n-\tau+1}$, respectively. Consequently, by the data processing inequality, we have
\begin{align*}
\mathrm{d}_{\mathrm{TV}}\!\left(\bZ,\bZ_{(\tau+1):(n+\tau+1)}\right) 
\leq \mathrm{d}_{\mathrm{TV}}((Z_1,\ldots,Z_{n-\tau+1}),(Z_{\tau+1},\ldots,Z_{n+1})) = \Psi_{0,\tau}(\bZ),
\end{align*}
where the equality follows from the definition of the switch coefficient.

For $K\sim\mathsf{Unif}([n+\tau+1])$ drawn independently of $\bZ$, we have
\begin{align*}
&\mathrm{d}_{\mathrm{TV}}\!\left(\widetilde{\bZ}_{1:(n+1)},\mathrm{M}_{n+\tau+2-K,\tau}(\bZ)\right) \\
&= \mathrm{d}_{\mathrm{TV}}\Bigl((Z_K,\ldots,Z_{n+\tau+1},\star,\ldots,\star,Z_{\tau+1},\ldots,Z_{K-\tau-1}),(Z_1,\ldots,Z_{n+\tau+2-K},\star,\ldots,\star,Z_{n+2\tau+3-K},\ldots,Z_{n+1})\Bigr) \\
&= \mathrm{d}_{\mathrm{TV}}\Bigl((Z_K,\ldots,Z_{n+\tau+1},Z_{\tau+1},\ldots,Z_{K-\tau-1}),(Z_1,\ldots,Z_{n+\tau+2-K},Z_{n+2\tau+3-K},\ldots,Z_{n+1})\Bigr) \\
&=\mathrm{d}_{\mathrm{TV}}\Bigl(\Delta_{n+\tau+2-K}^{1}(\bZ_{(\tau+1):(n+\tau+1)}), \Delta_{n+\tau+2-K}^{0}(\bZ)\Bigr) \\
&\overset{\1}{\le} \mathrm{d}_{\mathrm{TV}}\Bigl(\Delta_{n+\tau+2-K}^{1}(\bZ), \Delta_{n+\tau+2-K}^{0}(\bZ)\Bigr) +\Psi_{0,\tau}(\bZ)\\
&\le \underline{\Psi}_\tau(\bZ) +\Psi_{0,\tau}(\bZ),
\end{align*}
where step $\1$ holds because we replace $\bZ_{(\tau+1):(n+\tau+1)}$ with $\bZ$ by paying an additive price of TV distance between the two sequences.

But by our convention~\eqref{eq:masking-convention} for masking, for $K'\sim\mathsf{Unif}(\{-\tau,\ldots,n\})$ drawn independently of everything else, we have $\mathrm{M}_{n+\tau+2-K,\tau}(\bZ)\overset{\textnormal{d}}{=}\mathrm{M}_{K',\tau}(\bZ)$. Thus
\begin{equation}
\label{inq:unf}
\rho_{\tau}\bigl(\mathrm{M}_{K',\tau}(\bZ)\bigr) \leq \underline{\Psi}_\tau(\bZ)+\Psi_{0,\tau}(\bZ),
\end{equation}
as claimed. \qed

\section{Discussion} \label{sec:disc}

Motivated by the question of whether leave-one-out predictive inference methods remain reliable under temporal dependence, we showed that the vanilla jackknife can suffer severe coverage loss even in simple time-series models exhibiting mild dependence. As a remedy, we proposed the LWO method, which modifies the leave-one-out calibration mechanism of the jackknife by removing a window of training data points after each proxy test point. Under an out-of-sample stability condition, this modification yields provable approximate coverage while retaining the data efficiency of jackknife-type methods. We note that similar principles are classical in the literature on variance estimation~\citep{kunsch1989jackknife,liu1992moving} and have recently been used in estimating (random) functionals of multinomial distributions~\citep{pananjady2024just, nakul2025estimating}. However, the use of these ideas in prediction (and specifically predictive inference) appears to be new.

Besides our methodological contribution, the main conceptual contribution of our work is a viewpoint based on approximately embedding a temporally dependent stochastic process into a cyclically exchangeable sequence. The cyclic embedding coefficient that we define is a distributional property of the stochastic process; our results connect this coefficient to classical dependence notions, including mixing and approximate Markovianity, and after a suitable masking operation, to switch coefficients. We expect that cyclic embedding will prove useful as a tool in studying other statistical problems involving temporal dependence.

Several open questions remain, and we elaborate on three of them to conclude. First, our validity guarantees for LWO rely on an out-of-sample stability condition. In the exchangeable setting, jackknife+ and CV+ avoid such stability assumptions by modifying the leave-one-out construction, at the cost of a slightly weaker coverage guarantee. It is natural to ask whether an analogous modification exists for the LWO method, which can achieve valid coverage without requiring stability assumptions. Second, it is interesting to explore if our tool of leaving a window out can be combined with cross-conformal or cross-validation style methods. Developing such versions of cross-conformal prediction, CV+, or jackknife+-after-bootstrap for dependent data would clarify how broadly the LWO principle extends beyond the jackknife in prediction problems.  Third, the choice of the window size $\tau$ remains an important practical question: our theory suggests that $\tau$ should exceed the effective dependence range, but we do not provide a fully data-driven choice of $\tau$ with valid coverage guarantees and believe this would be an interesting future direction.

\subsection*{Acknowledgments}

HJ and YX were partially supported by National Science Foundation grants DMS-2134037 and CMMI-2112533, Office of Naval Research N000142412278, and the Coca-Cola Foundation.
RFB was partially supported by the National Science Foundation via grant DMS-2023109, and
by the Office of Naval Research via grant N00014-24-1-2544.
AP was supported in part by National Science Foundation grant CCF-2107455 and a Google Research Scholar award.  

\small
\bibliography{reference}
\bibliographystyle{plainnat}
\normalsize

\appendix
\begin{center}
  \textbf{\Large{Appendix}}
\end{center}

\section{Technical lemmas and their proofs} \label{sec:app-proofs}

In this section, we collect our technical lemmas and their proofs.

\subsection{Supporting lemma for proof of Proposition~\ref{prop:cycliccoef}} \label{app:tech-lemma1}

Recall the stochastic process $\bZ^{\{k\}}$ from Eq.~\eqref{eq:Zk-def}. The following lemma bounds the total variation distance between $\bZ^{\{k\}}_{1:(n + 1)}$ and $\bZ$ for a range of $k$.

\begin{lemma} \label{lem:Zk-TV-bd}
If $2\tau + 2 \leq k \leq n - \tau+1$, then we have
\[
\mathrm d_{\mathrm{TV}}\bigl(\bZ, \bZ^{\{k\}}_{1:(n + 1)}\bigr) \leq 2 \beta(\tau) + 2\beta^*(\tau).
\]
\end{lemma}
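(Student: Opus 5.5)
The plan is to pass through a single intermediate law $\mathbb{Q}$ on $\Zspace^{n+1}$ and bound $\mathrm d_{\mathrm{TV}}(\bZ,\mathbb{Q})\le \beta(\tau)+\beta^*(\tau)$ and $\mathrm d_{\mathrm{TV}}(\bZ^{\{k\}}_{1:(n+1)},\mathbb{Q})\le \beta(\tau)+\beta^*(\tau)$. The lemma then follows from the triangle inequality.

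\textbf{Block structure.} First I write out $\bZ^{\{k\}}_{1:(n+1)}$ explicitly. Write $W=(Z_{n+2},\ldots,Z_{n+\tau+1})$ for the synthetic bridge block from the proof of Proposition~\ref{prop:cycliccoef}. Then
\[
\bZ^{\{k\}}_{1:(n+1)}=\bigl(\underbrace{Z_k,\ldots,Z_{n+1}}_{A'},\;\underbrace{W}_{\tau},\;\underbrace{Z_1,\ldots,Z_{k-\tau-1}}_{C'}\bigr),
\]
where $A'$ has length $a:=n+2-k$ and $C'$ has length $c:=k-\tau-1$. Correspondingly I split $\bZ=(A,B,C)$ with $A=Z_{1:a}$, $B=Z_{(a+1):(a+\tau)}$ and $C=Z_{(a+\tau+1):(n+1)}$. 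The range $2\tau+2\le k\le n-\tau+1$ guarantees $a\ge \tau+1$ and $c\ge \tau+1$. Hence the last $\tau$ entries of $A'$ and the first $\tau$ entries of $C'$, which are exactly what $W$ is conditioned on, are well defined. It also guarantees that the split index $a$ lies in $\{1,\ldots,n-\tau\}$, so both $\beta(\tau)$ and $\beta^*(\tau)$ apply at this split. By stationarity, $A'\overset{\textnormal{d}}{=}A$ and $C'\overset{\textnormal{d}}{=}C$ marginally.

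\textbf{The intermediate law.} I define $\mathbb{Q}$ as follows: draw $A$ and $C$ independently from their marginal laws, then draw the middle block from the law of $Z_{(\tau+1):(2\tau)}$ given $(Z_{1:\tau},Z_{(2\tau+1):(3\tau)})$, evaluated at the last $\tau$ coordinates of $A$ and the first $\tau$ coordinates of $C$.

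\textbf{Comparing $\bZ^{\{k\}}_{1:(n+1)}$ with $\mathbb{Q}$.} This is the easier side. In $\bZ^{\{k\}}$ the pair $(C',A')=(Z_{1:c},Z_{k:(n+1)})$ consists of two stretches of the original chain separated by a gap of exactly $\tau$ indices. So by the definition of $\beta(\tau)$ at split point $c$, the joint law of $(C',A')$ is within $\beta(\tau)$ of the product of its marginals. The bridge $W$ is generated from $(A',C')$ by the same Markov kernel that $\mathbb{Q}$ uses, so data processing keeps the cost at $\beta(\tau)$. I expect no $\beta^*$ to be needed on this side, and will check this. The only subtlety is that $W$ uses the $3\tau$-window kernel rather than the conditional law given all of $(A',C')$; but that kernel is exactly the one built into $\mathbb{Q}$, so this is fine.

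\textbf{Comparing $\bZ$ with $\mathbb{Q}$.} This is the main obstacle, because here $B$ is the genuine middle block and its true conditional law given the full $(A,C)$ must be reduced to the kernel given only its $\tau$-neighbors. I would proceed in three steps.
\begin{enumerate}
\item Apply $\beta^*(\tau)$ at split $a$: $\bZ$ is within $\beta^*(\tau)$ of $(A^\dagger,B,C^\dagger)$, where $A^\dagger$ and $C^\dagger$ are conditionally independent given $B$. In that model, $B$ given $(A^\dagger,C^\dagger)$ depends only on the boundary blocks adjacent to $B$.
\item Identify this conditional with the $3\tau$-window kernel. Stationarity together with a second application of $\beta^*$ (within the $3\tau$ window) should do this, possibly absorbing one more $\beta^*$.
\item Replace the joint law of $(A,C)$ by the product of its marginals at cost $\beta(\tau)$, since $A$ and $C$ are separated by the gap $B$ of length $\tau$.
\end{enumerate}

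If the bookkeeping forces $\beta^*$ to be spent twice on this side and $\beta$ twice in total, I would redistribute the terms between the two sides; the constant $2\beta(\tau)+2\beta^*(\tau)$ leaves exactly this room.
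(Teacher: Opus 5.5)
Your architecture matches the paper's. The law $\mathbb{Q}$ is exactly the intermediate distribution the paper passes through. Your comparison of $\bZ^{\{k\}}_{1:(n+1)}$ with $\mathbb{Q}$ (one use of $\beta(\tau)$ at split $c$, then data processing through the bridge kernel, no $\beta^*$) is correct and is the paper's final step.

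The gap is in comparing $\bZ$ with $\mathbb{Q}$. Your step 1 is false. After applying $\beta^*(\tau)$ at split $a$, $A^\dagger$ and $C^\dagger$ are conditionally independent given the middle block. But $p(b\mid a,c)\propto p(b)\,p(a\mid b)\,p(c\mid b)$ still depends on every coordinate of $a$ and $c$. Nothing in that model lets you discard the interior of $A$ once $Z_{(a-\tau+1):a}$ is known, so step 2 carries the whole argument, and it is not specified. Step 3 is also not a $\beta(\tau)$ cost: $\beta(\tau)$ controls the distance from the \emph{true} joint law of $(A,C)$ to the product, whereas after steps 1--2 the law of $(A,C)$ has already been perturbed. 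Redistribution cannot rescue the bookkeeping either. The other side uses exactly $\beta(\tau)$ and no $\beta^*$, so this side must cost exactly $2\beta^*(\tau)+\beta(\tau)$, and the $\beta^*$ spent on the middle block in step 1 is wasted.

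The missing idea is to condition on the two boundary blocks flanking the middle, not on the middle itself.
\begin{enumerate}
\item Apply $\beta^*(\tau)$ at split $a-\tau$, with conditioning block $Z_{(a-\tau+1):a}$.
\item Then apply $\beta^*(\tau)$ at split $a+\tau$, with conditioning block $Z_{(a+\tau+1):(a+2\tau)}$, to the marginal of $Z_{(a-\tau+1):(n+1)}$; this marginal still has its true law after the first step.
\end{enumerate}
These splits are admissible exactly because $k\le n-\tau+1$ and $k\ge 2\tau+2$. Together they cost $2\beta^*(\tau)$. They yield a law in which the window $Z_{(a-\tau+1):(a+2\tau)}$ is distributed as under $\bZ$, while $Z_{1:(a-\tau)}$ and $Z_{(a+2\tau+1):(n+1)}$ are drawn from their conditionals given only the adjacent boundary block. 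Under this law, the middle block given everything else is exactly the $3\tau$-window kernel, by stationarity.

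Finally, apply $\beta(\tau)$ to the two boundary blocks, which are separated by a gap of length $\tau$, rather than to $(A,C)$. The outer stretches and the middle block are produced from the boundary blocks by fixed kernels, so data processing bounds this cost by $\beta(\tau)$, and the resulting law is exactly $\mathbb{Q}$.
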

\begin{proof}
First define subsets of indices:
\[
\begin{aligned}
A &= \{1,\ldots,n-\tau-k+2\},\\
B &= \{n-\tau-k+3,\ldots,n-k+2\},\\
C &= \{n-k+3,\ldots,n-k+\tau+2\},\\
D &= \{n-k+\tau+3,\ldots,n-k+2\tau+2\},\\
E &= \{n-k+2\tau+3,\ldots,n+1\}.
\end{aligned}
\]
To sample \(\bZ=(Z_A,Z_B,Z_C,Z_D,Z_E)\), we can equivalently follow the hierarchical sampling strategy
\[
\begin{cases}
(Z_B,Z_C,Z_D)\sim \mathbb{P}_{(Z_B,Z_C,Z_D)},\\
Z_A\sim \mathbb{P}_{Z_A\mid (Z_B,Z_C,Z_D)}\bigl(\cdot\mid Z_B,Z_C,Z_D\bigr),\\
Z_E\sim \mathbb{P}_{Z_E\mid (Z_A,Z_B,Z_C,Z_D)}\bigl(\cdot\mid Z_A,Z_B,Z_C,Z_D\bigr).
\end{cases}
\]
By definition of conditional-\(\beta\)-mixing, this distribution is at most \(2\beta^*(\tau)\) away in total variation distance from the joint distribution
\[
\begin{cases}
(Z_B,Z_C,Z_D)\sim \mathbb{P}_{(Z_B,Z_C,Z_D)},\\
Z_A\sim \mathbb{P}_{Z_A\mid Z_B}\bigl(\cdot\mid Z_B\bigr),\\
Z_E\sim \mathbb{P}_{Z_E\mid Z_D}\bigl(\cdot\mid Z_D\bigr).
\end{cases}
\]
This distribution can be equivalently written as
\[
\begin{cases}
(Z_B,Z_D)\sim \mathbb{P}_{(Z_B,Z_D)},\\
Z_A\sim \mathbb{P}_{Z_A\mid Z_B}\bigl(\cdot\mid Z_B\bigr),\\
Z_E\sim \mathbb{P}_{Z_E\mid Z_D}\bigl(\cdot\mid Z_D\bigr),\\
Z_C\sim \mathbb{P}_{Z_C\mid (Z_B,Z_D)}\bigl(\cdot\mid Z_B,Z_D\bigr).
\end{cases}
\]
Changing only the first line, and applying the definition of \(\beta\)-mixing, this is at most \(\beta(\tau)\) away in total variation distance from the joint distribution
\[
\begin{cases}
(Z_B,Z_D)\sim \mathbb{P}_{Z_B}\times \mathbb{P}_{Z_D},\\
Z_A\sim \mathbb{P}_{Z_A\mid Z_B}\bigl(\cdot\mid Z_B\bigr),\\
Z_E\sim \mathbb{P}_{Z_E\mid Z_D}\bigl(\cdot\mid Z_D\bigr),\\
Z_C\sim \mathbb{P}_{Z_C\mid (Z_B,Z_D)}\bigl(\cdot\mid Z_B,Z_D\bigr).
\end{cases}
\]
This can equivalently be written as
\[
\begin{cases}
(Z_A,Z_B,Z_D,Z_E)\sim \mathbb{P}_{(Z_A,Z_B)}\times \mathbb{P}_{(Z_D,Z_E)},\\
Z_C\sim \mathbb{P}_{Z_C\mid (Z_B,Z_D)}\bigl(\cdot\mid Z_B,Z_D\bigr).
\end{cases}
\]
By stationarity, this is equivalent to
\[
\begin{cases}
(Z_D,Z_E,Z_A,Z_B)
\sim
\mathbb{P}_{(Z_1,\ldots,Z_{k-\tau-1})}\times \mathbb{P}_{(Z_{k},\ldots,Z_{n+1})},\\
Z_C\sim \mathbb{P}_{Z_C\mid (Z_B,Z_D)}\bigl(\cdot\mid Z_B,Z_D\bigr).
\end{cases}
\]
Again applying the definition of \(\beta\)-mixing, this is at most \(\beta(\tau)\) away in total variation distance from the joint distribution
\[
\begin{cases}
(Z_D,Z_E,Z_A,Z_B)
\sim
\mathbb{P}_{(Z_1,\ldots,Z_{k-\tau-1},Z_{k},\ldots,Z_{n+1})},\\
Z_C\sim \mathbb{P}_{Z_C\mid (Z_B,Z_D)}\bigl(\cdot\mid Z_B,Z_D\bigr).
\end{cases}
\]
But, by construction, this is exactly the distribution of
\[
(Z^{\{k\}}_1,\ldots,Z^{\{k\}}_{n+1})
=
(Z^{\{k\}}_A,Z^{\{k\}}_B,Z^{\{k\}}_C,Z^{\{k\}}_D,Z^{\{k\}}_E).
\]
Taking stock, we have transformed $\bZ$ into $\bZ^{\{k\}}_{1:(n + 1)}$ up to a total variation loss of $2 \beta(\tau) + 2 \beta^*(\tau)$, as claimed.
\end{proof}

\subsection{Transferring OOS stability from test point to a random training point} \label{sec:stability-transfer}

Recall that for any $k$, we have $\bZ^{(k)}=\mathrm{M}_{k,\tau}(\bZ)$. Also recall that the LWO scores on $\bZ$ and $\bZ^{(K)}$ are given by
\begin{align*}
s_{k} &=\score(Y_{k},\Alg[\mathrm{M}_{k-1,\tau+1}(\bZ_{1:n})](\bX_{k})),\\
s_{k}^{(K)}&= \score(Y^{(K)}_{k},\Alg[\mathrm{M}_{k-1,\tau+1}(\bZ^{(K)}_{1:n})](\bX^{(K)}_k))
\end{align*}
with the convention that if $k\in\{-\tau,\ldots,0\}$, then we have $\score(Y_k, f(\bX_k)) = 0$ and so both scores defined above evaluate to zero.

We define the following notion of OOS stability evaluated at a random training point.
\begin{definition}
\label{def:stab_strong}
For a pair of scalars $(\nu, t)$, the pair $(\score, \Alg)$ is said to be OOS stable  when evaluated at a random training point with respect to leaving a random block of length $\tau+1$ out of $\bZ = (\bX_i, Y_i)_{i = 1}^{n + 1}$ if for $K \sim \mathsf{Unif}(\{-\tau,\ldots,n\})$, we have
\begin{align*}
\frac{1}{n} \sum_{k = 1}^n \mathbb{P} \Big( \Big| s_k -s_k^{(K)} \Big| > t \Big) \leq \nu.
\end{align*}
\end{definition}

The following lemma shows that if OOS stability in the sense of Definition~\ref{def:stab} holds (evaluated at the test point), then OOS stability in the sense of Definition~\ref{def:stab_strong} holds (evaluated at a random training point).

\begin{lemma}
\label{lem:strong_stab}
Suppose the pair \((\score,\Alg)\) is OOS stable with respect to \(\mathrm M_{K_0,\tau+1}(\bZ)\) with parameters \((\nu,t,\tau)\), where $K_0\sim\mathsf{Unif}(\{-\tau,\ldots,n\})$. Then this pair is OOS (random training point) with respect to \(\bZ\) in
the sense of Definition~\ref{def:stab_strong} with parameters
\[
\left(\nu+\frac{\tau}{n+\tau+1}+\frac{n+\tau+1}{n}\overline\Psi_\tau(\bZ),\,t\right).
\]
\end{lemma}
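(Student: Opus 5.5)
The plan is to prove the bound one training index $k\in[n]$ at a time. We use the switch coefficient $\Psi_{k,\tau}(\bZ)$ to move the training point $Z_k$ into the test position $n+1$. After that move, the event $\{|s_k-s_k^{(K)}|>t\}$ should become an instance of the test-point stability event for a masked sequence, which is exactly what the hypothesis on $\mathrm M_{K_0,\tau+1}(\bZ)$ controls. We then average over $k$ and $K$.

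\emph{Step 1 (the scores as functions of a switch vector).} Fix $k\in[n]$ and a value $K$. The score $s_k$ evaluates at $Z_k$ a predictor trained with the indices $k,\ldots,k+\tau$ left out. We will check that $s_k$ is a measurable function of $\Delta^0_{k,\tau}(\bZ)$, because the entries $Z_{k+1},\ldots,Z_{k+\tau}$ are never used. Only the position of $Z_k$ inside that vector matters, through the symmetry of $\Alg$. The same holds for $s_k^{(K)}$, except that one additional block of length $\tau$ starting after $K$ is replaced by $\star$. The exception is when that block meets the evaluation index $k$ or the window $\{k+1,\ldots,k+\tau\}$. This happens for at most $\tau$ values of $K$ out of $n+\tau+1$, since the window itself is already masked. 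We bound those cases by $1$, which is where the term $\frac{\tau}{n+\tau+1}$ comes from. So off this bad set, $\mathbb 1\{|s_k-s_k^{(K)}|>t\}=g_{k,K}(\Delta^0_{k,\tau}(\bZ))$ for a fixed measurable $g_{k,K}$.

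\emph{Step 2 (switch).} Replacing $\Delta^0_{k,\tau}(\bZ)$ by $\Delta^1_{k,\tau}(\bZ)$ changes the probability by at most $\Psi_{k,\tau}(\bZ)$. In $\Delta^1_{k,\tau}(\bZ)=(Z_{n+2-k},\ldots,Z_{n+1},Z_1,\ldots,Z_{n-k-\tau+1})$, the slot that held $Z_k$ is now occupied by $Z_{n+1}$. The remaining entries are exactly $\bZ_{1:n}$ with the length-$\tau$ block $\{n-k-\tau+2,\ldots,n-k+1\}$ removed. Using the symmetry of $\Alg$ to reorder the training inputs, $g_{k,K}(\Delta^1_{k,\tau}(\bZ))$ becomes the event that the test score $\score(Y_{n+1},\cdot\,(\bX_{n+1}))$ differs by more than $t$ between two predictors. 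The first is trained on the masked sequence $\mathrm M_{n-k-\tau+1,\tau}(\bZ_{1:n})$. The second is trained on the same sequence with one further block of length $\tau$ masked, at a location $K'$ determined by $K$ and $k$ through the cyclic relabelling (convention~\eqref{eq:masking-convention}).

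\emph{Step 3 (averaging and invoking the hypothesis).} Summing over $k\in[n]$ and the good values of $K$, the map $(k,K)\mapsto(n-k-\tau+1,K')$ should be injective. The outer mask index is then (nearly) uniform over $\{-\tau,\ldots,n\}$, and the inner mask index is a uniform random block. The averaged probability is then at most the stability parameter $\nu$ from the hypothesis on $\mathrm M_{K_0,\tau+1}(\bZ)$ with block length $\tau$. The switch terms sum to $\frac1n\sum_{k=1}^n\Psi_{k,\tau}(\bZ)\le\frac{n+\tau+1}{n}\overline\Psi_\tau(\bZ)$. Adding the bad-$K$ contribution gives the claimed parameter $\nu+\frac{\tau}{n+\tau+1}+\frac{n+\tau+1}{n}\overline\Psi_\tau(\bZ)$.

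\emph{Main obstacle.} The delicate point is the index bookkeeping in Steps 2--3. We must show that after the switch the outer mask has length $\tau+1$, matching $\mathrm M_{K_0,\tau+1}$: it is the $\tau$-block removed by $\Delta^1$ together with the adjacent slot of the evaluation point, which is excluded from training. We must also show that the induced law of the pair of mask positions is exactly the uniform product appearing in Definition~\ref{def:stab}, including the boundary cases $k\ge n-\tau$ and negative mask indices. We will handle this by writing out the cyclic index map explicitly and checking the three regimes of Definition~\ref{def:switch-coeff} separately.
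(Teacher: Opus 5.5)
Your proposal is correct and is essentially the paper's argument. You switch $\Delta^0_{k,\tau}\to\Delta^1_{k,\tau}$ at cost $\Psi_{k,\tau}(\bZ)$, discard the $\tau$ values of $K$ whose extra block covers the evaluation point, and reindex the remaining pairs as (outer index, inner index) to invoke the hypothesis on $\mathrm M_{K_0,\tau+1}(\bZ)$. Your discarded set is exactly the paper's event $L_k\in\{n+1-\tau,\ldots,n\}$, where $L_k=[(K-k)\bmod(n+\tau+1)]-\tau$.

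The only slip is in Step 2: the post-switch outer mask is $\mathrm M_{n-k-\tau,\tau+1}(\bZ_{1:n})$, not $\mathrm M_{n-k-\tau+1,\tau}(\bZ_{1:n})$. For $k\le n-\tau$, the last entry $Z_{n-k-\tau+1}$ of $\Delta^1_{k,\tau}(\bZ)$ occupies the slot of the unused $Z_{n+1}$, so it is not a training input. With this fix, note that $L_k=-\tau$ contributes zero and that the hypothesis's terms with $K_0\ge n-\tau$ vanish because the test point is masked. The per-pair weights $\frac{1}{n(n+\tau+1)}$ then match exactly, and the bound is $\nu$.
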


\begin{proof}
As shorthand, define the set $I:=\{-\tau,\ldots,n\}$, and let \(K\sim\mathsf{Unif}(I)\) be chosen independently of \(\bZ\). For each fixed
\(k\in[n]\), define
$
L_k:=\bigl[(K-k)\ \mathrm{mod}\ (n+\tau+1)\bigr]-\tau.
$
By the definition of the deletion operation, the event
\(|s_k-s_k^{(K)}|>t\) is a measurable function of
\(\Delta^0_{k,\tau}(\bZ)\) and the independent index \(K\). By definition of total variation distance, viewing this event as a function of \((\Delta^1_{k,\tau}(\bZ), K)\) instead can inflate the probability by at most
\(\Psi_{k,\tau}(\bZ)\). Concretely, we have
\begin{equation}
\label{eq:main-switch-step}
\begin{aligned}
\mathbb P\bigl(|s_k-s_k^{(K)}|>t\bigr)
&\le\mathbb P\Biggl(\Bigl|\score\bigl(Y_{n+1},\Alg[\mathrm M_{n-k-\tau,\tau+1}(\bZ_{1:n})](\bX_{n+1})\bigr)\\
&\quad \qquad -\score\bigl(Y_{n+1}^{(L_k)},\Alg[\mathrm M_{n-k-\tau,\tau+1}(\bZ^{(L_k)}_{1:n})](\bX_{n+1}^{(L_k)})\bigr)\Bigr|>t\Biggr)+\Psi_{k,\tau}(\bZ).
\end{aligned}
\end{equation}
Now split the first term on the RHS into two parts, and note that
$
    \mathbb{P}(L_k\in\{n+1-\tau,\ldots,n\}) = \tau/(n+\tau+1).
$
Also note that if $L_k\notin\{n+1-\tau,\ldots,n\}$, then $Y^{(L_k)}_{n+1} = Y_{n + 1}$. Therefore,
\begin{equation}
\label{eq:main-test-hit-split}
\begin{aligned}
&\mathbb P\Biggl(\Bigl|\score\bigl(Y_{n+1},\Alg[\mathrm M_{n-k-\tau,\tau+1}(\bZ_{1:n})](\bX_{n+1})\bigr)
-\score\bigl(Y_{n+1}^{(L_k)},\Alg[\mathrm M_{n-k-\tau,\tau+1}(\bZ^{(L_k)}_{1:n})](\bX_{n+1}^{(L_k)})\bigr)\Bigr|>t\Biggr)\\
&\le
\mathbb P\Biggl(\Bigl|\score\bigl(Y_{n+1},\Alg[\mathrm M_{n-k-\tau,\tau+1}(\bZ_{1:n})](\bX_{n+1})\bigr)\\
&\hspace{1cm}
-\score\bigl(Y_{n+1},\Alg[\mathrm M_{n-k-\tau,\tau+1}(\bZ^{(L_k)}_{1:n})](\bX_{n+1})\bigr)\Bigr|>t,\,L_k\notin\{n+1-\tau,\ldots,n\}\Biggr)
+\frac{\tau}{n+\tau+1}.
\end{aligned}
\end{equation}
We now claim that
\begin{equation}
\label{eq:post-switch-stability}
\begin{aligned}
&\frac1n\sum_{k=1}^{n}
\mathbb P\Biggl(\Bigl|\score\bigl(Y_{n+1},\Alg[\mathrm M_{n-k-\tau,\tau+1}(\bZ_{1:n})](\bX_{n+1})\bigr)\\
&\hspace{1.5cm}
-\score\bigl(Y_{n+1},\Alg[\mathrm M_{n-k-\tau,\tau+1}(\bZ^{(L_k)}_{1:n})](\bX_{n+1})\bigr)\Bigr|>t,\,L_k\notin \{n+1-\tau,\ldots,n\}\Biggr)\le \nu.
\end{aligned}
\end{equation}
We prove this claim shortly. Taking it as given for the moment,
averaging Ineqs.~\eqref{eq:main-switch-step} and
\eqref{eq:main-test-hit-split} over \(k=1,\ldots,n\), and then applying
claim~\eqref{eq:post-switch-stability} yields
\[
\frac1n\sum_{k=1}^{n}\mathbb P\bigl(|s_k-s_k^{(K)}|>t\bigr)
\le\nu+\frac{\tau}{n+\tau+1}+\frac1n\sum_{k=1}^{n}\Psi_{k,\tau}(\bZ).
\]
Finally, by the definition of the average switch coefficient, we have
\[
\frac1n\sum_{k=1}^{n}\Psi_{k,\tau}(\bZ)\le\frac{n+\tau+1}{n} \cdot \overline\Psi_\tau(\bZ).
\]
Therefore,
\[
\frac1n\sum_{k=1}^{n}\mathbb P\bigl(|s_k-s_k^{(K)}|>t\bigr)
\le\nu+\frac{\tau}{n+\tau+1}+\frac{n+\tau+1}{n} \cdot \overline\Psi_\tau(\bZ),
\]
as desired. It remains to prove claim~\eqref{eq:post-switch-stability}.

\paragraph{Proof of claim~\eqref{eq:post-switch-stability}.}
Recall that \(L_k\sim\mathsf{Unif}(I)\) with
$I=\{-\tau\}\cup\{1-\tau,\ldots,n-\tau\}\cup\{n+1-\tau,\ldots,n\}$. Let $U\sim\mathsf{Unif}(\{1-\tau,\ldots,n-\tau\})$ be a random variable chosen independently of \(\bZ\). 
Note that for each fixed \(k \in [n]\), we have
\begin{equation}
\label{eq:Lk-to-U-sharp}
\begin{aligned}
&\mathbb P\Biggl(\Bigl|\score\bigl(Y_{n+1},\Alg[\mathrm M_{n-k-\tau,\tau+1}(\bZ_{1:n})](\bX_{n+1})\bigr)\\
&\hspace{2.6cm}
-\score\bigl(Y_{n+1},\Alg[\mathrm M_{n-k-\tau,\tau+1}(\bZ^{(L_k)}_{1:n})](\bX_{n+1})
\bigr)\Bigr|>t,\,L_k\notin\{n+1-\tau,\ldots,n\}\Biggr)\\
&\qquad =\frac{n}{n+\tau+1}\mathbb P\Biggl(\Bigl|\score\bigl(Y_{n+1},\Alg[\mathrm M_{n-k-\tau,\tau+1}(\bZ_{1:n})](\bX_{n+1})\bigr)\\
&\hspace{4.0cm}
-\score\bigl(Y_{n+1},\Alg[\mathrm M_{n-k-\tau,\tau+1}(\bZ^{(U)}_{1:n})](\bX_{n+1})\bigr)\Bigr|>t\Biggr),
\end{aligned}
\end{equation}
where we have also used that if \(L_k=-\tau\), then $\bZ^{(L_k)} = \bZ$.
We now rewrite the rescaled average of the RHS above as a conditional probability over the masking index \(K_0 \sim \mathsf{Unif}(I)\), as follows:
\begin{equation}
\begin{aligned}
&\frac{1}{n}\sum_{k=1}^{n}\mathbb{P}\Biggl(\Bigl|\score\bigl(Y_{n+1},\Alg[\mathrm M_{n-k-\tau,\tau+1}(\bZ_{1:n})](\bX_{n+1})\bigr)
-\score\bigl(Y_{n+1},\Alg[\mathrm M_{n-k-\tau,\tau+1}(\bZ^{(U)}_{1:n})](\bX_{n+1})\bigr)\Bigr|>t\Biggr)\\
&=
\mathbb{P}\Biggl(\Bigl|\score\bigl(Y_{n+1},\Alg[\mathrm M_{K_0,\tau+1}(\bZ_{1:n})](\bX_{n+1})\bigr)
\\
&\hspace{3.2cm}-
\score\bigl(Y_{n+1},\Alg[\mathrm M_{U,\tau}(\mathrm M_{K_0,\tau+1}(\bZ_{1:n}))](\bX_{n+1})\bigr)\Bigr|>t\;\Bigg|\;K_0\in\{-\tau, \ldots, n-\tau-1\}\Biggr)\\
&=\frac{n+\tau+1}{n} \cdot \mathbb{P}\Biggl(\Bigl|\score\bigl(Y_{n+1},
\Alg[\mathrm M_{K_0,\tau+1}(\bZ_{1:n})](\bX_{n+1})\bigr)\\
&\hspace{3.2cm}-
\score\bigl(Y_{n+1},\Alg[\mathrm M_{U,\tau}(\mathrm M_{K_0,\tau+1}(\bZ_{1:n}))](\bX_{n+1})\bigr)\Bigr|>t,\,K_0\in \{-\tau,\ldots,n-\tau-1 \} \Biggr).
\end{aligned}
\end{equation}
But 
\begin{equation*}
\begin{aligned}
&\mathbb{P}\Biggl(\Bigl|\score\bigl(Y_{n+1},\Alg[\mathrm M_{K_0,\tau+1}(\bZ_{1:n})](\bX_{n+1})\bigr)\\
&\hspace{2.0cm}-
\score\bigl(Y_{n+1},\Alg[\mathrm M_{U,\tau}(\mathrm M_{K_0,\tau+1}(\bZ_{1:n}))](\bX_{n+1})\bigr)\Bigr|>t,\,K_0\in\{-\tau,\ldots,n-\tau-1\}\Biggr)\\
&\le\mathbb{P}\Biggl(\Bigl|\score\bigl((\mathrm M_{K_0,\tau+1}(\mathbf{Y}))_{n+1},\Alg[(\mathrm M_{K_0,\tau+1}(\bZ))_{1:n}]((\mathrm M_{K_0,\tau+1}(\bX))_{n+1})\bigr)\\
&\hspace{2.0cm}
-\score\bigl((\mathrm M_{K_0,\tau+1}(\mathbf{Y}))_{n+1},\Alg[\mathrm M_{U,\tau}((\mathrm M_{K_0,\tau+1}(\bZ))_{1:n})]((\mathrm M_{K_0,\tau+1}(\bX))_{n+1})\bigr)\Bigr|>t\Biggr)
\overset{\1}{\le}\nu,    
\end{aligned}
\end{equation*}
where step $\1$ follows since by assumption, we have OOS stability (in the sense of Definition~\ref{def:stab}) with respect to $\mathrm{M}_{K_0, \tau+1}(\bZ)$ with parameters $(\nu, t, \tau)$.

Putting together the pieces, we have
\begin{equation}
\label{eq:oos-after-switch}
\begin{aligned}
&\frac1n\sum_{k=1}^{n}
\mathbb P\Biggl(\Bigl|\score\bigl(Y_{n+1},\Alg[\mathrm M_{n-k-\tau,\tau+1}(\bZ_{1:n})](\bX_{n+1})\bigr)\\
&\hspace{3.2cm}
-\score\bigl(Y_{n+1},\Alg[\mathrm M_{n-k-\tau,\tau+1}(\bZ^{(U)}_{1:n})](\bX_{n+1})\bigr)\Bigr|>t\Biggr)\le\frac{n+\tau+1}{n} \cdot \nu .
\end{aligned}
\end{equation}
Averaging Ineq.~\eqref{eq:Lk-to-U-sharp} over \(k\in [n]\) and using Ineq.~\eqref{eq:oos-after-switch} yields the claimed bound.
\end{proof}

\subsection{Ancillary results on quantile stability} \label{app:ancillary}
\begin{lemma} \label{lem:quantile-subset}
If $A$ and $B$ are two (unordered) sets of real numbers such that $|A| = k$, $|B| = m$ and $A \subseteq B$, then for all $a \in (0, 1)$, we have
\begin{align*}
\Quantile_{1 - a}(A) &\geq \Quantile_{(1 - a) \cdot \frac{k}{m}}\left(B\right) \\
\Quantile_{1 - a}(B) &\geq \Quantile_{ 1 - a \cdot \frac{m}{k} }\left(A\right).
\end{align*}
\end{lemma}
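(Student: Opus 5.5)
The plan is to reduce both inequalities to counting order statistics. Recall that $\Quantile_{\gamma}(v)$ for a vector of length $m$ is the order statistic $v_{(\lceil \gamma m\rceil)}$. I will use the convention $v_{(j)}=+\infty$ if $j>m$, and $v_{(j)}=-\infty$ (or simply the minimum) if $j\le 0$; this makes the statements meaningful when the adjusted level exceeds $1$ or is nonpositive. The basic fact I rely on is the following characterization: for a finite multiset $S$ and an integer $j$,
\[
S_{(j)}\le x \iff \#\{s\in S: s\le x\}\ge j .
\]
Both claims then reduce to comparing counts in $A$ and in $B$, using $A\subseteq B$ and $|B\setminus A|=m-k$.

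For the first inequality, set $x=\Quantile_{1-a}(A)=A_{(j)}$ with $j=\lceil(1-a)k\rceil$. Then $\#\{b\in B: b\le x\}\ge \#\{a'\in A: a'\le x\}\ge j$. The target index in $B$ is $\lceil (1-a)\tfrac{k}{m}\cdot m\rceil=\lceil(1-a)k\rceil=j$. Hence $B_{(j)}\le x$, which is exactly the claim. This step is immediate.

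For the second inequality, set $x=\Quantile_{1-a}(B)=B_{(i)}$ with $i=\lceil(1-a)m\rceil$. At least $i$ elements of $B$ are $\le x$. At most $m-k$ of them lie outside $A$, so
\[
\#\{a'\in A: a'\le x\}\ge i-(m-k)\ge (1-a)m-m+k=k-am .
\]
Since the count is an integer, it is at least $\lceil k-am\rceil=\lceil(1-a\tfrac{m}{k})k\rceil$, and this is precisely the index defining $\Quantile_{1-a m/k}(A)$. Therefore $A_{(\cdot)}\le x$ at that index.

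The only delicate point is when $1-a\tfrac{m}{k}\le 0$. In that case the index is nonpositive, so the right side is $-\infty$ (or the minimum of $A$) under the convention above, and the inequality holds trivially. Likewise, in the first inequality the index never exceeds $m$. I will state these boundary conventions explicitly at the outset, consistent with how the lemma is used in the proof of Proposition~\ref{prop:cyc}, where a level above $1$ yields $+\infty$.
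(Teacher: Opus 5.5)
Your counting argument is correct, and it is exactly the argument the paper compresses into ``both bounds follow immediately from the definition of the quantile function.'' Your index computations $\lceil (1-a)k\rceil$ and $\lceil k-am\rceil$ are right.

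The one thing to fix is the hedge ``(or simply the minimum)'' for nonpositive indices. Under that convention, your characterization $S_{(j)}\le x \iff \#\{s\in S: s\le x\}\ge j$ fails for $j\le 0$. Worse, the second inequality is then false. Take $A=\{10\}\subseteq B=\{0,10\}$ and $a=0.9$: you get $\Quantile_{0.1}(B)=B_{(1)}=0$, while the level $1-am/k=-0.8$ would give $\min A=10$. You must commit to $v_{(j)}=-\infty$ for $j\le 0$.
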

\begin{proof} Both bounds follow immediately from the definition of the quantile function.\end{proof}

\begin{lemma} \label{lem:quantile-set}
Let $\bU = (U_1, \ldots, U_k)$ and $\bV = (V_1, \ldots, V_k)$ denote two vectors of real-valued random variables. If there exists a bijection $\pi:[k] \to [k]$ such that
\[
\frac{1}{k} \sum_{i = 1}^k \mathbb{P} \left\{ |U_i - V_{\pi(i)}| \geq t \right\} \leq \nu,
\]
then for any $a \in [0, 1]$, we have
\[
\mathbb{P} \left\{ \Quantile_{1 - a}(\bU) \leq \Quantile_{1 - a + \sqrt{\nu}}(\bV) + t \right\} \geq 1 - \sqrt{\nu}.
\]
\end{lemma}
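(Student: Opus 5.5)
The plan is a Markov-inequality argument followed by a deterministic counting argument on order statistics. Set $\delta_i := \mathbb{1}\{|U_i - V_{\pi(i)}| \geq t\}$ and let $N := \sum_{i=1}^k \delta_i$ be the number of ``bad'' indices. By hypothesis, $\mathbb{E}[N]/k \leq \nu$. Markov's inequality then gives
\[
\mathbb{P}\{N/k \geq \sqrt{\nu}\} \leq \sqrt{\nu}.
\]
(If $\nu = 0$, use $\mathbb{P}\{N > 0\} = 0$ directly.) So it suffices to show the following deterministic claim: whenever $N < \sqrt{\nu}\, k$, we have $\Quantile_{1-a}(\bU) \leq \Quantile_{1-a+\sqrt{\nu}}(\bV) + t$. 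When $1-a+\sqrt{\nu} > 1$, we adopt the convention that the quantile is $+\infty$, and the claim is trivial.

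For the deterministic step, let $q := \Quantile_{1-a+\sqrt{\nu}}(\bV) = V_{(j)}$ with $j = \lceil (1-a+\sqrt{\nu})k \rceil$. Exactly $j$ indices $i$ (via $\pi$) satisfy $V_{\pi(i)} \leq q$. Among these, at most $N$ are bad. Each good index has $U_i < V_{\pi(i)} + t \leq q + t$, so at least $j - N$ entries of $\bU$ are $\leq q + t$.

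Since $N < \sqrt{\nu}\, k$ and $N$ is an integer, we get $j - N > (1-a+\sqrt{\nu})k - \sqrt{\nu}\, k = (1-a)k$. Hence $j - N \geq \lceil (1-a)k \rceil$ whenever $(1-a)k$ is not an integer. If $(1-a)k$ is an integer, strictness still gives $j - N \geq (1-a)k + 1 > \lceil (1-a)k \rceil$. Either way, the $\lceil (1-a)k \rceil$-th order statistic of $\bU$ is at most $q + t$, which is the claim.

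The only delicate point I expect is this ceiling/integer bookkeeping: pairing the strict Markov event $N < \sqrt{\nu}\, k$ with $j \geq (1-a+\sqrt{\nu})k$ so that the count strictly exceeds $(1-a)k$. The boundary cases $\nu = 0$ and a quantile level exceeding $1$ also need the convention above. Combining the deterministic claim with the Markov bound yields
\[
\mathbb{P}\left\{ \Quantile_{1-a}(\bU) \leq \Quantile_{1-a+\sqrt{\nu}}(\bV) + t \right\} \geq 1 - \sqrt{\nu}.
\]
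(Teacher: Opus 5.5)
Your proposal is correct and follows essentially the paper's argument: Markov's inequality on the number of indices where $|U_i - V_{\pi(i)}|$ is large, plus an order-statistic count showing that fewer than $\sqrt{\nu}k$ such indices forces $\Quantile_{1-a}(\bU) \le \Quantile_{1-a+\sqrt{\nu}}(\bV) + t$; the paper phrases this contrapositively, bounding the probability of the complementary event. Your version is more explicit about the ceiling bookkeeping and the boundary conventions ($\nu = 0$, level above $1$), and the only slip is that ties give at least, not exactly, $j$ indices with $V_{\pi(i)} \le q$, which is all you actually use.
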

\begin{proof}
We bound the probability of the complementary event. If 
\[
\Quantile_{1 - a}(\bU) > \Quantile_{1 - a + \sqrt{\nu}}(\bV) + t,
\]
then for all permutations (i.e. bijections) $\sigma: [k] \to [k]$, we must have 
\[
\#\{i: U_{i} > V_{\sigma(i)} + t\} \geq \lceil (1 - a + \sqrt{\nu}) k \rceil - (\lceil (1 - a) k \rceil - 1) \geq \sqrt{\nu} k.
\]
Indeed, if there exists a permutation such that this is not true, then it is impossible for the quantile to move by more than $t$. Thus, for the particular permutation $\pi$, we must have
\begin{align*}
\mathbb{P} \left\{ \Quantile_{1 - a}(\bU) > \Quantile_{1 - a + \sqrt{\nu}}(\bV) + t \right\} &\leq \mathbb{P}\left\{ \sum_{i = 1}^k \ind{U_{i} > V_{\pi(i)} + t} \geq \sqrt{\nu} k \right\} \\
&\overset{\1}{\leq} \frac{\sum_{i = 1}^k \mathbb{P} \left\{|U_{i} - V_{\pi(i)}| > t \right\}}{\sqrt{\nu}k} \overset{\2}{\leq} \sqrt{\nu}.
\end{align*}
Step $\1$ is due to Markov's inequality, and step $\2$ follows from the assumption on $\pi$. 
\end{proof}

\section{Additional experiments} \label{sec:add_expt}

In this section, we include a broader class of simulation and real data experiments, complementing our empirical investigations from Section~\ref{sec:experiments}.

\subsection{Simulation of Split CP, vanilla Jackknife and LWO with 1D response}
\label{App:add_exp}
Our aim in this section is to construct an extreme example (with one-dimensional responses) that highlights the worst-case behavior we might see for jackknife in the time series setting. In particular, we will construct a data distribution and a regression algorithm such that stability does hold but the coverage of jackknife is severely impacted if we do not follow the LWO modification.

Concretely,
we consider the following epoch-based Markov chain. At each time \(t\ge 1\), let
    \[
    (T_t,K_t,X_t,Y_t)\in \mathbb{N}\times \mathbb{N}\times \mathbb{R}\times \mathbb{R},
    \]
    where \(K_t\) is latent while \((X_t,Y_t)\) are observed. The chain evolves at time \(t+1\) as follows:
    \begin{itemize}
        \item If \(T_t=K_t\), a new epoch starts:
        \[
        T_{t+1}=1,
        \qquad
        K_{t+1}\sim \mathrm{Geom}(\rho),
        \qquad
        X_{t+1}\sim \mathcal{N}(0,1),
        \qquad
        Y_{t+1}\sim \mathcal{N}(K_{t+1},1).
        \]
        \item If \(T_t<K_t\), the current epoch continues:
        \[
        T_{t+1}=T_t+1,
        \qquad
        K_{t+1}=K_t,
        \qquad
        X_{t+1}=X_t,
        \qquad
        Y_{t+1}\sim \mathcal{N}(K_t,1).
        \]
\end{itemize}
Thus \(X_t\) is constant within each epoch, while \(Y_t\) has mean equal to the latent epoch length \(K_t\). Since new epoch values of \(X_t\) are drawn from a continuous distribution, two time points have the same raw covariate value almost surely only when they belong to the same epoch. We set \(\rho=0.05\) and \(n=1000\).

The predictor does not use the raw covariate \(X_t\) directly. Instead, the fitted algorithm is a composition of a data-dependent feature map and a chosen base predictor, both constructed from the training data. Specifically, given a training set $\mathcal{D}$ of covariate-response pairs,
we first learn the feature map
\[
\widehat\phi_{\mathcal D}(x)
:=
\sum_{(X_i,Y_i)\in\mathcal D}\mathbf 1\{X_i=x\}.
\]
This map counts how many training observations share the same raw covariate value as \(x\). 
We then transform each training point \((X_i,Y_i)\in\mathcal D\) into
\[
\bigl(\widehat\phi_{\mathcal D}(X_i),Y_i\bigr),
\]
and fit a base regression model \(\widehat g_{\mathcal D}\) on these transformed data points. The final predictor $\widehat{f}$ trained on \(\mathcal D\) is given by $\widehat{g}_{\mathcal{D}} \circ \widehat{\phi}_{\mathcal{D}}$. In particular, its evaluation at any point $x$ is given by 
\[
\widehat f_{\mathcal D}(x)
=
\widehat g_{\mathcal D}\!\left(\widehat\phi_{\mathcal D}(x)\right).
\]
We use no additional lagged history, i.e., \(L=0\).
Note that the same learned feature map \(\widehat\phi_{\mathcal D}\) is used to featurize any calibration or test point evaluated by this fitted model. Since both \(\widehat\phi_{\mathcal D}\) and \(\widehat g_{\mathcal D}\) depend on the training dataset $\mathcal{D}$, they are computed differently in the different predictive inference methods. For the jackknife \(\mathcal D\) can be any set of $n - 1$ data points. In split conformal, \(\mathcal D\) is the proper training split. In an LWO fit, \(\mathcal D\) is such that it excludes the local window after the proxy test point. 

Note that for this stochastic process, the epoch lengths are geometric with parameter \(\rho\), the probability that dependence persists across a gap of length \(\tau\) decays as $(1-\rho)^\tau$. Consequently, the process is geometrically \(\beta\)-mixing, with mixing coefficients
$
\beta(\tau)= O( (1-\rho)^\tau).
$

\begin{figure}
    \centering
    \includegraphics[width=0.8\linewidth]{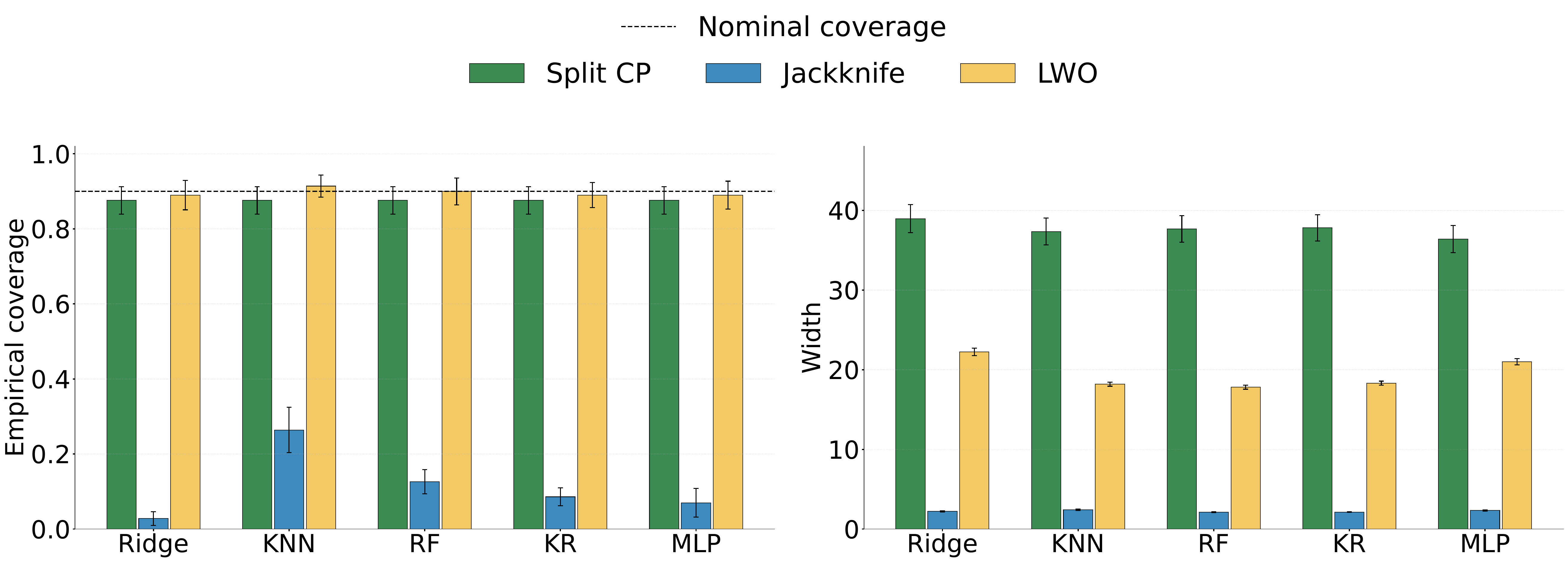}
    \caption{Empirical performance on the sticky Markov-chain process across five base predictors. 
(Left) Empirical coverage of Split CP, Jackknife, and LWO, with the dashed horizontal line marking the nominal \(90\%\) coverage level. 
(Right) Average prediction radius for the same methods and predictors. 
Bars show means across \(500\) repeated trials, and error bars indicate \(\pm 1\) standard error.}
    \label{fig:markov_results}
\end{figure}

\paragraph{Evaluation}
Figure~\ref{fig:markov_results} shows that the sticky Markov chain is particularly challenging for leave-one-out methods. The vanilla jackknife produces much narrower intervals, but these intervals are overly optimistic and lead to severe undercoverage for most predictors. The failure mechanism comes from the feature construction. Recall that \(\widehat\phi_{\mathcal D}(x)\) counts how many training observations have the same covariate value as \(x\), which, in this process, is essentially the number of observed points from the same latent epoch. For interior proxy points used by the jackknife, the surrounding training data typically contain almost the entire epoch, both before and after the proxy point. Thus \(\widehat\phi_{\mathcal D}(X_k)\) is a good proxy for the latent epoch length \(K_k\), making these jackknife proxy points relatively easy to predict and producing small calibration scores.

The final test point is different. At the end of the observed sequence, the current epoch is right-censored: we only observe the part of the epoch that has occurred so far, and do not observe future points from the same epoch. Consequently, \(\widehat\phi_{\mathcal D}(X_{n+1})\) tends to underestimate the true epoch length. Since \(Y_{n+1}\) has mean tied to the latent epoch length, this makes the fitted predictor systematically less accurate at the actual test point than at the interior jackknife proxy points. This mismatch causes the jackknife calibration scores to be too small and leads to undercoverage.

LWO mitigates this asymmetry by removing a local window after each proxy point. This censors the proxy point's epoch in the same way the final test epoch is censored, preventing the fitted model from using nearby future observations that would not be available at test time. As a result, the LWO calibration scores reflect the difficulty of predicting the final test point. 

\subsection{Comparison with other predictive inference methods}

Our aim in this section is to compare LWO against a broader range of methods, focusing in particular on (1) cross-validation methods with various choices of the number of folds, and (2) variants of conformal prediction that have been proposed for handling dependence in the time series setting.

Concretely, we compare LWO, split CP and Jackknife with the following additional predictive inference procedures with the same base predictors used in Section~\ref{sec:experiments}. For vector-valued responses, we use the \(\ell_2\)-residual score and for scalar responses we use the absolute residual score.

\begin{itemize}
    \item \textbf{Split Localized CP (SLCP):} a localized variant of split CP that estimates local residual quantiles through kernel localization~\citep{han2022splitlocalized}. 
    
    \item \textbf{Block-Randomization CP (Block CP):} a split version of block-randomization conformal method from the block permutation framework of~\citet{chernozhukov2018exact}. 
    We use a block size of $10$ for our simulation experiments and a block size of $3$ for our real data experiments.

    \item \textbf{Multi-Split CP (Multi-split CP):} an aggregation of split conformal prediction sets over many random train/calibration splits~\citep{solari2022multisplit}. A candidate response is included if it belongs to a sufficiently large fraction of the split conformal sets.

    \item \textbf{Kernel-based Optimally Weighted CP Intervals (KOWCPI):} a time-series conformal method based on reweighted Nadaraya--Watson quantile estimation of future nonconformity scores~\citep{lee2025kernel}. 

    \item \textbf{Distributional CP (DCP):} a method that estimates the conditional distribution of the response and conformalizes probability-integral-transform ranks~\citep{chernozhukov2021distributional}. In our implementation, DCP fits a Quantile Random Forest estimator~\citep{meinshausen2006quantile} with \(200\) trees and minimum leaf size \(2\). Note that this method does not depend on the base predictor. 

    \item \textbf{Conformalized Quantile Regression (CQR):} a method that first estimates lower and upper conditional quantiles and then conformalizes the resulting interval~\citep{romano2019conformalized}. In our implementation, CQR fits a Quantile Random Forest estimator with \(200\) trees and minimum leaf size \(2\). Like DCP, it does not depend on the base predictor.

    \item \textbf{\(K\)-fold Cross-Validation CP (CV):} the \(K\)-fold analogue of the vanilla jackknife, using out-of-fold residuals to calibrate a prediction set centered at the full-sample predictor~\citep{barber2021predictive}. We consider \(K\in\{5,30,100\}\): smaller \(K\) leaves out larger folds and therefore behaves more like split CP, while larger \(K\) leaves out smaller folds; we obtain the vanilla jackknife if $K = n-1$.
\end{itemize}

\subsubsection{Multidimensional MA($q$) process}
As in the setup of Section~\ref{sec:motivating}, we generate our covariates as a sequence of random vectors \(X_t\in\mathbb R^d\). In this experiment, however, we use a moving-average process with memory $q$. Specifically, in dimension \(d\), let
\[
\omega_t \stackrel{\mathrm{i.i.d.}}{\sim} \mathcal N(0,I_d),
\]
and define
\[
X_t
=
\sum_{j=0}^{q}\omega_{t-j}.
\]
The response at time \(t\) is given by
\[
Y_t=X_{t+1}.
\]
We choose \(d=50\), \(q=10\) and consider the same memoryless setting with \(\bX_t=X_t\). Thus, the prediction problem is to use the current observation \(X_t\) to forecast the next vector \(X_{t+1}\). As in the previous experiment, we set the sequence length \(n=200\).
Compared with the MA($1$) process in Section~\ref{sec:motivating}, this MA($10$) construction has a longer local dependence window. 

\begin{figure}
    \centering
    \includegraphics[width=\linewidth]{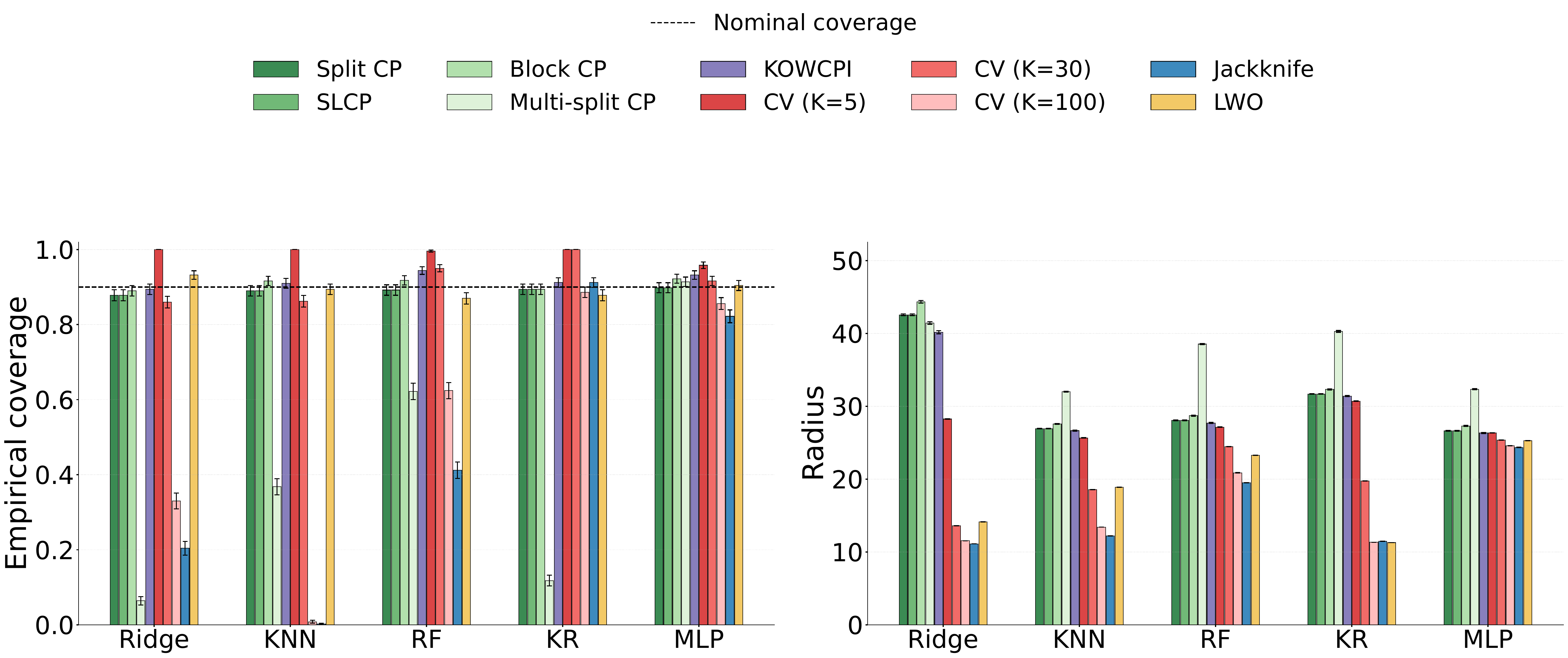}
\caption{Empirical performance on the multidimensional MA(10) process across five base predictors. 
(Left) Empirical coverage of different CP methods with the dashed horizontal line marking the nominal \(90\%\) coverage level. 
(Right) Average prediction region radius for the same methods and predictors. 
Bars show means across $500$ repeated trials, and error bars indicate \(\pm 1\) standard error.}
    \label{fig:ma10_add_results}
\end{figure}

Figure~\ref{fig:ma10_add_results} compares the methods on the multidimensional MA($10$) process. The main pattern is that methods in the same family tend to behave similarly.
Most of the methods based on sample-splitting---in particular Split CP, SLCP, and Block CP---are generally robust in coverage, with Block CP typically being somewhat more conservative because it calibrates through a block-randomization correction. On the other hand, Multi-Split CP exhibits loss of coverage for most predictors.

The CV family illustrates the interpolation between split conformal and leave-one-out methods. For small \(K\), each fold removes a large block of the training data, so the out-of-fold residuals are relatively large and the resulting intervals can be conservative. As \(K\) increases, the folds become smaller, the intervals shrink, and the method behaves increasingly like the jackknife. In the MA($10$) process, this transition is visible: the large-\(K\) CV variant inherits much of the undercoverage of the vanilla jackknife, because leaving out only a small fold does not sufficiently break the local dependence.

The KOWCPI method is often competitive because it estimates future score quantiles from residual histories, but its performance is not as uniform across predictors. LWO maintains near-nominal coverage while remaining substantially more efficient than the more conservative valid baselines in most cases.

\subsubsection{Real data}

\begin{figure}[ht!]
\centering

\includegraphics[width=\linewidth]{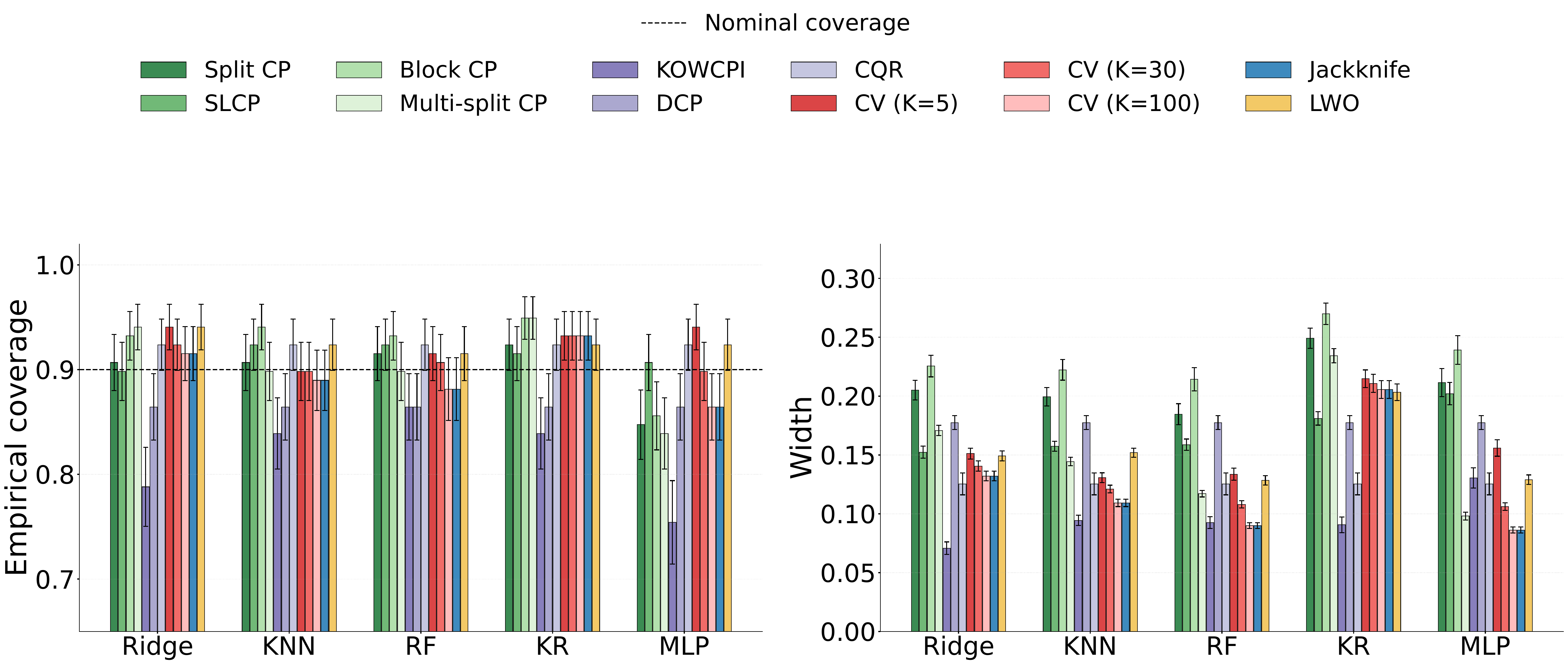}

\vspace{0.8em}

\includegraphics[width=\linewidth]{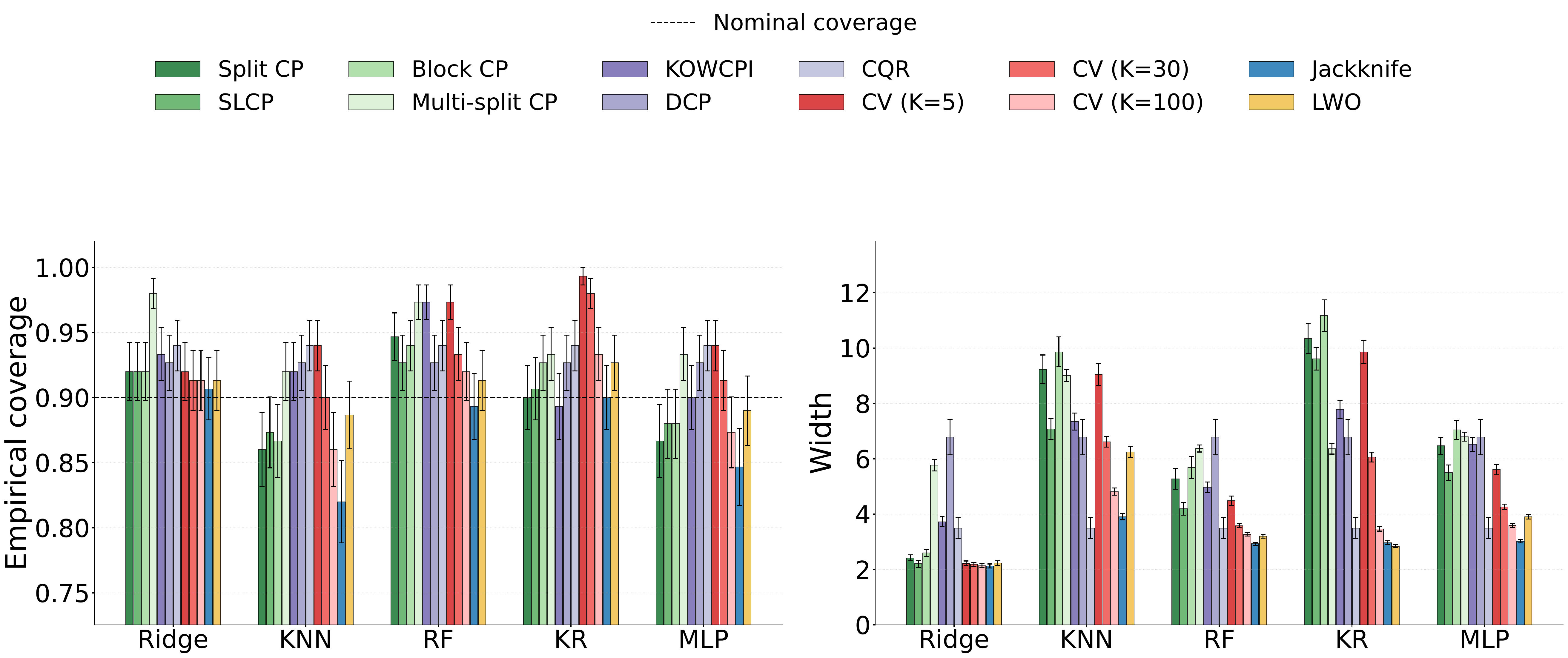}

\caption{Empirical performance on two real data benchmarks. (Top) Traffic dataset. (Bottom) Solar Energy dataset. In each row, the left panel reports empirical coverage, and the right panel reports average prediction-set size across base predictors. Bars show means across repeated trials, and error bars indicate \(\pm 1\) standard error. The dashed horizontal line marks nominal \(90\%\) coverage.}
\label{fig:real_data_add_results}
\end{figure}

Figure~\ref{fig:real_data_add_results} reports the same comparison on the Traffic and Solar Energy datasets. The real-data results again show a clear separation between method families. Methods in the split family are usually robust in coverage, but they differ noticeably in efficiency. 
SLCP often reduces the width relative to Split CP by using local residual information, while Block CP tends to be more conservative because it calibrates through a block-randomization correction. Multi-split CP can also be conservative because  each individual split set is made more conservative, and the final set keeps only points appearing in enough splits.

Quantile regression based methods (in purple) behave differently. KOWCPI can produce short intervals because it adapts to recent nonconformity-score history, but this adaptivity can also lead to undercoverage when the residual-history model is too local or poorly matched to the data. DCP and CQR fit their own conditional distribution estimators. DCP is generally conservative on the real datasets, while CQR is also stable in coverage but can be inefficient when the fitted quantile bands are wide. These methods therefore serve as adaptive distribution-estimation baselines, but their efficiency depends heavily on the quality and stability of the underlying distribution or quantile estimates.

The CV family again shows the expected interpolation. Smaller \(K\) behaves more like split CP because each out-of-fold fit leaves out a large fold, producing larger residuals and more conservative intervals. Larger \(K\) approaches the jackknife and therefore inherits more leave-one-out behavior: intervals shrink, but the method can become less robust when local dependence makes the leave-out residuals optimistic. This trend is especially visible when comparing the \(K=5\), \(K=30\), and \(K=100\) variants across predictors.

Finally, and as in the previous set of experiments, LWO maintains a good tradeoff between coverage and efficiency even when compared with this broader class of predictive inference methods.
\end{document}